\newcommand{\X}{\ensuremath{\mathbf{X}}}
\newcommand{\I}{\ensuremath{\mathbf{I}}}
\newcommand{\0}{\ensuremath{\mathbf{0}}}
\newcommand{\HH}{\ensuremath{\mathbf{H}}}
\newcommand{\x}{\ensuremath{\mathbf{x}}}
\newcommand{\y}{\ensuremath{\mathbf{y}}}
\newcommand{\z}{\ensuremath{\mathbf{z}}}
\newcommand{\Y}{\ensuremath{\mathbf{Y}}}
\newcommand{\Z}{\ensuremath{\mathbf{Z}}}
\newcommand{\btheta}{\ensuremath{\boldsymbol{\theta}}}
\newcommand{\bxi}{\ensuremath{\boldsymbol{\xi}}}
\newcommand{\f}{\ensuremath{\mathbf{f}}}
\newcommand{\bbR}{\ensuremath{\mathbb{R}}}
\newcommand{\norm}[1]{\left\lVert#1\right\rVert}
\newcommand{\abs}[1]{\left\lvert#1\right\rvert}
\newtheorem{theorem}{Theorem}
\DeclareMathOperator*{\argmin}{argmin}
\icmltitlerunning{BCD Proximal Method for Simultaneous Filtering and Parameter Estimation}
\begin{document}

\twocolumn[
\icmltitle{A Block Coordinate Descent Proximal Method \\for Simultaneous Filtering and Parameter Estimation}



\icmlsetsymbol{equal}{*}

\begin{icmlauthorlist}
\icmlauthor{Ramin Raziperchikolaei}{ra,ucmcs}
\icmlauthor{Harish~S. Bhat}{uu,ucmam}
\end{icmlauthorlist}

\icmlaffiliation{ra}{Rakuten Institute of Technology, San Mateo, CA, USA}
\icmlaffiliation{ucmcs}{Department of Computer Science, University of California, Merced, USA}
\icmlaffiliation{uu}{Department of Mathematics, University of Utah, USA}
\icmlaffiliation{ucmam}{Department of Applied Mathematics, University of California, Merced, USA}

\icmlcorrespondingauthor{Ramin Raziperchikolaei}{ramin.raziperchikola@rakuten.com}
\icmlcorrespondingauthor{Harish~S. Bhat}{hbhat@ucmerced.edu}

\icmlkeywords{Machine Learning, ICML}

\vskip 0.3in
]



\printAffiliationsAndNotice{}  

\begin{abstract}
We propose and analyze a block coordinate descent proximal algorithm (BCD-prox) for simultaneous filtering and parameter estimation of ODE models.  As we show on ODE systems with up to $d=40$ dimensions, as compared to state-of-the-art methods, BCD-prox exhibits increased robustness (to noise, parameter initialization, and hyperparameters), decreased training times, and improved accuracy of both filtered states and estimated parameters.  We show how BCD-prox can be used with multistep numerical discretizations, and we establish convergence of BCD-prox under hypotheses that include real systems of interest.
\end{abstract}

\section{Introduction}
\label{s:intro}
Though ordinary differential equations (ODE) are used extensively in science and engineering, the task of learning ODE states and parameters from data still presents challenges.  This is especially true for nonlinear ODE that do not have analytical solutions.  For such problems, several published and widely used methods---including Bayesian, spline-based, and extended Kalman filter methods---work well with data with a high signal-to-noise ratio.  As the magnitude of noise increases, these methods break down, leading to unreliable estimates of states and parameters.  Problem domains such as biology commonly feature both nonlinear ODE models and highly noisy observations, motivating the present work.

Motivated by recent advances in alternating minimization \cite{ChatterjiB17, LiLR16, YiCS14}, block coordinate descent (BCD) \cite{XuYin2013, ZhangB17}, and proximal methods \cite{ParikhBoyd2014, SunLXB15}, we study a BCD proximal algorithm (BCD-prox) to solve the simultaneous filtering and parameter estimation problem.  Here filtering means recovering clean ODE states from noisy observations.  BCD-prox works by minimizing a unified objective function that directly measures how well the states and parameters satisfy the ODE system, in contrast to other methods that use separate objectives.  BCD-prox learns the states directly in the original space, instead of learning them indirectly by fitting a smoothed function to the observations.  Under hypotheses that include systems of real interest, BCD-prox is provably convergent.  In comparison with other methods, BCD-prox is more robust with respect to noise, parameter initialization, and hyperparameters.  BCD-prox is also easy to implement and runs quickly.

There have been several different approaches to the filtering and estimation problem. Nonlinear least squares methods start with an initial guess for the parameters that is iteratively updated to bring the model's predictions close to measurements \cite{Himmelbl67,Bard73,Benson79,Hosten79}. These methods diverge when the initial parameters are far from the true parameters.

Of more recent interest are spline-based methods, in which filtered, clean states are computed via (cubic) splines fit to noisy data.  As splines are differentiable, parameter estimation then reduces to a regression problem \cite{Varah82,Poyton06,Ramsay07,Cao11,Cao08}. Estimators other than splines, such as smoothing kernels and local polynomials, are also used \cite{Liang08,Dattner15,Gugushvi12}. These methods are sensitive to numerous hyperparameters (such as smoothing parameters and the numbers/positions of knots), to parameter initialization, and to the magnitude/type of noise that contaminates the data.

Bayesian approaches \cite{Girolami08,Dondelin13,Calderhe09,Gorbach17} must set hyperparameters (prior distributions, variances, kernel widths, etc.) very carefully to produce reasonable results.  Bayesian methods also feature large training times.  Another disadvantage of these methods, mentioned by \citet{Gorbach17}, is that they cannot simultaneously learn clean states and parameters. \citet{Gorbach17} uses a variational inference approach to overcome this problem, but the method is not applicable to all ODE.

BCD-prox learns parameters and states jointly, but it does not fit a smooth function to the observations.  Via this approach, BCD-prox reduces the number of hyperparameters to one.  BCD-prox avoids assumptions (i.e., spline or other smooth estimator) regarding the shape of the filtered states.  Furthermore, both the BCD and proximal components of the algorithm enable it to step slowly away from a poor initial choice of parameters.   In this way, BCD-prox remedies the problems of other methods.

Many other well-known nonlinear ODE filtering methods, including extended and ensemble Kalman filters as well as particle filters, are online methods that make Gaussian assumptions.  In contrast, BCD-prox is a distribution-free, batch method.

The rest of the paper is organized as follows. In Section \ref{s:prob} we define both the problem and the BCD-prox algorithm.  In Section \ref{s:conceptual}, we compare BCD-prox at a conceptual level against a competing method from the literature.  We discuss the convergence of BCD-prox in Section \ref{s:convergence}.  We show the advantages of BCD-prox with several experiments in Section \ref{s:exp}.  Further experiments and details are given in the supplementary material.

\section{Problem and Proposed Solution}
\label{s:prob}

Consider a dynamical system in $\bbR^d$, depending on a parameter $\btheta \in \bbR^p$, with state $\x(t)$ at time $t$:
\begin{equation}
	\label{eq:ode}
	\dot\x(t) = \frac{d\x(t)}{dt} = \f(\x(t),\btheta).
\end{equation}
At $T$ distinct times $\{t_i\}_{i=1}^T$, we have noisy observations $\y(t_i) \in \bbR^d$:
\begin{equation}
	\label{eq:noise}
	\y(t_i) = \x(t_i) + \z(t_i), \quad i=1,\dots,T
\end{equation}
where $\z(t_i) \in \bbR^d$ is the noise of the observation at time $t_i$. We represent the set of $T$ $d$-dimensional states, noises, and observations by $\X, \Z$, and $\Y \in \bbR^{d \times T}$, respectively. For concision, in what follows, we write the time $t_i$ as a subscript, i.e., $\x_{(t_i)}$ instead of $\x(t_i)$.

In this paper, we assume that the form of the vector field $\f(\cdot)$ is known. The simultaneous parameter estimation and filtering problem is to use $\Y$ to estimate $\btheta$ and $\X$. Examples of $\f(\cdot)$ and $\btheta$ can be found in Section \ref{s:exp}.

For ease of exposition, we first describe a BCD-prox algorithm based on the explicit Euler discretization of \eqref{eq:ode}.  Later, we will describe how to incorporate higher-order multistep methods into BCD-prox.  The explicit Euler method discretizes the ODE \eqref{eq:ode} for the $T$ time points as follows:
\begin{equation}
	\label{eq:ode_em}
	\x_{(t_{i+1})} - \x_{(t_i)}  = \f(\x_{(t_i)},\btheta)\Delta_{i}, \quad i=1,...,T-1
\end{equation}
where $\Delta_i = t_{i+1} - t_i$. In \eqref{eq:ode_em}, both states $\X$ and parameters $\btheta$ are unknown; we are given only the noisy observations $\Y$.  With this discretization, let us define
\begin{equation}
\label{eq:objx}
	E(\X,\btheta) = { \sum_{i=1}^{T-1}{ {\norm{\x_{(t_{i+1})} - \x_{(t_{i})}  - \f(\x_{(t_i)},\btheta)\Delta_{i}}}^2} }
\end{equation}
Note that $E$ measures the time-discretized mismatch between the left- and right-hand sides of \eqref{eq:ode}.  We refer to $E$ as fidelity, the degree to which the estimated states $\X$ and parameters $\btheta$ actually satisfy the ODE.  Let us now envision a sequence of iterates $\{ \X^{\ast(n)}, \btheta^{\ast(n)} \}_{n \geq 0}$.  For $n \geq 1$, we define the Euler BCD-prox objective function:
\begin{equation}
\label{eq:actualeulerobj}
F_n^\text{Euler}(\X, \btheta) = E(\X, \btheta) + \lambda \norm{\X - \X^{*(n-1)}}^2.
\end{equation}
\emph{We can now succinctly describe the Euler version of BCD-prox as block coordinate descent (first on $\btheta$, then on $\X$) applied to \eqref{eq:actualeulerobj}, initialized with the noisy data via $\X^{\ast(0)} = \Y$, and repeated iteratively until convergence criteria are met.}. 

The Euler method is a first-order method.  To understand this, let $\Delta_i = h$ (independent of $i$) and $t_N = N h$. Then the global error between the numerical and true solution of the ODE \eqref{eq:ode}, $\norm{ \x^\text{numerical}(t_N) - \x^\text{true}(t_N) }$, is $O(h)$.  If we seek a more accurate discretization, we can apply a multistep method.  The idea of multistep ($m$-step) methods is to use the previous $m$ states to predict the next state, yielding a method with $O(h^m)$ global error.  Let us consider the general formulation of the explicit linear $m$-step method to discretize \eqref{eq:ode}:
\begin{equation}
	\label{eq:mstep}
	\x_{(t_{i+1})} = \sum_{j=0}^{m-1}a_j \x_{(t_{i-j})} + \Delta_i \sum_{j=0}^{m-1}b_j \f(\x_{(i-j)},\btheta),
\end{equation}
where $\Delta_i$ is the time step. There are several strategies to determine the coefficients $\{a_j\}_{j=0}^{m-1}$ and $\{b_j\}_{j=0}^{m-1}$. For example, over the interval $\Delta_i$, the Adams-Bashforth method approximates $\f(\cdot)$ with a polynomial of order $m$; this leads to a method with $O(h^m)$ global error.  When $m=1$, this method reduces to the explicit Euler method considered above.  For further information on multistep methods, consult \citet{Iserles2009, Palais09}.

Note that to use $m$-step methods to predict the state at time $i$, we need its previous $m$ states. To predict the states $\{\x_i\}_{i=2}^{m}$ (the first few states), the maximum order we can use is $i-1$, because there are only $i-1$ states before the state $\x_i$. In general, to predict $\x_i$ we use a multistep method of order $\min(i-1,m)$.

When using a general $m$-step discretization method, we define our objective function as follows:
\begin{multline}
	\label{eq:mstepobj}
	E_{\text{m-step}}(\X,\btheta) = \sum_{i=1}^{T-1} \biggl\| \x_{(t_{i+1})} - \textstyle \sum_{j=0}^{k-1}a_j \x_{(t_{i-j})} \\ - \Delta_i \textstyle \sum_{j=0}^{k-1}b_j \f(\x_{(i-j)},\btheta) \biggr\| ^2 ,
\end{multline}
where $k = \min(i-1,m)$ is the order of the discretization method to predict the state $\x_i$.  We can then reformulate the BCD-prox objective as
\begin{equation}
\label{eq:actualobj}
F_n(\X, \btheta) = E_{\text{m-step}}(\X, \btheta) + \lambda \norm{\X - \X^{*(n-1)}}^2.
\end{equation}
We now regard \eqref{eq:actualeulerobj} as a special case of \eqref{eq:actualobj} for $m=1$, i.e., in the case where the $m$-step method reduces to Euler.  With these definitions, \emph{BCD-prox is block coordinate descent (first on $\btheta$, then on $\X$) applied to $F_n(\X, \btheta)$, initialized with the noisy data via $\X^{\ast(0)} = \Y$, and repeated iteratively until convergence criteria are met.}.  We detail this algorithm in Alg. \ref{alg}.
\begin{algorithm}[t]
	\caption{Pseudo-code of our proposed method} \label{alg}
	\textbf{Input:} Noisy observations $\Y=[\y_{(t_1)},\dots,\y_{(t_T)}] \in \bbR^{d \times T}$, time differences $\{\Delta_i = t_{i+1} - t_i\}_{i=1}^{T-1}$, form of $\f(\cdot)$ in Eq. \eqref{eq:ode}, hyperparameter $\lambda$, initial guess $\btheta^{*(0)}$, and order $m$ of the $m$-step method.
	\begin{algorithmic}[1]
		\vspace{1ex}
		\STATE $\X^{*(0)}=\Y$
		\STATE $n = 0$
		\REPEAT
		\STATE $n = n + 1$
		\STATE \textbullet\ Compute $\btheta^{*(n)} = \argmin_{ \btheta } F_n(\X^{*(n-1)}, \btheta)$.	
		\STATE \textbullet\ Compute $\X^{*(n)} = \argmin_{\X} F_n(\X, \btheta^{*(n)})$.
		\UNTIL{convergence}
		\STATE Compute predicted states $\hat{\X}$ by repeatedly applying Eq. \eqref{eq:mstep}, where $\btheta = \btheta^{*(n)}$ and $\x_{(t_1)} = \x^{*(n)}_{(t_1)}$.
		\STATE \textbf{return} $\btheta^{*(n)}$ and $\hat{\X}$ as the estimated parameters and predicted states.
	\end{algorithmic}
\end{algorithm}

\section{Conceptual Comparison with iPDA}
\label{s:conceptual}

Though BCD-prox may seem straightforward, we cannot find prior work that utilizes precisely this approach.  Since of the closest relatives is the successful iPDA (iterated principal differential analysis) method \cite{Poyton06,Ramsay07}, we explain iPDA and offer a conceptual comparison between iPDA and BCD-prox.  In iPDA, the parameter estimation error is defined as
\begin{equation}
\label{eq:ipdamismatch}
E_\text{cont}(\x_{(t)}, \btheta) = \int \norm{\frac{d\x_{(t)}}{dt} - \f(\x_{(t)},\btheta)}^2 dt,
\end{equation}
which we can regard as the continuous-time ($\Delta_i \to 0$) limit of either \eqref{eq:objx} or \eqref{eq:mstepobj}, our mismatch/fidelity terms.  The iPDA objective function is then 
\begin{equation}
\label{eq:ipdaobj}
J(\x_{(t)}, \btheta) = E_\text{cont}(\x_{(t)}, \btheta) + \lambda \norm{ \X - \Y}^2,
\end{equation}
the sum of the parameter estimation error with a regularization term.  Initialized with $\btheta^{(0)}$, the iPDA method  proceeds by iterating over the following two minimization steps:
\begin{enumerate}
	\item Set $\x_{(t)}^{(n)} = \argmin_{\x_{(t)}} J(\x_{(t)}, \btheta^{(n-1)})$. In this step, $\x_{(t)}$ is constrained to be a smooth spline.
	\item Set $\btheta^{(n)} = \argmin_{\btheta} J(\x_{(t)}^{(n)}, \btheta)$.  Note that the optimization only includes the parameter estimation term since the regularization term does not depend on $\btheta$.
\end{enumerate}
The main issue with the objective in \eqref{eq:ipdaobj} is the regularization term. This term determines how far the clean states are going to be from the noisy observations. If we set $\lambda$ to a large value, then $\x_{(t)}$ remains close to the data $\y_{(t)}$, potentially causing a large parameter estimation error. If we set $\lambda$ to a small value, then $\x_{(t)}$ might wander far from the observed data. It is a challenging task to set $\lambda$ to the right value for two reasons: 1) the optimal $\lambda$ depends on both the noise $\Z$ and the vector field $\mathbf{f}$, and 2) in a real problem, we do not have access to the clean states $\X$ (all we have are the noisy observations $\Y$), so we cannot find the right $\lambda$ by cross-validation. We return to this point below.

Before continuing, it is worth pointing out a crucial fact regarding all the mismatch/fidelity objectives $E$ that we have seen thus far.
\begin{theorem}
	\label{th:infsol}
	The objective functions $E$ defined in \eqref{eq:objx}, \eqref{eq:mstepobj}, and \eqref{eq:ipdamismatch} all have an infinite number of zeros, i.e., an infinite number of global minima that result in $E=0$.
\end{theorem}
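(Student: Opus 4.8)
The plan is to exhibit, for each of the three fidelity functionals, an explicit infinite family of $(\X,\btheta)$ (resp. $(\x_{(t)},\btheta)$) that drive $E$ to zero, and then argue that these families are genuinely infinite (indeed a continuum). The unifying observation is that all three functionals measure a mismatch that vanishes \emph{exactly} when the discrete (or continuous) trajectory is an honest solution of the ODE $\dot\x = \f(\x,\btheta)$. So zeros of $E$ are precisely (discretized) ODE trajectories, and an ODE has a whole family of them parametrized by initial conditions (and, trivially, by $\btheta$).

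First I would handle $E_{\text{cont}}$ in \eqref{eq:ipdamismatch}, which is the cleanest case. Fix any $\btheta$ and any initial condition $\x_0 \in \bbR^d$; let $\x_{(t)}$ be the corresponding solution of \eqref{eq:ode} (assuming $\f(\cdot,\btheta)$ is, say, locally Lipschitz so that a solution exists on the time interval of interest). Then the integrand $\norm{\dot\x_{(t)} - \f(\x_{(t)},\btheta)}^2$ is identically zero, hence $E_{\text{cont}}(\x_{(t)},\btheta) = 0$. Since $\x_0$ ranges over $\bbR^d$ (and distinct initial conditions give distinct trajectories, by uniqueness), this already gives an infinite — in fact uncountable — set of global minimizers. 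Next, for the Euler fidelity $E$ in \eqref{eq:objx}: pick any $\btheta$ and any $\x_{(t_1)} \in \bbR^d$, and \emph{define} the remaining states recursively by the Euler update $\x_{(t_{i+1})} := \x_{(t_i)} + \f(\x_{(t_i)},\btheta)\Delta_i$ for $i=1,\dots,T-1$. By construction every summand in \eqref{eq:objx} is zero, so $E(\X,\btheta)=0$; and again $\x_{(t_1)}$ ranges over $\bbR^d$, giving a continuum of zeros. The $m$-step case \eqref{eq:mstepobj} is identical in spirit: choose $\btheta$ and the first state $\x_{(t_1)}$ freely, then generate $\x_{(t_2)},\x_{(t_3)},\dots$ successively using the order-$\min(i-1,m)$ update rule that appears inside the $i$-th norm (this is exactly the recursion used to build $\hat\X$ in line 8 of Alg.~\ref{alg}); each term then vanishes by construction, so $E_{\text{m-step}}(\X,\btheta)=0$, and the freedom in $\x_{(t_1)}$ again yields infinitely many zeros.

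Finally I would note that $E\ge 0$ always (it is a sum of squared norms, resp. an integral of a nonnegative function), so any point with $E=0$ is a global minimum, completing the claim that each $E$ has infinitely many global minima. I do not expect a serious obstacle here; the only mild care needed is an existence hypothesis for the continuous case (local Lipschitz or at least continuity of $\f(\cdot,\btheta)$, which holds for the systems considered in the paper), and, if one wants the distinct-trajectories argument rather than merely distinct-$\btheta$, a uniqueness hypothesis — but even without uniqueness the trivial freedom in $\btheta$ suffices to make the zero set infinite. The conceptual point to emphasize in the writeup is that this degeneracy is exactly why a regularization/proximal term is needed: the fidelity alone cannot pin down the state trajectory, and the term $\lambda\norm{\X - \X^{*(n-1)}}^2$ (rather than $\lambda\norm{\X-\Y}^2$ as in iPDA) is what selects, across BCD iterations, the trajectory consistent with the data.
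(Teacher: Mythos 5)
Your proposal is correct and follows essentially the same route as the paper's own proof: fix $\btheta$ and $\x_{(t_1)}$ arbitrarily, generate the remaining states by stepping forward with the discretization (or invoke existence/uniqueness of ODE solutions in the continuous case), observe each term vanishes, and conclude from $E \geq 0$ and the arbitrariness of $(\btheta, \x_{(t_1)})$ that there are infinitely many global minima. The only differences are cosmetic (you treat the continuous case first and add the remark on why regularization is needed, which the paper makes separately in the surrounding text).
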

\begin{proof}
	Assign arbitrary real vectors to $\btheta$ and the initial condition $\x_{(t_1)}$. Note that \eqref{eq:objx} is the special case of \eqref{eq:mstepobj} for $m=1$ so we need only discuss \eqref{eq:mstepobj}. Starting from $\x_{(t_1)}$, step forward in time via \eqref{eq:mstep}.  By computing the states $\x_{(t_2)}, \ldots, \x_{(t_T)}$ in this way, we ensure that each term in \eqref{eq:mstepobj} vanishes.
	For the continuous $E$ function \eqref{eq:ipdamismatch}, we use the existence/uniqueness theorem for ODE to posit a unique solution $\x_{(t)}$ passing through $\x_{(t_1)}$ at time $t=t_1$.  By definition of a solution of an ODE, this will ensure that \eqref{eq:ipdamismatch} vanishes.
	Because, in all cases, $E \geq 0$, we achieve a global minimum.  Because $\btheta$ and $\x_{(t_1)}$ are arbitrary, an infinite number of such minima exist.
\end{proof}
\emph{Note that BCD-prox always produces a (state,parameter) pair that results in $E=0$ for \eqref{eq:mstepobj}.}  In fact, step 8 of Alg. \ref{alg} uses the idea from the proof of Theorem \eqref{th:infsol} to generate a sequence of predicted states $\hat{\X}$ such that $E(\hat{\X}, \btheta^{*(n)}) = 0$.

Let us reconsider Step 6 in Alg. \ref{alg}:
\begin{equation}
\label{eq:xstep}
\X^{*(n)} = \argmin_{\X} F_n(\X, \btheta^{*(n)}).
\end{equation}
By definition of $F_n$ and using the notion of proximal operators \citep{ParikhBoyd2014}, we can write
$$
\X^{*(n)} = \operatorname{prox}_{(2 \lambda)^{-1} E}( \X^{*(n-1)} ),
$$
with the understanding here and in what follows that $\btheta$ is fixed at $\btheta^{*(n)}$.  In general for $\Delta_i > 0$ and arbitrary $\mathbf{f}$, $E$ defined in \eqref{eq:mstepobj} will not be convex.  In this case, we view the proximal operator above as a set-valued operator as in \citep{LiICML17}; any element of the set will do.  Conceptually, this proximal step approximates a gradient descent step:
\begin{multline}
\label{eq:proxconcept}
\X^{*(n)} = \X^{*(n-1)}  \\ - (2 \lambda)^{-1} \nabla_{\X} E(\X^{*(n-1)}, \btheta^{*(n)}) + o((2 \lambda)^{-1}).
\end{multline}
It is now clear that $\lambda$ plays the role of an inverse step size---our experiments later will confirm that there is little harm in choosing $\lambda$ too large.  With this in mind, we can now contrast BCD-prox with iPDA.  In BCD-prox, we use the data $\Y$ to initialize the algorithm; subsequently, the algorithm may take many proximal steps of the form \eqref{eq:proxconcept} to reach a desired optimum.  If the data $\Y$ is heavily contaminated with noise, it may be wise to move far away from $\Y$ as we iterate.

In contrast, iPDA's regularization term is $\lambda \| \X - \Y \|^2$.  Roughly speaking, iPDA searches for $\X$ in a neighborhood of $\Y$; the diameter of this neighborhood is inversely related to $\lambda$.  When the magnitude of the noise $\Z$ is small, searching for $\X$ in a small neighborhood of $\Y$ is reasonable.  For real data problems in which the magnitude of $\Z$ is unknown, however, choosing $\lambda$ \emph{a priori} becomes difficult.

An additional important difference between BCD-prox and iPDA has to do with convexity, which we discuss next.

\section{Convergence}
\label{s:convergence}
In practice, we implement the $\argmin$ steps in Alg.\ref{alg} using the LBFGS algorithm, implemented in Python via scipy.optimize.minimize.  Throughout this work, when using LBFGS, we use automatic differentiation to supply the optimizer with gradients of the objective function.  We stop Alg.\ref{alg} when the error $E$ changes less than $10^{-8}$ from one iteration to the next.

To see when this happens, we take another look at the optimization over the states $\X$ in \eqref{eq:xstep} at iteration $n$. This objective function $F_n$ has two parts. The optimal solution of the first part ($E$) is the predicted states $\hat{\X}^{(n)}$. The optimal solution of the proximal part is $\X^{*(n-1)}$. When we optimize this objective function to find $\X^{*(n)}$, there are three cases: 1) $\X^{*(n)} = \hat{\X}^{(n)}$, 2) $\X^{*(n)} = \X^{*(n-1)}$, and 3) $\X^{*(n)}$ is neither $\hat{\X}^{(n)}$ nor $\X^{*(n-1)}$. Our algorithm stops when we are in case 1 or 2 since further optimization over $\btheta$ and $\X$ changes nothing. In case 3, the algorithm continues, leading to further optimization steps to decrease error.

Indeed, let us note that steps 5 and 6 in Alg.~\ref{alg} together imply
\begin{equation}
	\label{eq:converge}
	 E(\X^{*(n)},\btheta^{*(n)}) \le  E(\X^{*(n-1)},\btheta^{*(n-1)}).
\end{equation}
The function $E$, bounded below by $0$, is non-increasing along the trajectory $\{ (\X^{*(n)},\btheta^{*(n)}) \}_{n \geq 1}$.  Hence $\{ E(\X^{*(n)},\btheta^{*(n)}) \}_{n \geq 1}$ must converge to some  $E^\ast \geq 0$.

Next we offer convergence theory for the Euler version of BCD-prox.  We believe this theory can also be established for the general $m$-step version of BCD-prox; however, the calculations will be lengthier.  In this subsection, we let $\x_i = \x_{(t_i)} \in \mathbb{R}^d$.  For $T$ even, set
$$
\x^+ = \{ \x_1, \x_2, \ldots, \x_{T/2} \}, \quad 
\x^- = \{ \x_{T/2+1}, \ldots, \x_{T} \}.
$$
For $T$ odd, replace $T/2$ by $(T-1)/2$ in the above definitions.  In words, $\x^+$ is the first half of the state series while $\x^-$ is the second half of the state series.  Note that $\X = (\x^+, \x^-)$.

Assume that $\mathbf{f}$ is at most linear in $\btheta$, so that
$\mathbf{f}(\x, \btheta) = \mathbf{f}_0(\x) + \mathbf{f}_1(\x) \btheta$,
with $\mathbf{f}_1 : \mathbb{R}^d \to \mathbb{R}^{d \times p}$ assumed to have full column rank for all $\x$.
  
Now initialize $\X^0 = \Y$ and proceed sequentially with the following steps for $n \geq 1$:
\begin{subequations}
\label{eqn:bcdproxnew}
\begin{align}
\label{eqn:thetaupdate}
\btheta^n &= \argmin_{\btheta} F_n(\X^{n-1}, \btheta) = \argmin_{\btheta} E(\X^{n-1}, \btheta) \\
\label{eqn:xminusupdate}
(\x^-)^n &= \argmin_{\x^-} F_n((\x^+)^{n-1}, \x^-, \btheta^n) \\
\label{eqn:xplusupdate}
(\x^+)^n &= \argmin_{\x^+} F_n(\x^+, (\x^-)^{n}, \btheta^n) \\
\X^n &= ((\x^+)^n, (\x^-)^n) 
\end{align}
\end{subequations}
We now seek to apply the results of \citet{XuYin2013}.  In order to do so, we will establish strong convexity of each of the steps in (\ref{eqn:bcdproxnew}).  We begin by noting that
$$
E(\X, \btheta) = \sum_{i=1}^{T-1} \| \x_{i+1} - \x_i - \mathbf{f}_0(\x_i) \Delta_i + \mathbf{f}_1(\x_i) \btheta \Delta_i \|^2.
$$
We compute the $p \times p$ Hessian
$$
\nabla_{\btheta} \nabla_{\btheta} E = 2 \sum_{i=1}^{T-1}  (\mathbf{f}_1(\x_i))^T \mathbf{f}_1(\x_i) \Delta_i^2.
$$
Since $\f_1$ has full column rank, it follows that $E(\X, \btheta)$ is strongly convex in $\btheta$ with $\X$ held fixed.

Next, suppose all $\Delta_i$ are zero.  Then \eqref{eq:objx} reduces to
$E(\X, \btheta; \Delta_i=0) = \sum_{i=1}^{T-1} \| \x_{i+1} - \x_i \|^2$.
This is a quadratic form written as a sum of squares; hence it is positive semidefinite.  We sharpen this to positive definiteness by examining derivatives.  First we hold $\x^-$ and $\btheta$ fixed and consider $A = \nabla_{\x^+} \nabla_{\x^+} E(\X, \btheta; \Delta_i=0)$, the Hessian with respect to $\x^+$ only.  We obtain
$$
A = \begin{bmatrix}
2I & -2I & & & \\
-2I & 4I & -2I & &\\
& -2I & 4I & \ddots & \\
& & \ddots & \ddots & -2I \\
& & & -2I & 4I 
\end{bmatrix}
$$
Here each $I$ is a $d \times d$ identity block.  The positive semidefiniteness established above implies that all eigenvalues of $A$ are nonnegative.  By an induction argument, we can show that $\det A = 2^{d T/2}$, implying that the eigenvalues of $A$ are bounded away from zero.  Hence the quadratic form $E(\X, \btheta; \Delta_i=0)$
restricted to $\mathbf{x}^+$ (with $\mathbf{x}^-$ held fixed) is strongly convex.  In an analogous way, we can show that 
$E(\X, \btheta; \Delta_i=0)$
restricted to $\mathbf{x}^-$ (with $\mathbf{x}^+$ held fixed) is strongly convex.  Both of these properties hold at $\Delta_i = 0$.  Because the eigenvalues of both restrictions are continuous functions of $\Delta_i$, there exists $\delta > 0$ such that for $\Delta_i \in (0, \delta)$, the eigenvalues remain bounded away from zero.

Then we have the following first convergence result.
\begin{theorem}
	\label{th:conv1}
Suppose all $\Delta_i \in (0, \delta)$ for the $\delta$ established above.  Suppose $\mathbf{f}$ is linear in $\btheta$ with the full-rank condition described above.  Then there exists an interval of $\lambda$ values for which the algorithm \eqref{eqn:bcdproxnew} converges to a Nash equilibrium $(\overline{\X}, \overline{\btheta})$ of the objective $E$ defined in \eqref{eq:objx}.
\end{theorem}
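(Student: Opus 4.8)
The plan is to recognize the scheme \eqref{eqn:bcdproxnew} as the proximal cyclic block coordinate descent method of \citet{XuYin2013} applied to the \emph{fixed} objective $E$ of \eqref{eq:objx}, with three blocks updated in the order $\btheta \to \x^- \to \x^+$: the $\btheta$-block is updated by exact minimization (the penalty $\lambda\norm{\X-\X^{n-1}}^2$ in $F_n$ is constant in $\btheta$, so \eqref{eqn:thetaupdate} is exactly $\argmin_\btheta E$), while the $\x^-$ and $\x^+$ blocks are updated with a proximal term of modulus $2\lambda$ centered at the previous iterate. It then suffices to verify the hypotheses of the Xu--Yin convergence theorem and read off its conclusion; the ``interval of $\lambda$'' in the statement is precisely the set of modulus values for which that verification goes through.

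\textbf{Step 1: block multiconvexity.} This is essentially carried out in the paragraphs preceding the theorem. Strong convexity of $E$ in $\btheta$ (with $\X$ fixed) follows from $\nabla_\btheta\nabla_\btheta E = 2\sum_i \f_1(\x_i)^T\f_1(\x_i)\Delta_i^2 \succ 0$, using the full-column-rank hypothesis on $\f_1$. Strong convexity of $E$ in $\x^+$ (with $\x^-,\btheta$ fixed), and symmetrically in $\x^-$, holds at $\Delta_i=0$ because the Hessian $A$ computed there is positive semidefinite with $\det A = 2^{dT/2}$, hence positive definite, and it persists for all $\Delta_i\in(0,\delta)$ by continuity of the eigenvalues in the $\Delta_i$. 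Adding the proximal term contributes $2\lambda I$ to each of these Hessians, so every subproblem actually solved in \eqref{eqn:bcdproxnew} is strongly convex with modulus bounded below.

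\textbf{Step 2: remaining Xu--Yin hypotheses, the role of $\lambda$, and conclusion.} Assume $\f_0,\f_1$ are real-analytic (polynomial vector fields, which cover the systems of interest, are the typical case); then $E$ is real-analytic in $(\X,\btheta)$, so it satisfies the Kurdyka--\L{}ojasiewicz property at every point and $\nabla E$ is Lipschitz on bounded sets. It remains to confine the iterates to a compact set $K$: from the descent inequality \eqref{eq:converge} together with the strong-convexity moduli of Step~1 one gets $\sum_n \bigl(\norm{\X^n-\X^{n-1}}^2 + \norm{\btheta^n-\btheta^{n-1}}^2\bigr)<\infty$, and combining this with a mild growth assumption on $\f$ (or simply restricting to systems with bounded trajectories, as in the applications) keeps $\{(\X^n,\btheta^n)\}$ in a compact $K$ on which the eigenvalue lower bounds of Step~1 and the Lipschitz constant of $\nabla E$ are uniform. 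The admissible interval of $\lambda$ is then dictated by Step~1: its left endpoint is the smallest modulus for which $2\lambda I$ dominates the worst-case curvature that $\f$ contributes to the $\x^\pm$ subproblems over $K$, making those subproblems uniformly strongly convex; every larger $\lambda$ is admissible as well (and if $\delta$ is taken small enough the $E$-part alone already suffices, so the interval is all of $(0,\infty)$). With block multiconvexity, proximal updates with bounded moduli, the KL property, bounded iterates, and Lipschitz $\nabla E$ on $K$ in hand, the Xu--Yin theorem yields that the whole sequence $(\X^n,\btheta^n)$ converges to a single limit $(\overline\X,\overline\btheta)$ that is a critical point of $E$; by block convexity this limit minimizes $E$ separately over $\btheta$, over $\x^-$, and over $\x^+$ with the others frozen, i.e.\ it is a Nash equilibrium.

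\textbf{Main obstacle.} The crux is the boundedness of the iterates in Step~2. By Theorem~\ref{th:infsol} the zero set of $E$ is unbounded (it contains an Euler trajectory for every $(\x_{(t_1)},\btheta)$), so $E$ is far from coercive, sublevel-set compactness is unavailable, and \citet{XuYin2013} cannot be invoked off the shelf; one must control the iterates by hand. This is intertwined with upgrading the pointwise-in-$\Delta_i$ continuity of the $\x^\pm$-Hessian eigenvalues (Step~1) to a \emph{uniform} positive lower bound, which itself only makes sense on a fixed compact set. Pinning down exactly which hypotheses on $\f$ (and which ranges of $\Delta_i$ and $\lambda$) make this work is the real content of the argument; the rest is bookkeeping around the Xu--Yin framework.
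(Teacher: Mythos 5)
Your proposal follows the same route as the paper: recognize \eqref{eqn:bcdproxnew} as the prox-regularized block coordinate descent scheme of \citet{XuYin2013}, verify block strong convexity exactly as in the paragraphs preceding the theorem (the $\btheta$-Hessian $2\sum_i \f_1(\x_i)^T\f_1(\x_i)\Delta_i^2$ under the full-rank hypothesis, and the $\x^\pm$-Hessians at $\Delta_i=0$ with $\det A = 2^{dT/2}$, extended to $\Delta_i\in(0,\delta)$ by eigenvalue continuity), and then cite Xu--Yin. Your Step~1 is essentially identical to the paper's preparation.

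Where you diverge is Step~2, and this is worth flagging. For this theorem the paper invokes only Theorem~2.3 of \citet{XuYin2013}, whose hypotheses are block multiconvexity plus strong convexity of the exactly-minimized block and the proximal terms; it does \emph{not} require the KL property, real-analyticity of $\f$, Lipschitz gradients, or iterate boundedness, and its conclusion is the Nash-equilibrium statement claimed here. The machinery you import (KL, compactness, whole-sequence convergence to a critical point) corresponds to Corollary~2.7/Theorem~2.8 of \citet{XuYin2013}, which the paper reserves for its Theorem~\ref{th:conv2} (convergence to a global minimizer under the additional KL hypothesis). By routing through the stronger result you end up needing assumptions absent from the statement (real-analytic $\f$, growth conditions or bounded trajectories), and your argument for confining the iterates to a compact set does not actually work: square-summability of the successive differences $\norm{\X^n-\X^{n-1}}^2+\norm{\btheta^n-\btheta^{n-1}}^2$ does not imply boundedness of the sequence, so the ``main obstacle'' you identify is named but not closed. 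To be fair, your underlying concern is legitimate---$E$ is non-coercive by Theorem~\ref{th:infsol}, and Xu--Yin's Theorem~2.3 is a statement about limit points, so reading off ``converges to a Nash equilibrium'' implicitly presumes the sequence has limit points, a point the paper's one-line proof does not discuss either---but as written your Step~2 both overreaches relative to the stated hypotheses and leaves its key compactness claim unproven, whereas the intended proof is simply the direct application of Theorem~2.3 enabled by Step~1.
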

\begin{proof}
The result follows directly from Theorem 2.3 from \citet{XuYin2013}; we have verified all hypotheses.  In particular, when all $\Delta_i \in (0, \delta)$, $E(\X, \btheta)$ is strongly convex in $\x^+$ (with $\x^-$ and $\btheta$ held fixed) and strongly convex in $\x^-$ (with $\x^+$ and $\btheta$ held fixed).
\end{proof}

Let us further assume that $\mathbf{f}$ satisfies the Kurdyka-Lojasiewicz (KL) property described in Section 2.2 of \citet{XuYin2013}.  In particular, if each component of $\mathbf{f}$ is real analytic, the KL property will be satisfied.  Together with linearity of $\mathbf{f}$ in $\btheta$, this includes numerous vector fields of interest, including all ODE in our experimental results.  (For FitzHugh--Nagumo, a change of variables renders the system linear in the parameters.) Then we have a second convergence result.
\begin{theorem}
 \label{th:conv2}
Suppose in addition to the hypotheses of Theorem \ref{th:conv1}, $\mathbf{f}$ is smooth and satisfies the KL property.  Then assuming the algorithm defined by \eqref{eqn:bcdproxnew} begins sufficiently close to a global minimizer, it will converge to a global minimizer of $E$ defined in \eqref{eq:objx}.
\end{theorem}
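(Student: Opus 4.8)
The plan is to deduce Theorem~\ref{th:conv2} from the Kurdyka--{\L}ojasiewicz (KL) whole-sequence convergence theory for proximally-regularized block coordinate descent in \citet{XuYin2013} (the theorem that adds the KL hypothesis to the setting of their Theorem~2.3), and then to promote the resulting ``convergence to a coordinate-wise critical point'' into ``convergence to a global minimizer'' using Theorem~\ref{th:infsol}, which tells us that every global minimizer $\bar z=(\overline{\X},\overline{\btheta})$ of $E$ satisfies $E(\bar z)=0$.

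First I would verify the two standard algorithmic hypotheses of that framework along the trajectory $\{z^n\}=\{(\X^n,\btheta^n)\}$. The \emph{sufficient-decrease} property is essentially in hand: for $\Delta_i\in(0,\delta)$ the three block updates in \eqref{eqn:bcdproxnew} each minimize, over their block, a function that is strongly convex with modulus bounded below uniformly on any bounded region of state space---the $\btheta$-block from the full-column-rank Hessian $\nabla_{\btheta}\nabla_{\btheta}E=2\sum_i(\f_1(\x_i))^T\f_1(\x_i)\Delta_i^2$, and the $\x^+$- and $\x^-$-blocks from the perturbation argument on the matrix $A$, reinforced in the latter two cases by the proximal term $\lambda\norm{\X-\X^{n-1}}^2$---so that $E(z^n)+c\,\norm{z^n-z^{n-1}}^2\le E(z^{n-1})$ for a fixed $c>0$, which telescopes. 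The \emph{relative-error} (subgradient) property follows from smoothness of $\f$: on a bounded set $\nabla E$ is Lipschitz, and the block-optimality conditions defining $z^n$, together with the observation that the only $n$-dependent part of $F_n$ is the proximal center $\X^{n-1}$ (whose gradient contribution is exactly $2\lambda(\X^n-\X^{n-1})$), give $\norm{\nabla E(z^n)}\le C\,\norm{z^n-z^{n-1}}$. Finally, since each component of $\f$ is real analytic, $E$ is real analytic and hence satisfies the KL property at every point by {\L}ojasiewicz's inequality. With these ingredients \citet{XuYin2013} yields that $\{z^n\}$ is either eventually constant or converges to a single limit $z^\ast$; and since $\norm{z^n-z^{n-1}}\to0$, the relative-error bound and continuity of $\nabla E$ force $\nabla E(z^\ast)=0$, so $z^\ast$ is a critical point of $E$.

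Next I would localize the argument near a global minimizer. Fix a global minimizer $\bar z$; by Theorem~\ref{th:infsol}, $E(\bar z)=0$, and the KL property at $\bar z$ supplies $\rho,\eta>0$ and a concave desingularizing function $\phi$ with $\phi'>0$ such that $\phi'(E(z))\,\norm{\nabla E(z)}\ge1$ whenever $z\in B(\bar z,\rho)$ and $0<E(z)<\eta$. The standard KL ``trapping'' estimate inside the whole-sequence convergence proof shows that if the algorithm is initialized close enough to $\bar z$---close enough that $E(z^0)<\eta$ together with the usual smallness inequality relating $\rho$, the decrease constant $c$, and $\phi(E(z^0))$, both achievable by continuity of $E$ at $\bar z$---then every iterate remains in $B(\bar z,\rho)$, so $z^\ast\in\overline{B(\bar z,\rho)}$. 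To finish, note that any critical point $z$ lying in $B(\bar z,\rho)$ has $\nabla E(z)=0$, which by the displayed KL inequality is incompatible with $0<E(z)<\eta$; hence the critical set inside $B(\bar z,\rho)$ meets only the levels $\{0\}\cup[\eta,\infty)$. Since $E(z^\ast)=\lim_n E(z^n)\le E(z^0)<\eta$, we conclude $E(z^\ast)=0$, i.e.\ $z^\ast$ is a global minimizer of $E$.

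The step I expect to be the main obstacle is the bookkeeping of constants: $\lambda$ must simultaneously lie in the interval produced by Theorem~\ref{th:conv1} (so the block subproblems are strongly convex and the decrease is quantitative) and be compatible with the Lipschitz constant of $\nabla E$ on the trapping ball and with the radius $\rho$, so that ``sufficiently close'' is a nonempty condition. Part of the care here is that the uniform strong-convexity moduli and the Lipschitz constant are only available on bounded sets, which is slightly circular with the trapping conclusion; the resolution is to fix the ball $B(\bar z,\rho)$ first and read off all constants on that ball, exactly as in \citet{XuYin2013}. Everything else---including the verification that the cyclic three-block order $(\btheta,\x^-,\x^+)$ obeys the essentially-cyclic update rule---is a direct application of their results together with Theorem~\ref{th:infsol}.
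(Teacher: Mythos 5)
Your proposal is correct and follows essentially the same route as the paper: the paper's proof simply invokes Corollary 2.7 and Theorem 2.8 of \citet{XuYin2013} (whole-sequence KL convergence plus local convergence to global minimizers), stating that the hypotheses have been verified, and your argument is precisely an unpacking of that machinery---sufficient decrease from the blockwise strong convexity of Theorem \ref{th:conv1}, the relative-error bound from smoothness of $\mathbf{f}$, KL via real analyticity, and the trapping argument combined with Theorem \ref{th:infsol} to identify the limit as a global minimizer with $E=0$.
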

\begin{proof}
The result follows directly from Corollary 2.7 and Theorem 2.8 of \citet{XuYin2013}; we have  verified all hypotheses.\end{proof}

\section{Experiments}
\label{s:exp}

We briefly explain the datasets (models) that we used in our experiments here. In the supplementary material, we detail the ODEs and true parameter values for 1) Lotka--Volterra with two-dimensional states and four unknown parameters. 2) FitzHugh--Nagumo with two-dimensional states and four unknown parameters. 3) R\" ossler attractor with three-dimensional states and three unknown parameters. 4) Lorenz-96 with $40$ nonlinear equations and one unknown parameter, the largest ODE we found in the literature.

We create the clean states using a Runge-Kutta method of order $5$. In all of our experiments, unless otherwise stated, we use the three-step Adams-Bashforth method to discretize the ODE. Also, unless otherwise stated, we added Gaussian noise with mean $0$ and variance $\sigma^2$ to each of the clean states to create the noisy observations.

\paragraph{Advantages of our approach.} Before detailing our experimental results, let us give an overview of our findings.  BCD-prox is robust with respect to its only hyperparameter $\lambda$. We will show below that for a broad range of $\lambda$ values, BCD-prox works well. We fix it to $\lambda=1$ in our later experiments. As explained before, previous methods have a large number of hyperparameters, which are difficult to set.

BCD-prox can be trained quickly. On a standard laptop, it takes around $20$ seconds for BCD-prox to learn the parameters and states jointly on ODE problems with $400$ states. The spline-based methods take a few minutes and Bayesian methods take a few hours to converge on the same problem.

Because BCD-prox, unlike Bayesian methods, does not make assumptions about the type of the noise or distribution of the states, it performs well under different noise and state distributions.  In particular, as the magnitude of noise in the observations increases, BCD-prox clearly outperforms the extended Kalman filter.

As our experiments confirm, both spline-based and Bayesian approaches are very sensitive to the initialization of the ODE parameters. If we initialize them far away from the true values, they do not converge. BCD-prox is much more robust. This robustness stems from simultaneously learning states and parameters. Even if the estimated and true parameters differ at some iteration, they can converge later, as the estimated states converge to the clean states.

\vspace{-2ex}
\paragraph{Evaluation metrics.} Let $\btheta$ and $\X$ denote the true parameters and the clean states, respectively. Let $\btheta^{*}$ and $\hat{\X}$ denote the estimated parameters and the predicted states. We report the Frobenius norm of $\X-\hat{\X}$ as the prediction error. We also consider $\abs{\theta_l - \theta^{*}_l}$ as the $l$th parameter error. To compute predicted states, we first take $\btheta^{*}$ as the parameter and $\x_{(t_1)}^*$ as the initial state; we then repeatedly apply either Euler \eqref{eq:ode_em} or multistep \eqref{eq:mstep} numerical integration.

\begin{figure}[!t] 
  \centering
  \begin{tabular}{@{}c@{\hspace{.5ex}}c@{\hspace{1ex}}c@{}}
  & R\" ossler & Lorenz-96 \\[-.8ex]
   \rotatebox{90}{\hspace{3ex} \small{pred.\ error}} &       
  \includegraphics*[width=0.425\linewidth]{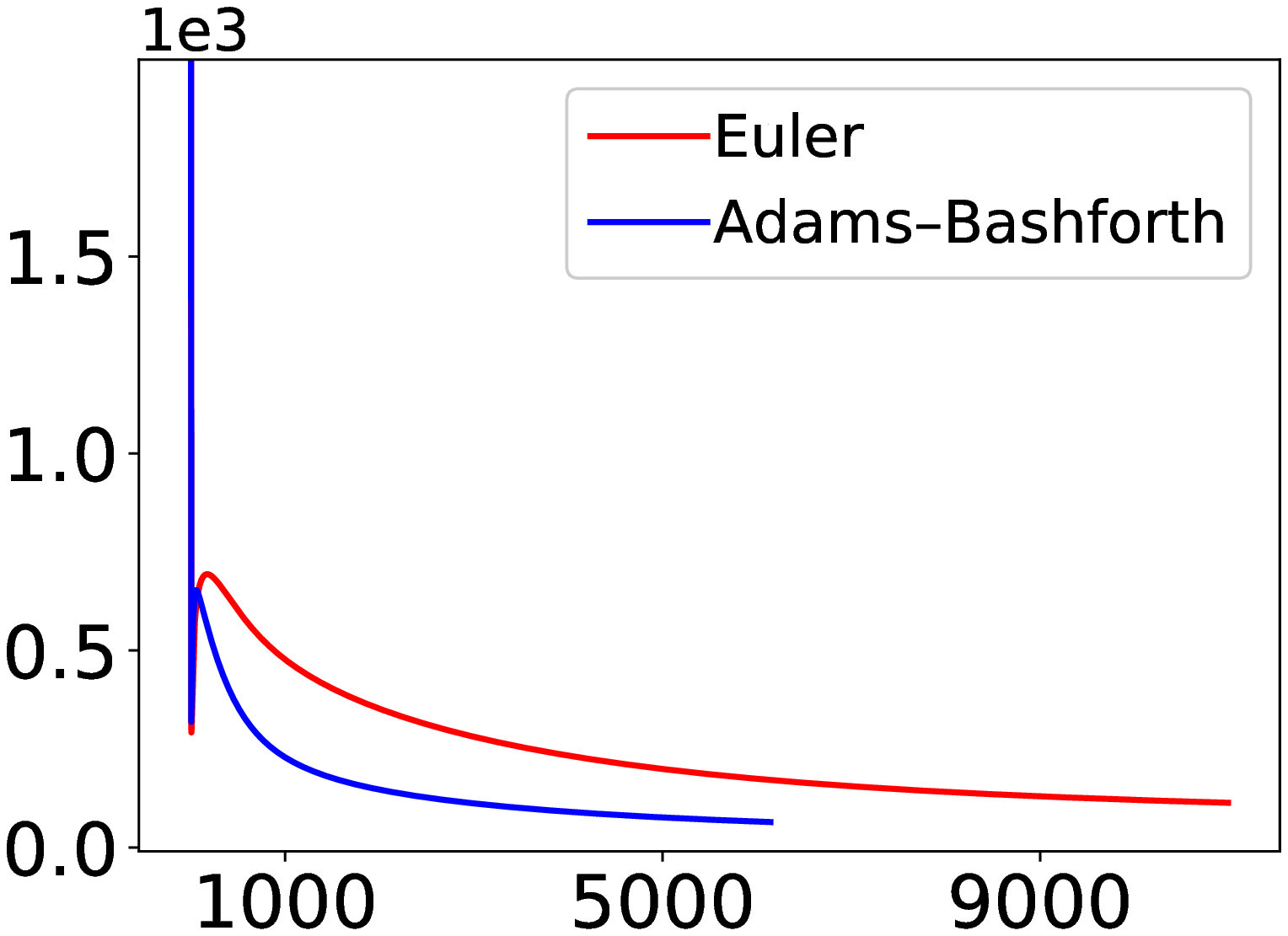}&
    \includegraphics*[width=0.43\linewidth]{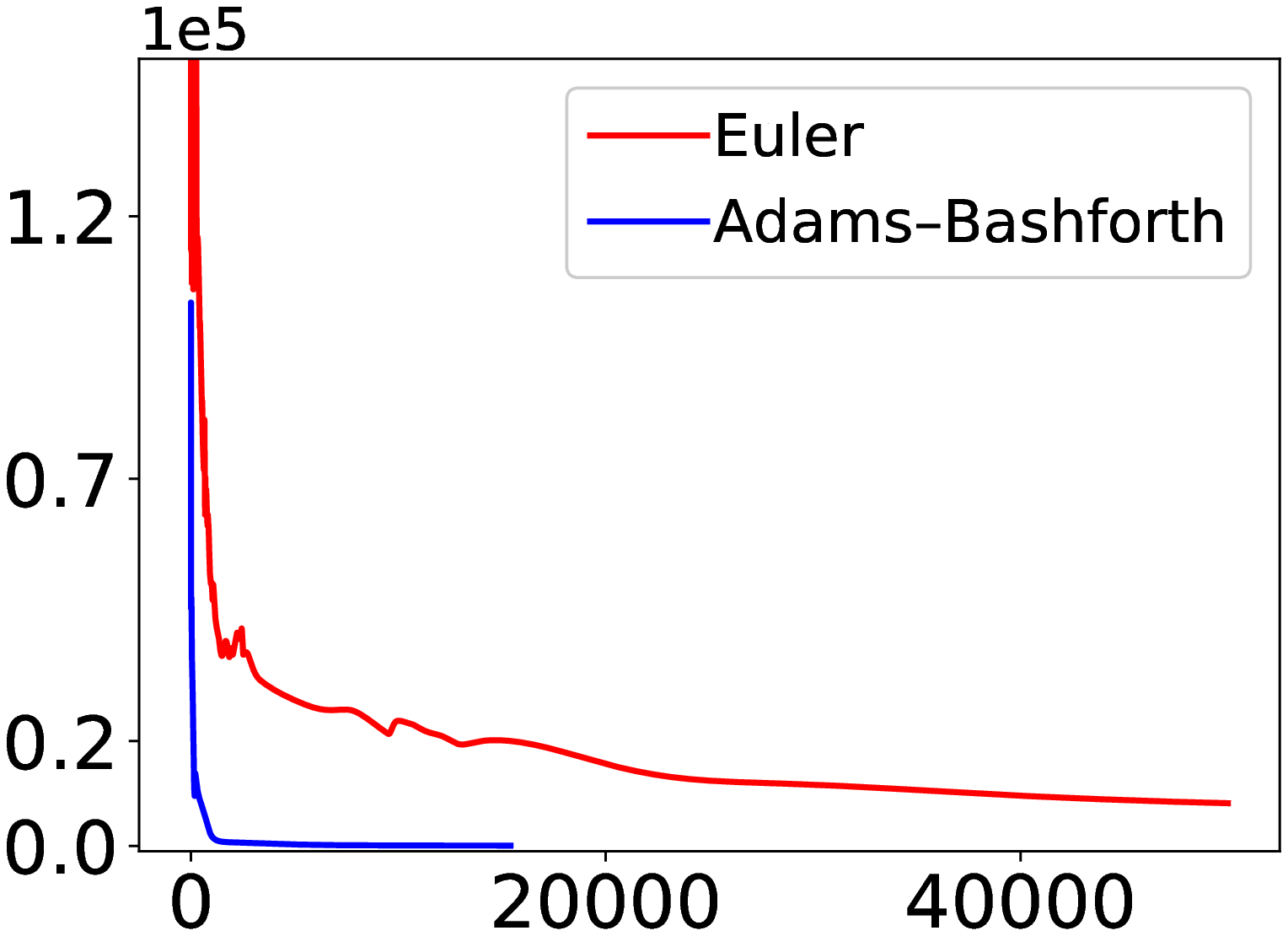}\\[-1ex]
    & \small{iteration} & \small{iteration}
  \end{tabular}
    \vspace{-2ex}
  \caption{Prediction error at different iterations of our algorithm with different discretization methods. The noise variance of observations is $\sigma^2=1$. Our learning strategy decreases the error significantly.}
  \vspace{-3ex}
  \label{f:objerr}
\end{figure}

\vspace{-1ex}
\paragraph{Optimization of objective \eqref{eq:objx} leads to better estimation.} At each iteration $n$ of our optimization, we compute the predicted states $\hat{\X}^{(n)}$ and report the prediction error. 
In Fig.~\ref{f:objerr}, we consider two kinds of discretization: 1) one-step Euler method, and 2) three-step Adams-Bashforth method. Note that as we increase the order, we expect to see more accurate results.

The variance of the noisy observations is $\sigma^2=1$. The supplementary material contains the results for the FitzHugh--Nagumo model and also for the case of $\sigma^2=0.5$. Fig.~\ref{f:objerr} shows that at the first iteration the error is significant in all models. The error is $\sim 10^3$ for FitzHugh--Nagumo and R\" ossler, and $\sim 1.5 \times 10^5$ for Lorenz-96 model. 

After several iterations of our algorithm, the error decreases significantly, no matter what kind of discretization we use. Three-step Adams-Bashforth performs better than Euler in general: it converges faster and achieves a smaller final error. This is especially clear for the Lorenz-96 model: the final error is near zero for three-step Adams-Bashforth, but near $10^4$ for Euler.

The last point about Fig.~\ref{f:objerr} is that, as expected, the prediction error increases at times; the error does not decrease monotonically. This mainly happens at the first few iterations. The main reason for this behavior is that our objective function in \eqref{eq:objx} is different from the prediction error. We cannot directly optimize the prediction error because we do not have access to the clean states. Still, the fact that our algorithm eventually brings the prediction error close to zero suggests that minimizing the objective in \eqref{eq:objx} has the same effect as minimizing the prediction error.

\begin{figure}[!t]
  \centering
  \begin{tabular}{@{}l@{}c@{}c@{}}
    & Lotka--Volterra & FitzHugh--Nagumo \\[-.1ex]
    \rotatebox{90}{\hspace{1ex} \small{pred.\ error} } &   
	\includegraphics*[width=0.46\linewidth,height=.278\linewidth]{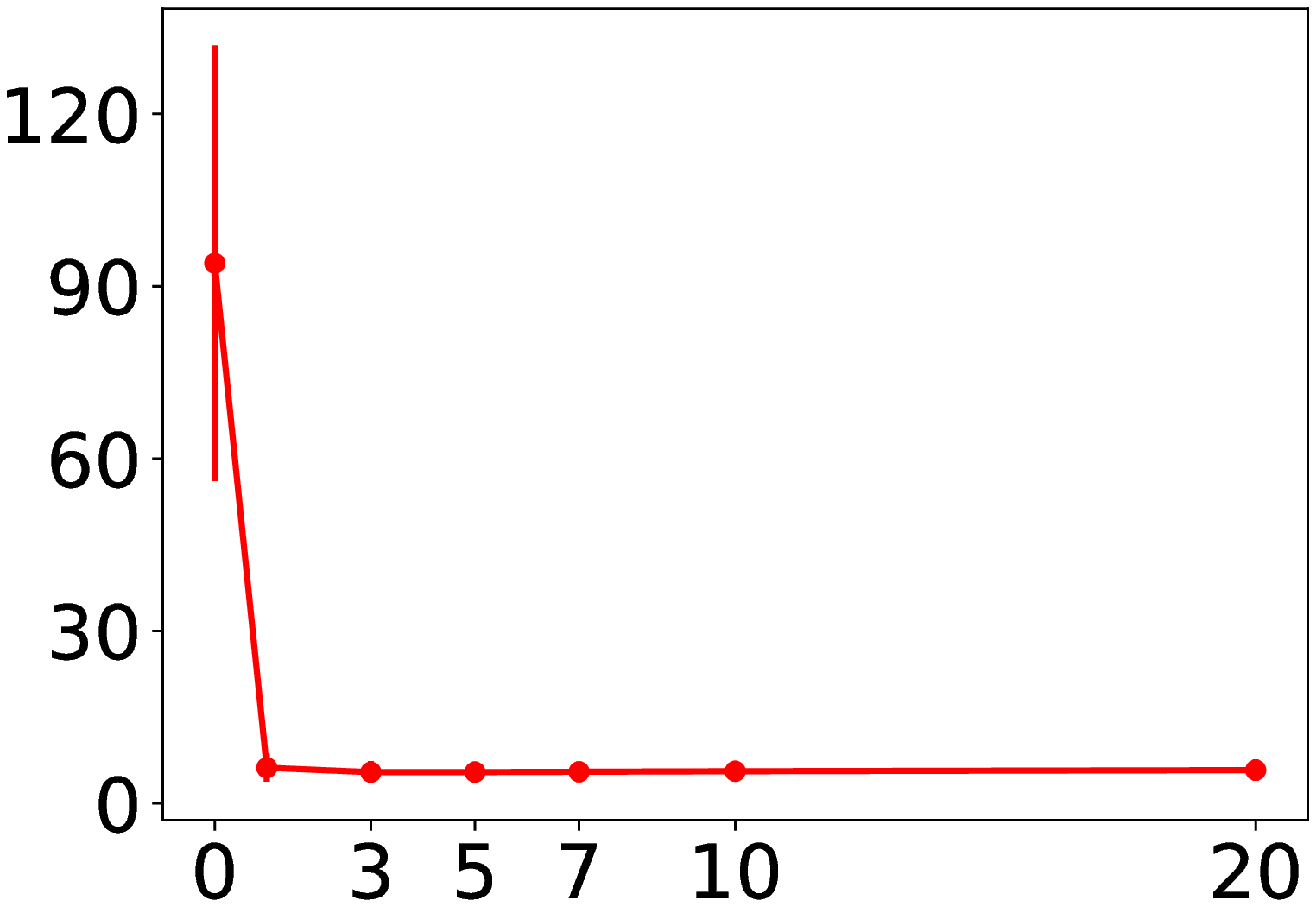}&
    \includegraphics*[width=0.46\linewidth,height=.278\linewidth]{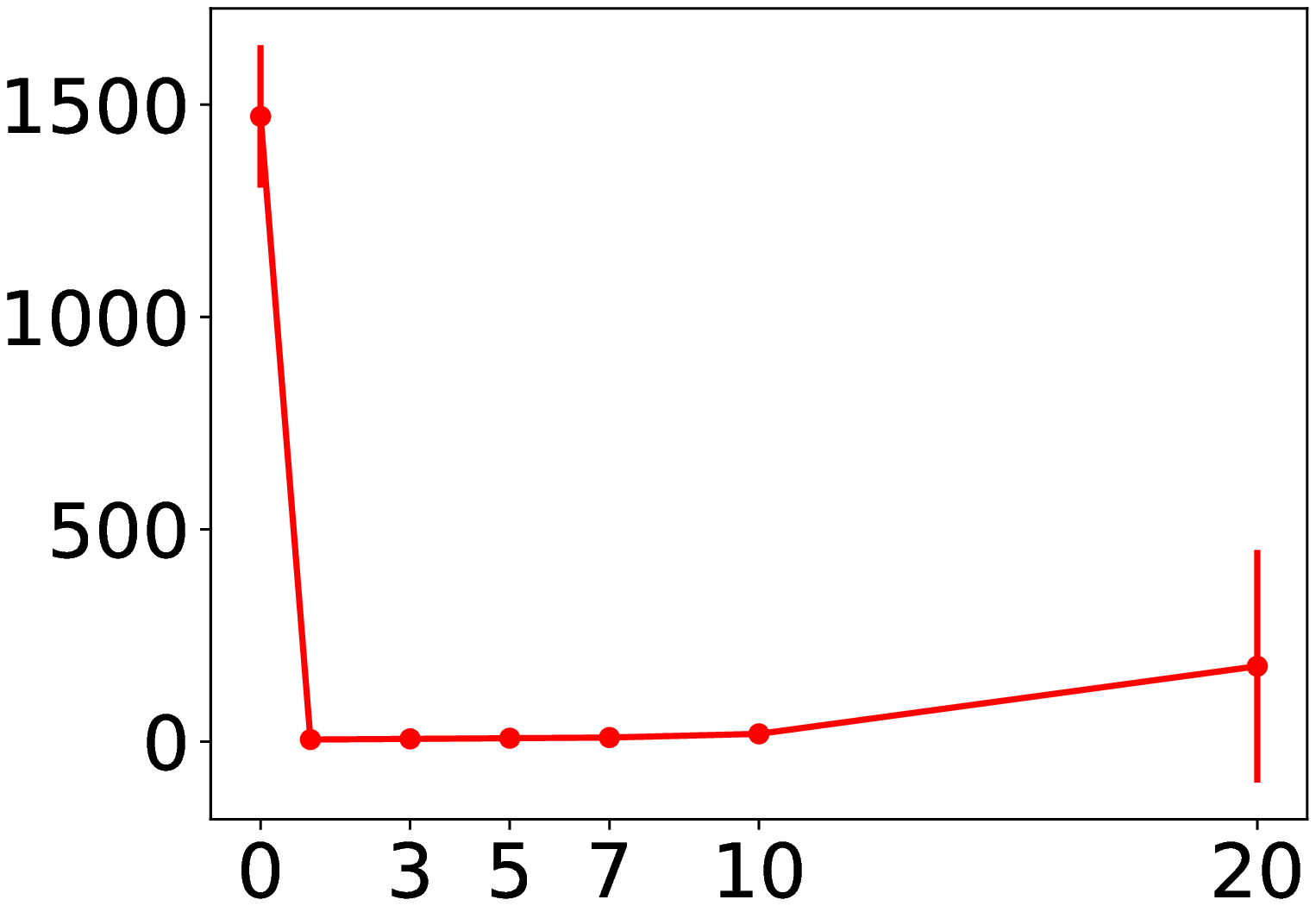}\\[-1ex]
     \rotatebox{90}{\hspace{1ex} \small{param.\ value} } &   
	\includegraphics*[width=0.46\linewidth,height=.278\linewidth]{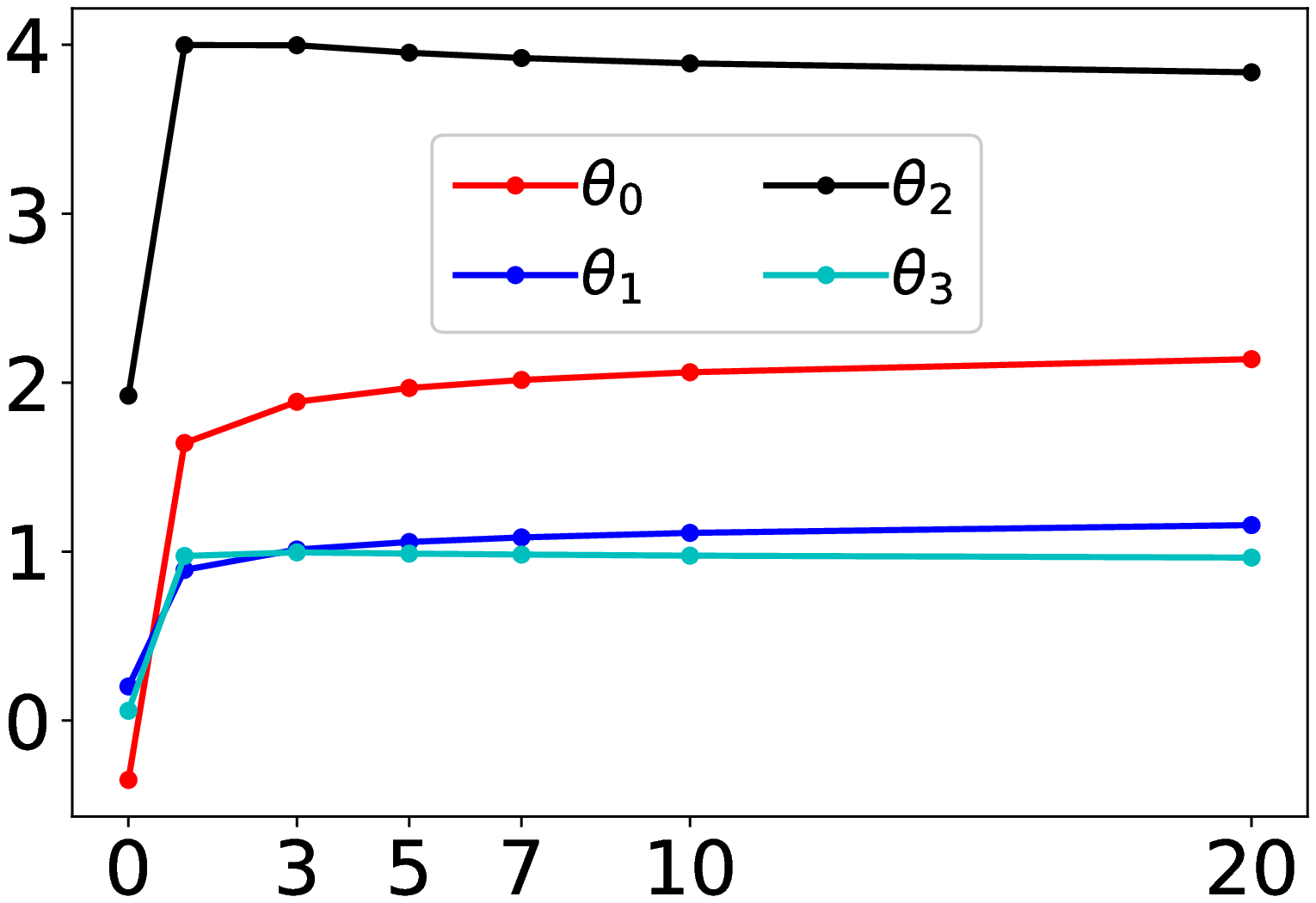}&
    \includegraphics[width=0.46\linewidth,height=.278\linewidth]{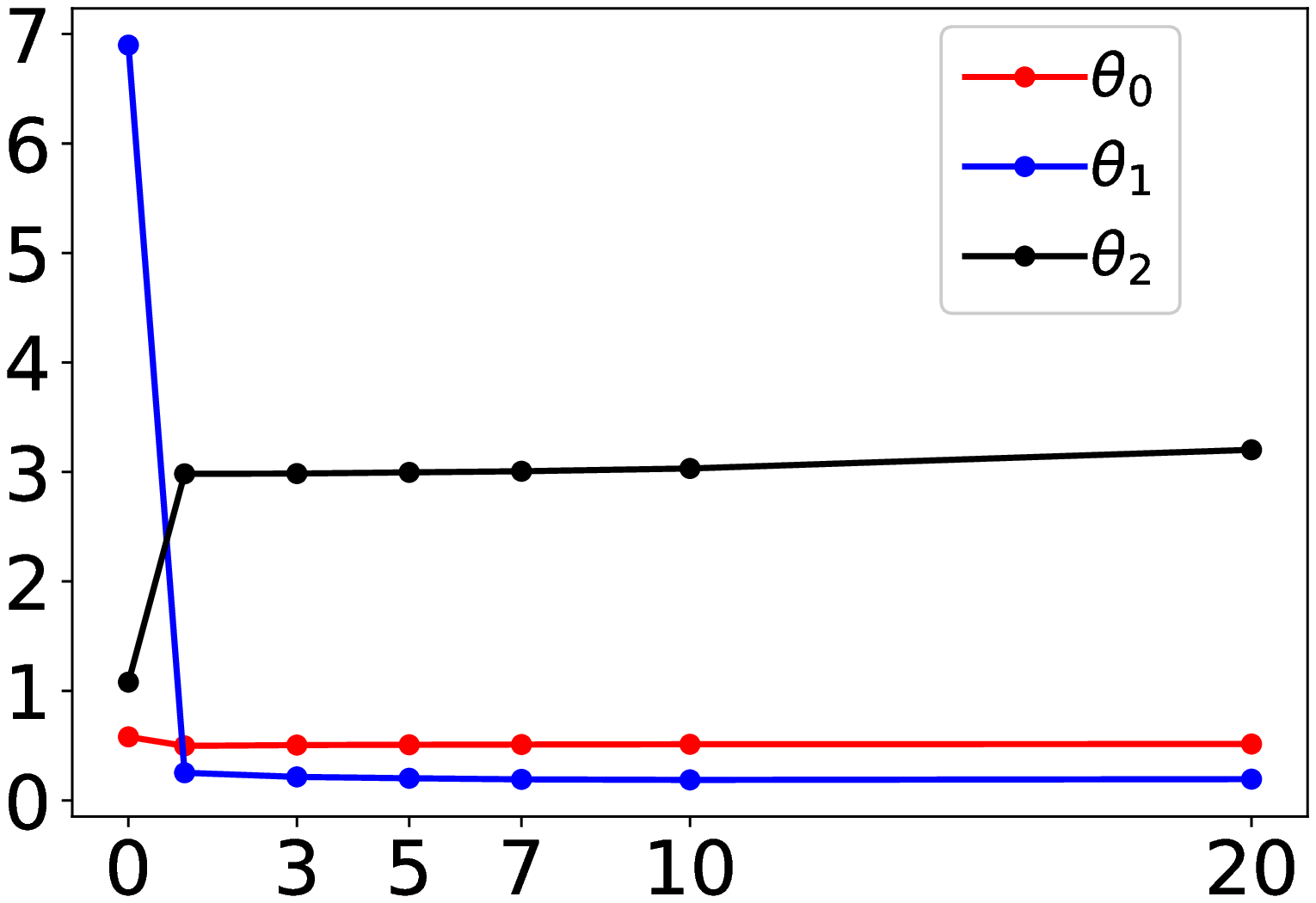}\\[-1.2ex]
   & \small{penalty weight $\lambda$} & \small{penalty weight $\lambda$}
  \end{tabular}
  \vspace{-2ex}
  \caption{Robustness to the hyperparameter $\lambda$. The true parameters are $\theta_0=.5, \theta_1=.3,$ and $\theta_2=3$ in the FitzHugh--Nagumo and $\theta_0=2, \theta_1=1, \theta_2=4$, and $\theta_3=1$ in the Lotka--Volterra. For each $\lambda$, we report the mean error and parameter value in $10$ experiments.}
  \vspace{-3ex}
  \label{f:robust}
\end{figure}

\begin{figure*}[!h]
  \centering
  \psfrag{i}[t][t]{it}
  \psfrag{idx}[t][t]{index of the hash fcn}
  \begin{tabular}{@{}c@{}c@{\hspace{.2ex}}c@{\hspace{.2ex}}c@{\hspace{.2ex}}c@{}}
  	& $\sigma_{\btheta}^2=1$ & $\sigma_{\btheta}^2=5$ & $\sigma_{\btheta}^2=10$ & $\sigma_{\btheta}^2=20$\\[-.1ex]
	  \hspace{3ex}\rotatebox{90}{\hspace{2ex}\raisebox{3ex}[0pt][0pt]{\makebox[0pt][c]		{\hspace{-30ex}\makebox[21.5em][c]{\dotfill \small{R\" ossler attractor}\dotfill}}}\small{state error}} &              
    \includegraphics*[width=0.235\linewidth,height=0.12\linewidth]{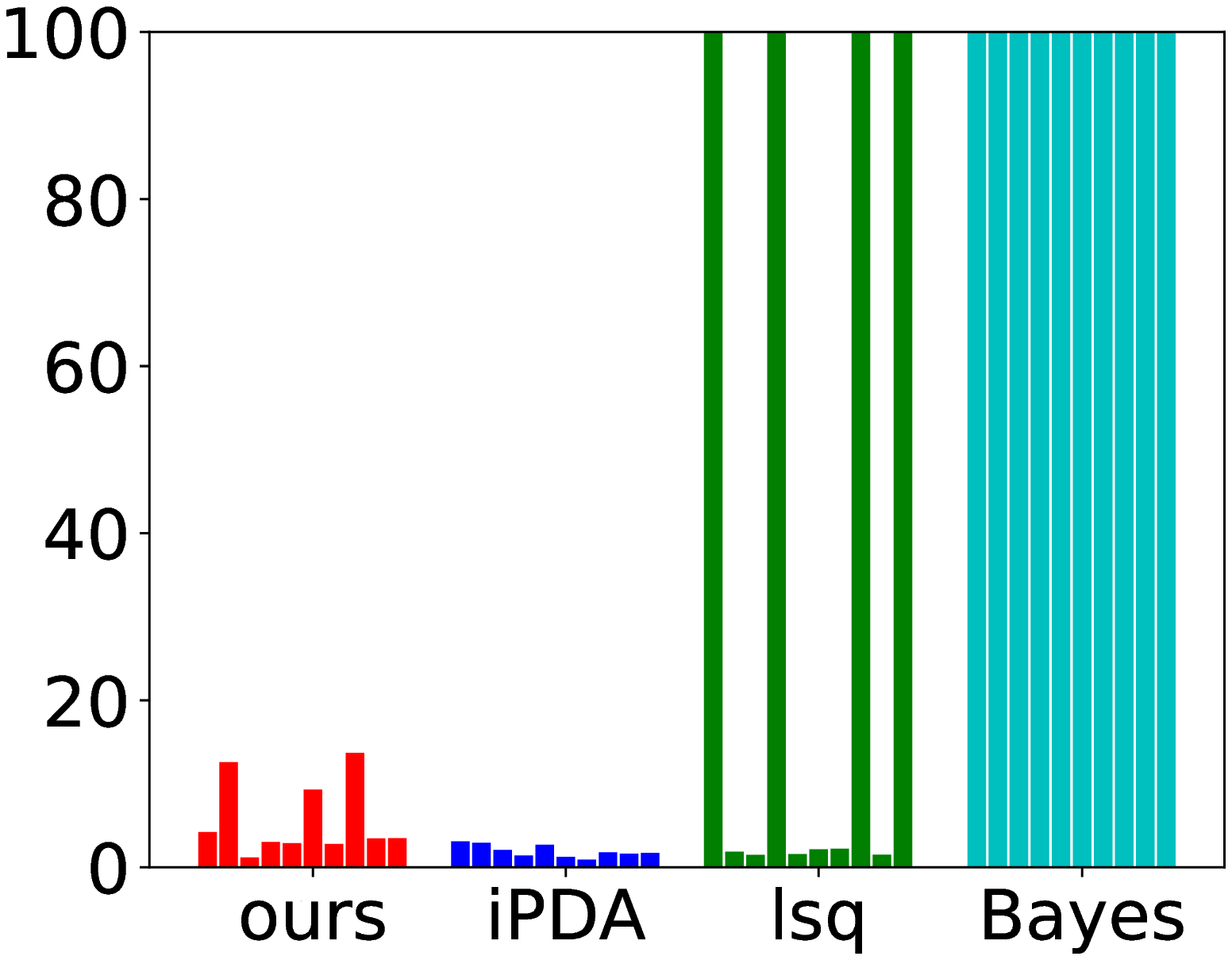}&
    \includegraphics*[width=0.235\linewidth,height=0.12\linewidth]{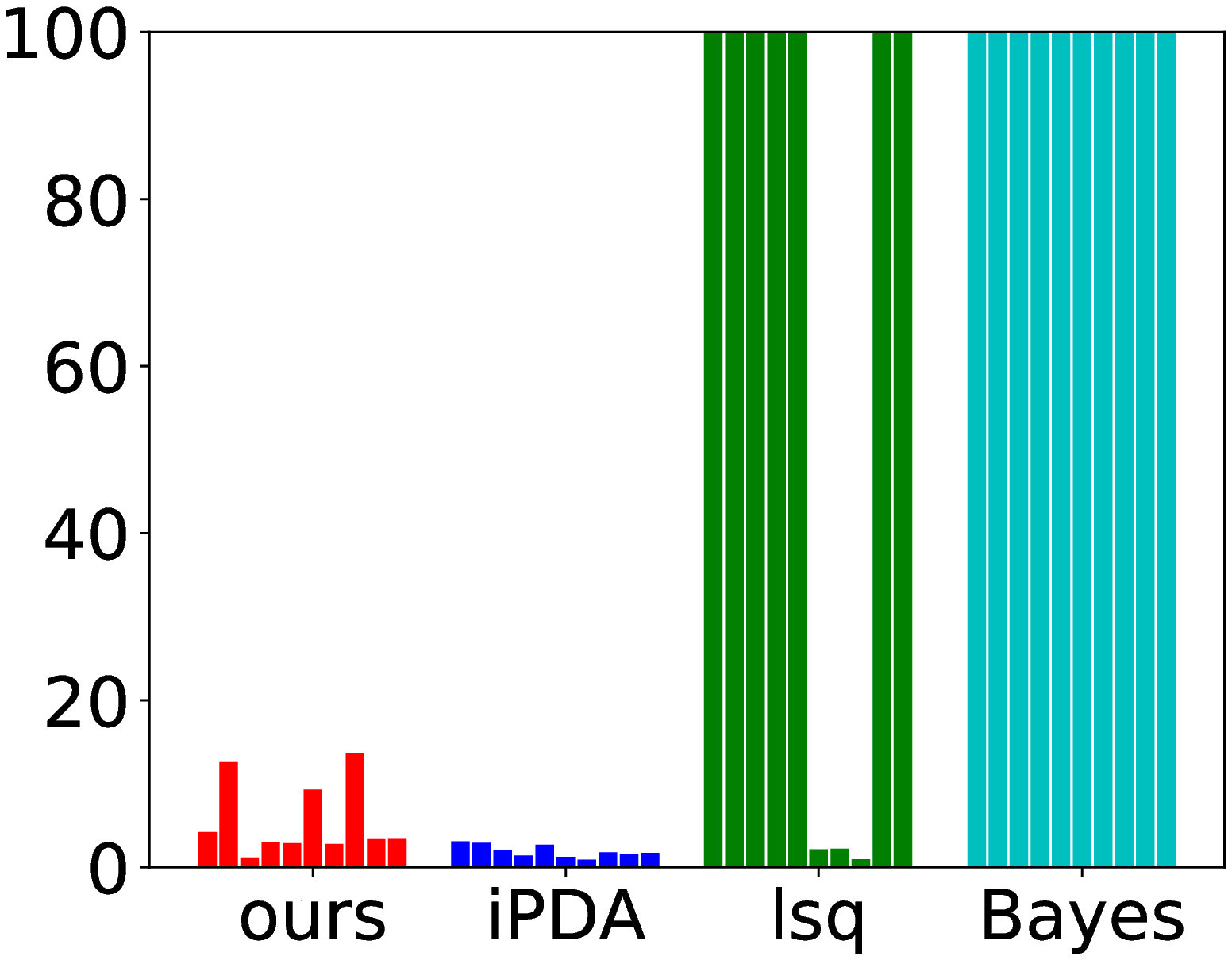}&
    \includegraphics*[width=0.235\linewidth,height=0.12\linewidth]{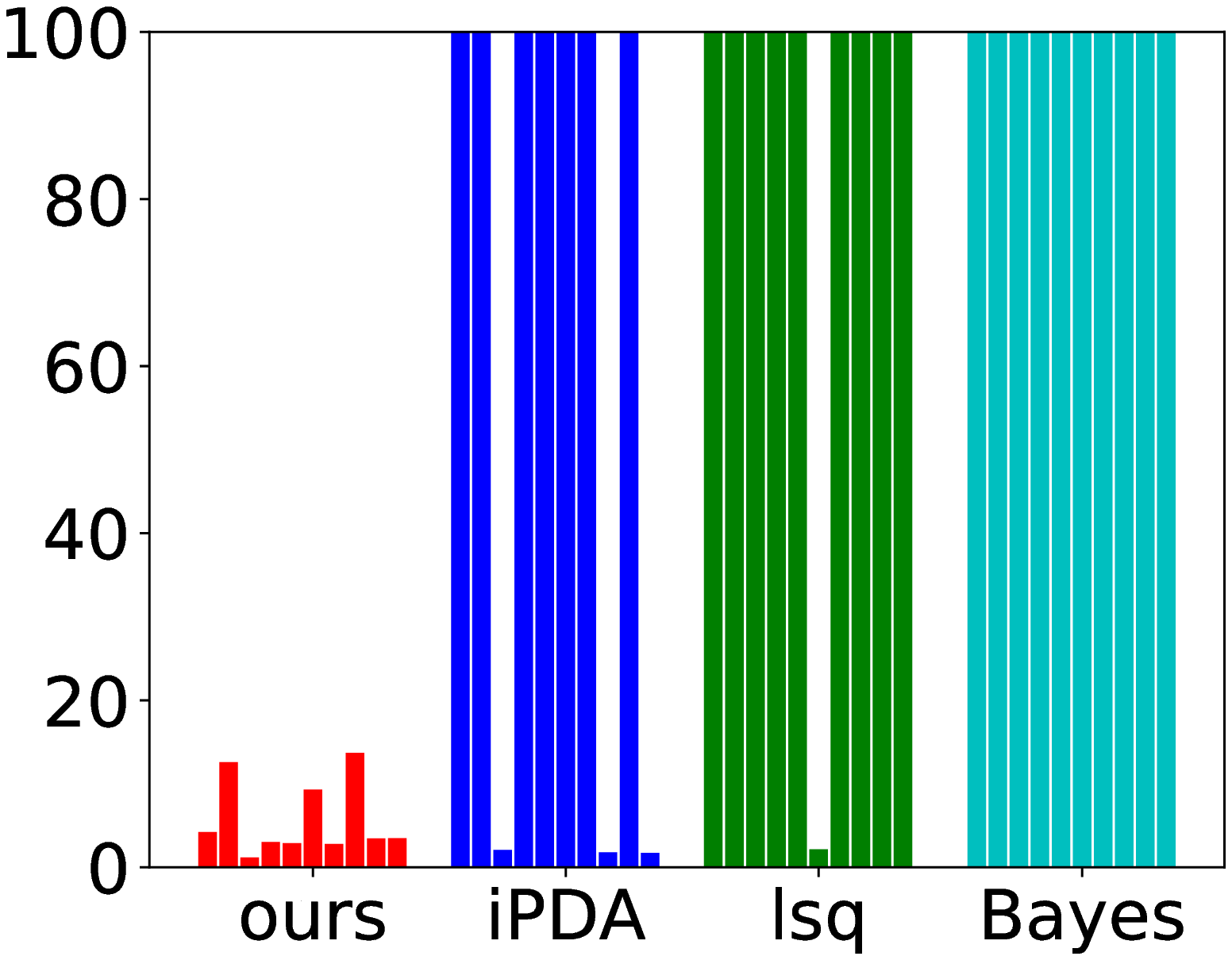}&
    \includegraphics*[width=0.235\linewidth,height=0.12\linewidth]{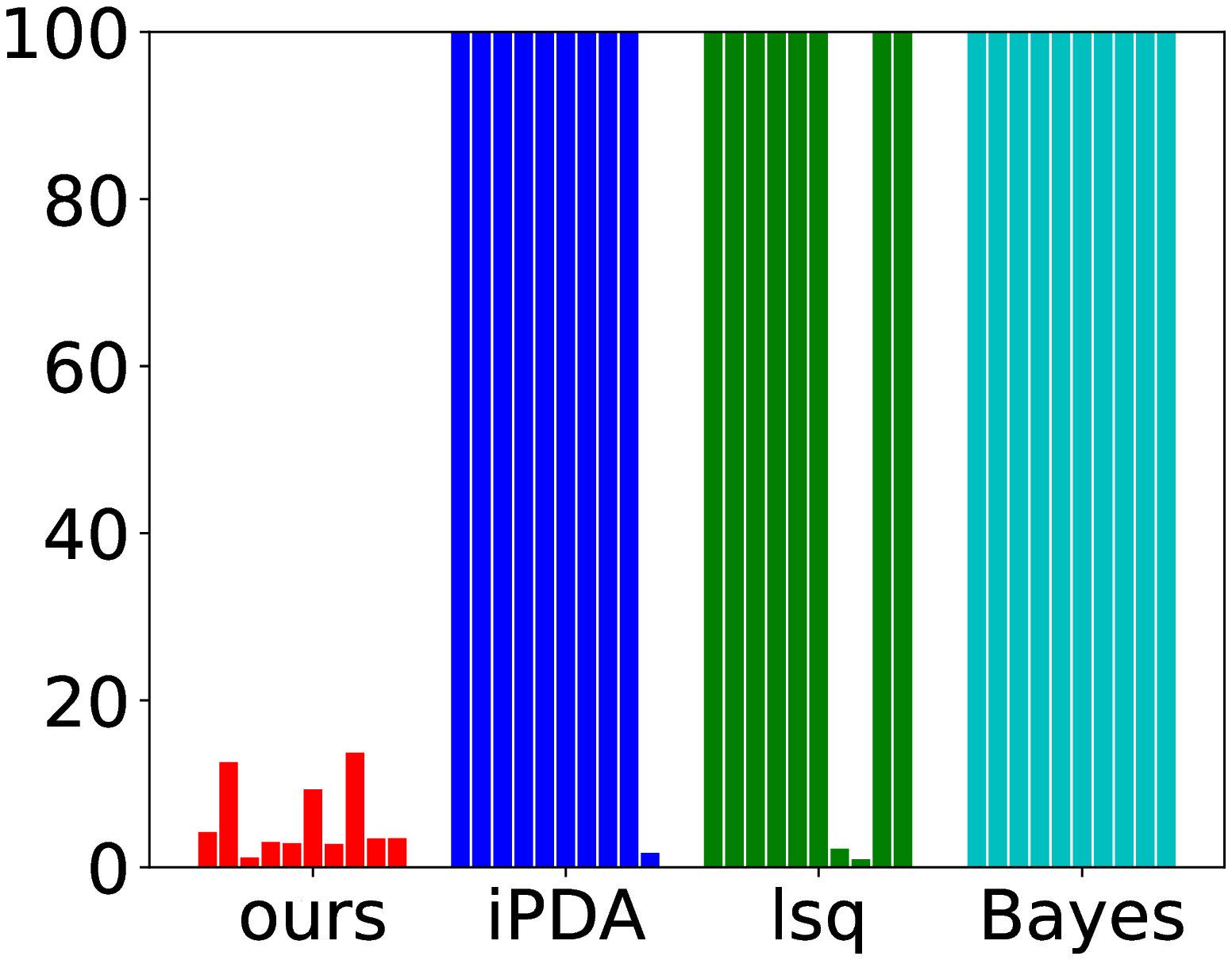}\\
     \hspace{3ex}\rotatebox{90}{\hspace{02.5ex} \small{$\theta_0$ error} } &       
    \hspace{1ex}\includegraphics*[width=0.225\linewidth,height=0.12\linewidth]{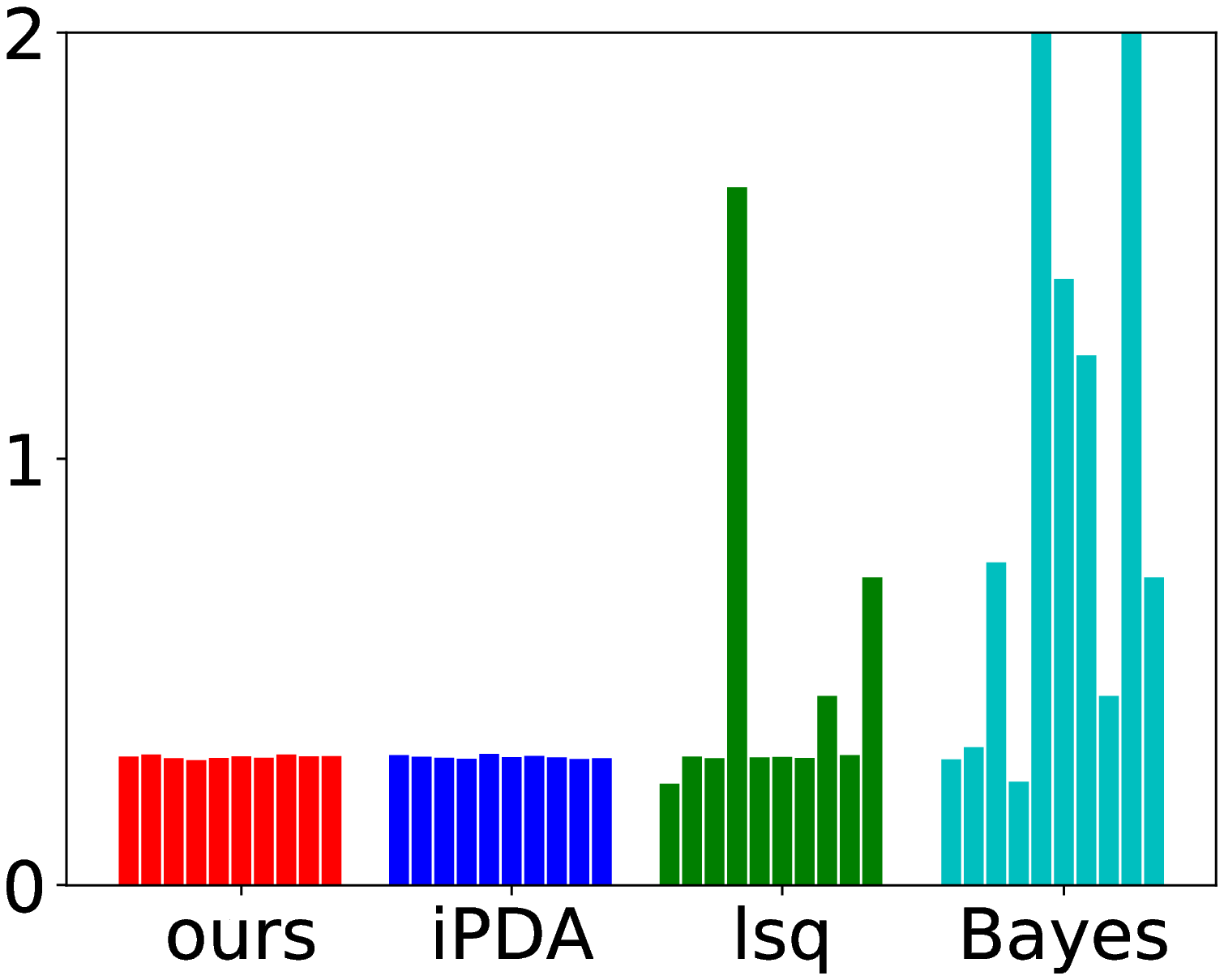}&
    \hspace{1ex}\includegraphics*[width=0.225\linewidth,height=0.12\linewidth]{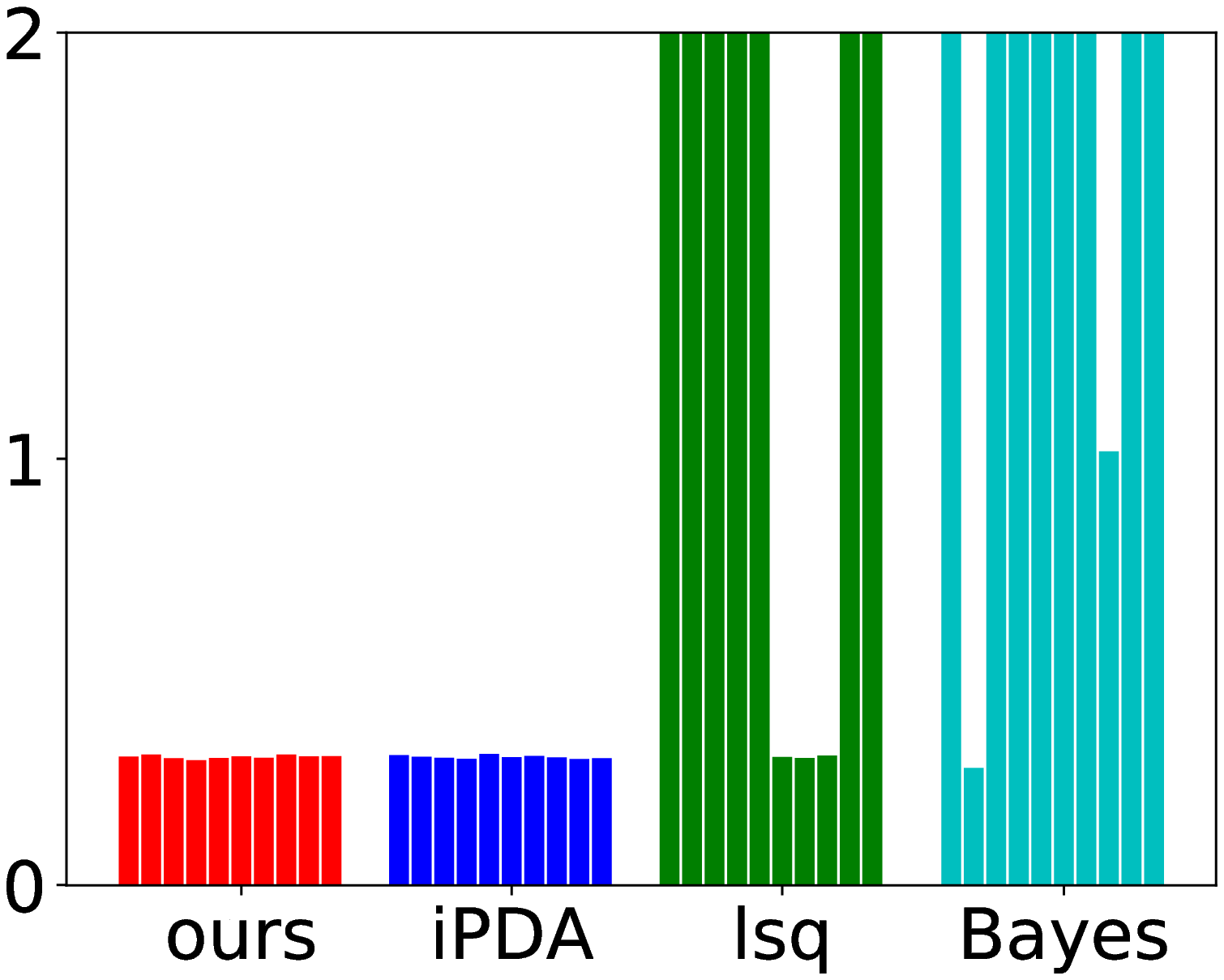}&
    \hspace{1ex}\includegraphics*[width=0.225\linewidth,height=0.12\linewidth]{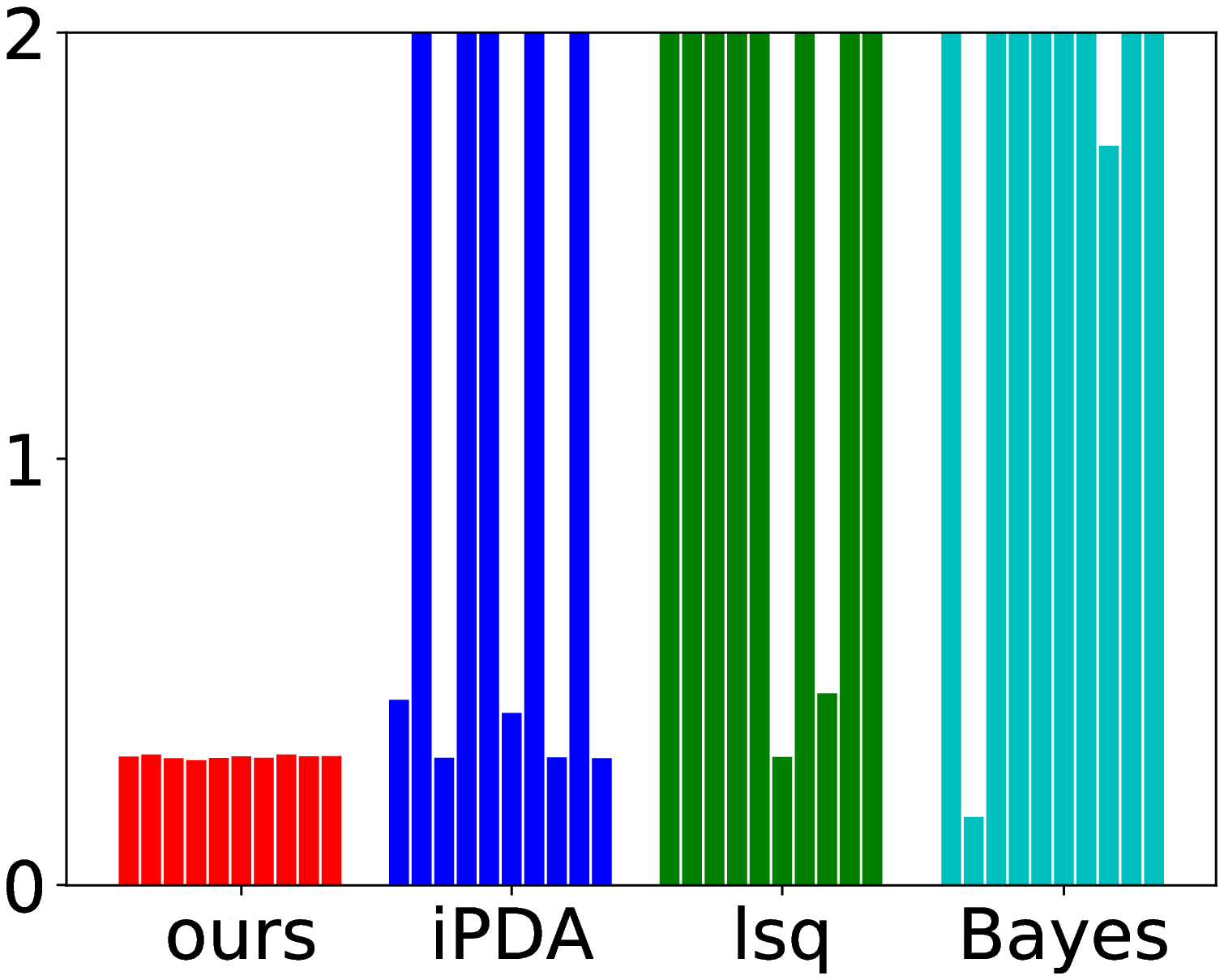}&
    \hspace{1ex}\includegraphics*[width=0.225\linewidth,height=0.12\linewidth]{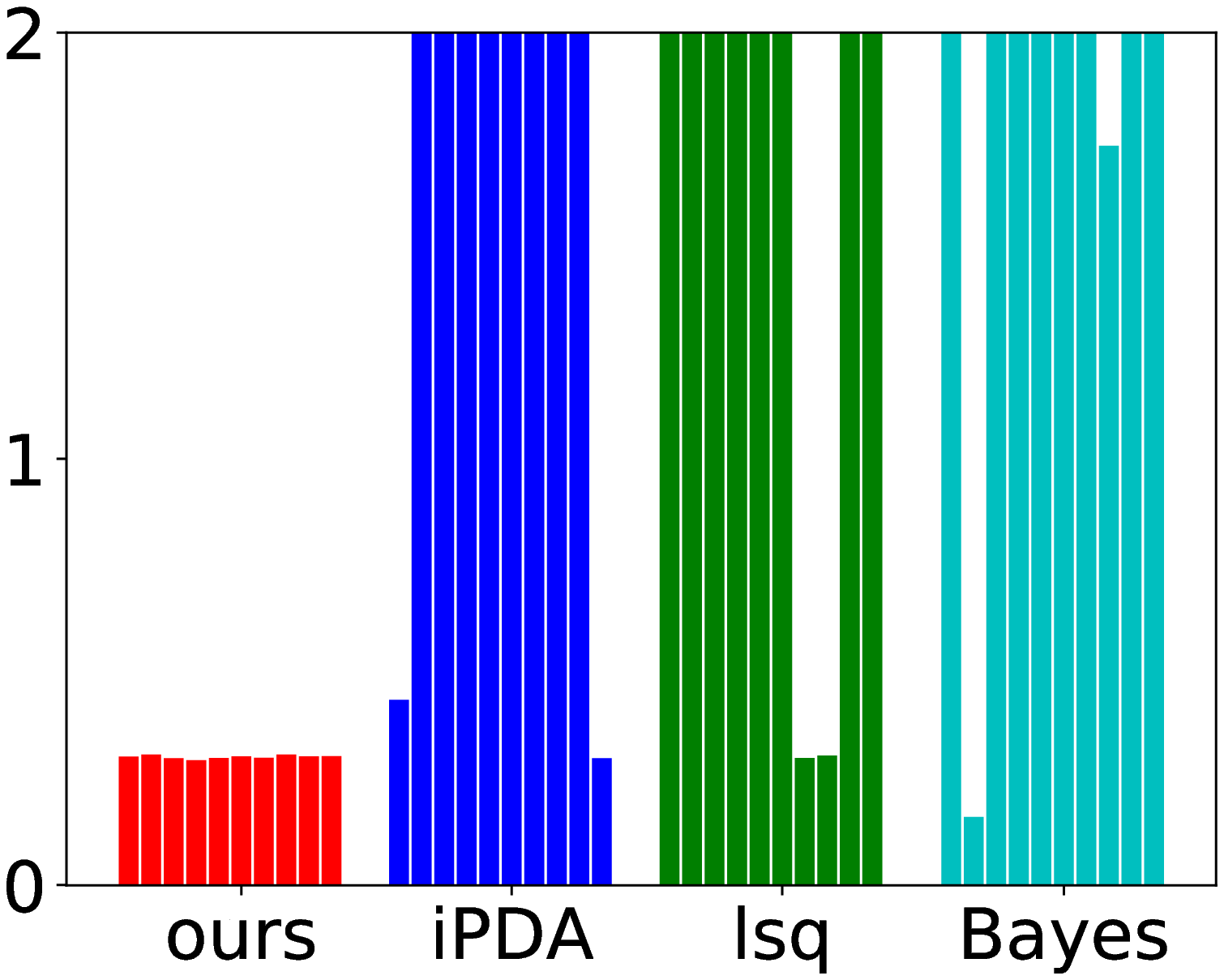}\\
     \hspace{3ex}\rotatebox{90}{\hspace{2.5ex} \small{$\theta_1$ error} } &       
    \hspace{1ex}\includegraphics*[width=0.225\linewidth,height=0.12\linewidth]{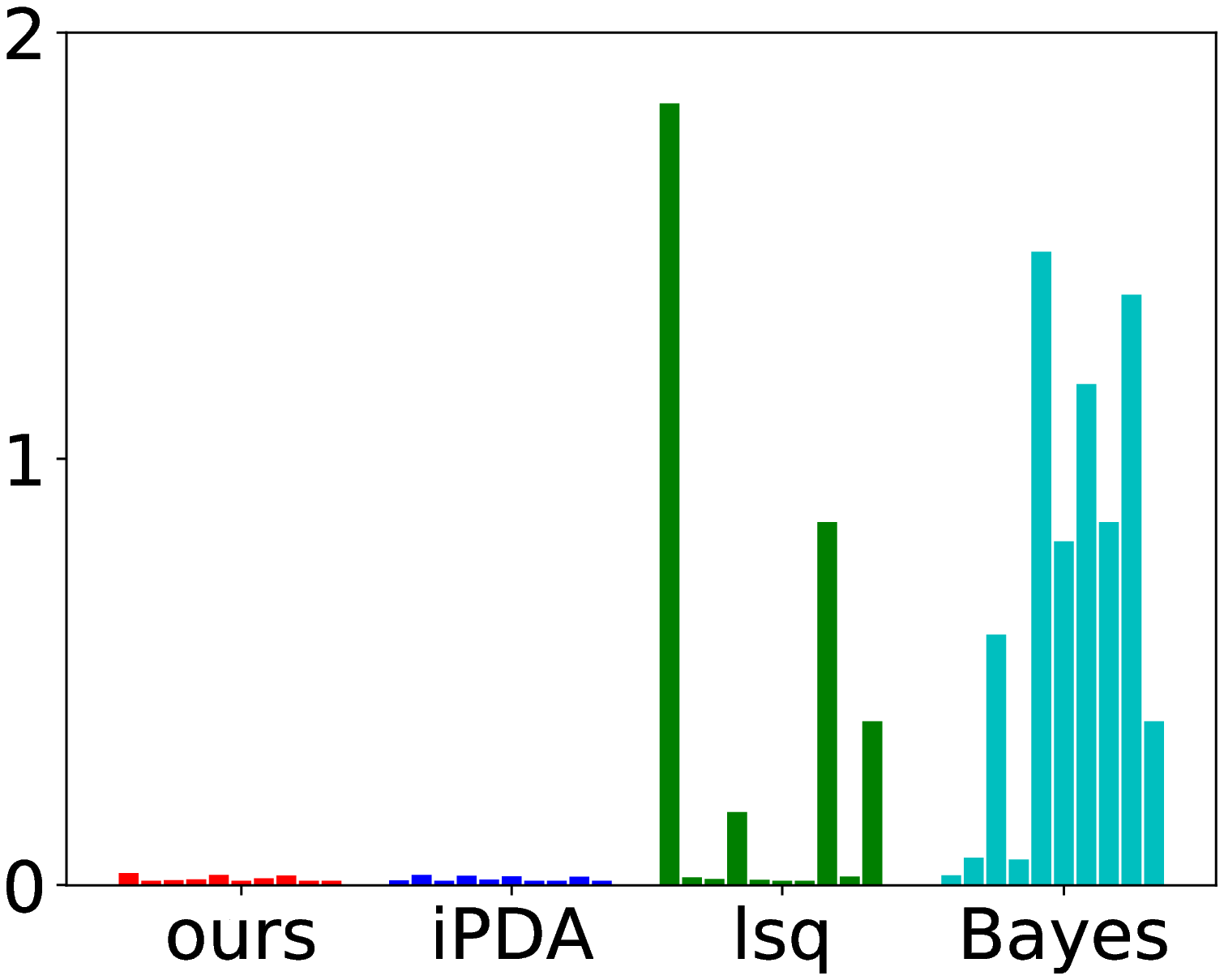}&
    \hspace{1ex}\includegraphics*[width=0.225\linewidth,height=0.12\linewidth]{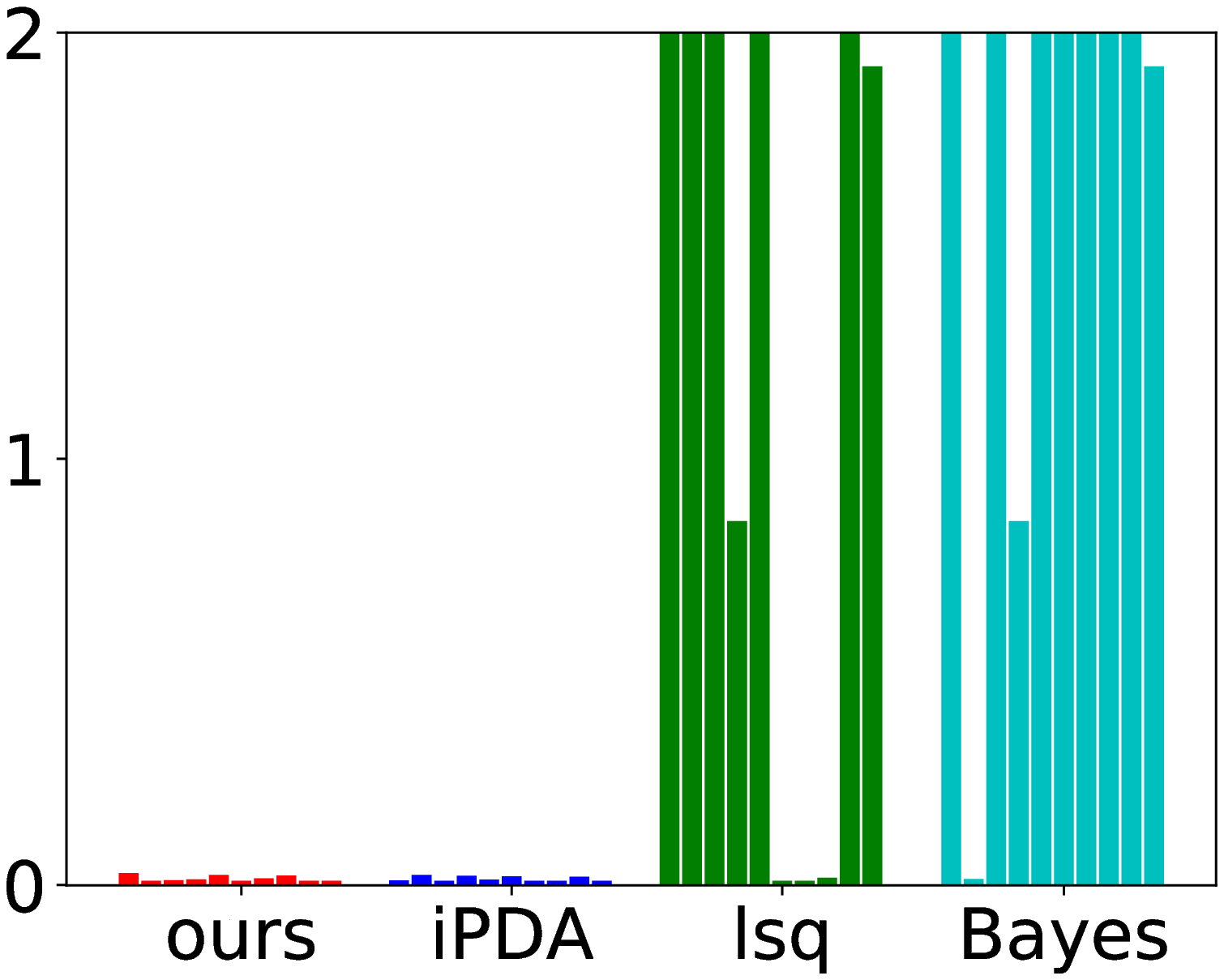}&
    \hspace{1ex}\includegraphics*[width=0.225\linewidth,height=0.12\linewidth]{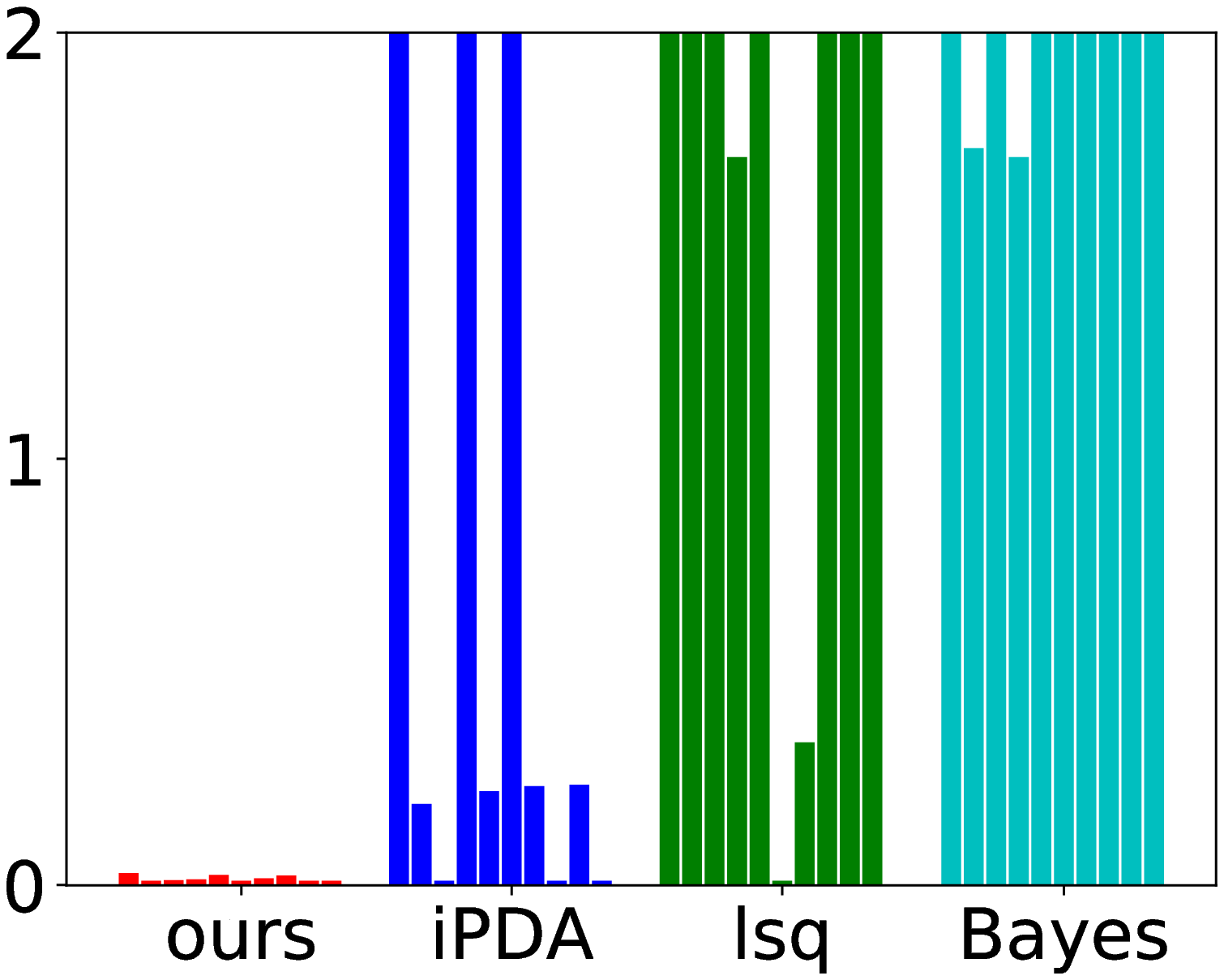}&
    \hspace{1ex}\includegraphics*[width=0.225\linewidth,height=0.12\linewidth]{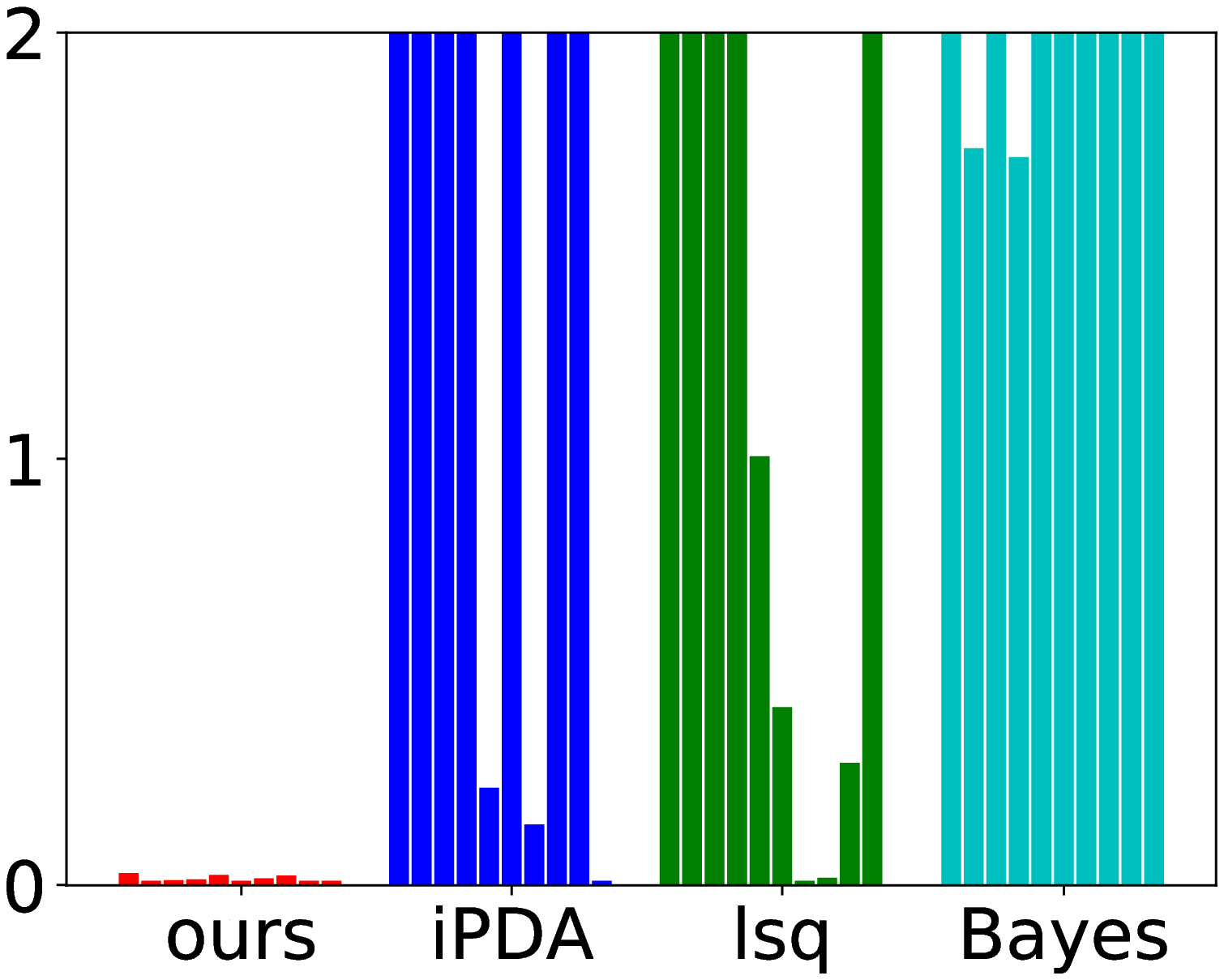}\\
     \hspace{3ex}\rotatebox{90}{\hspace{2.5ex} \small{$\theta_2$ error} } &       
    \hspace{1ex}\includegraphics*[width=0.225\linewidth,height=0.12\linewidth]{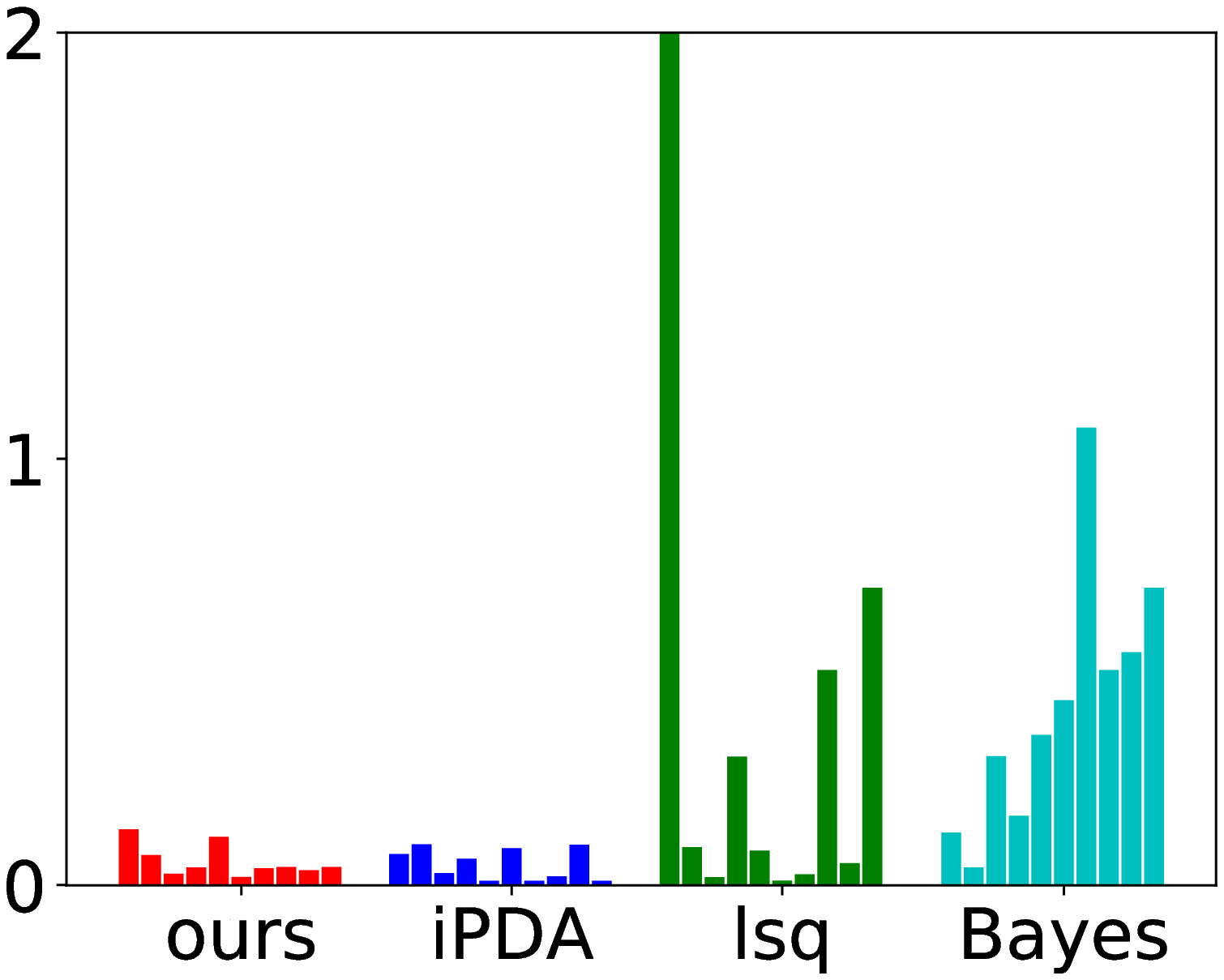}&
    \hspace{1ex}\includegraphics*[width=0.225\linewidth,height=0.12\linewidth]{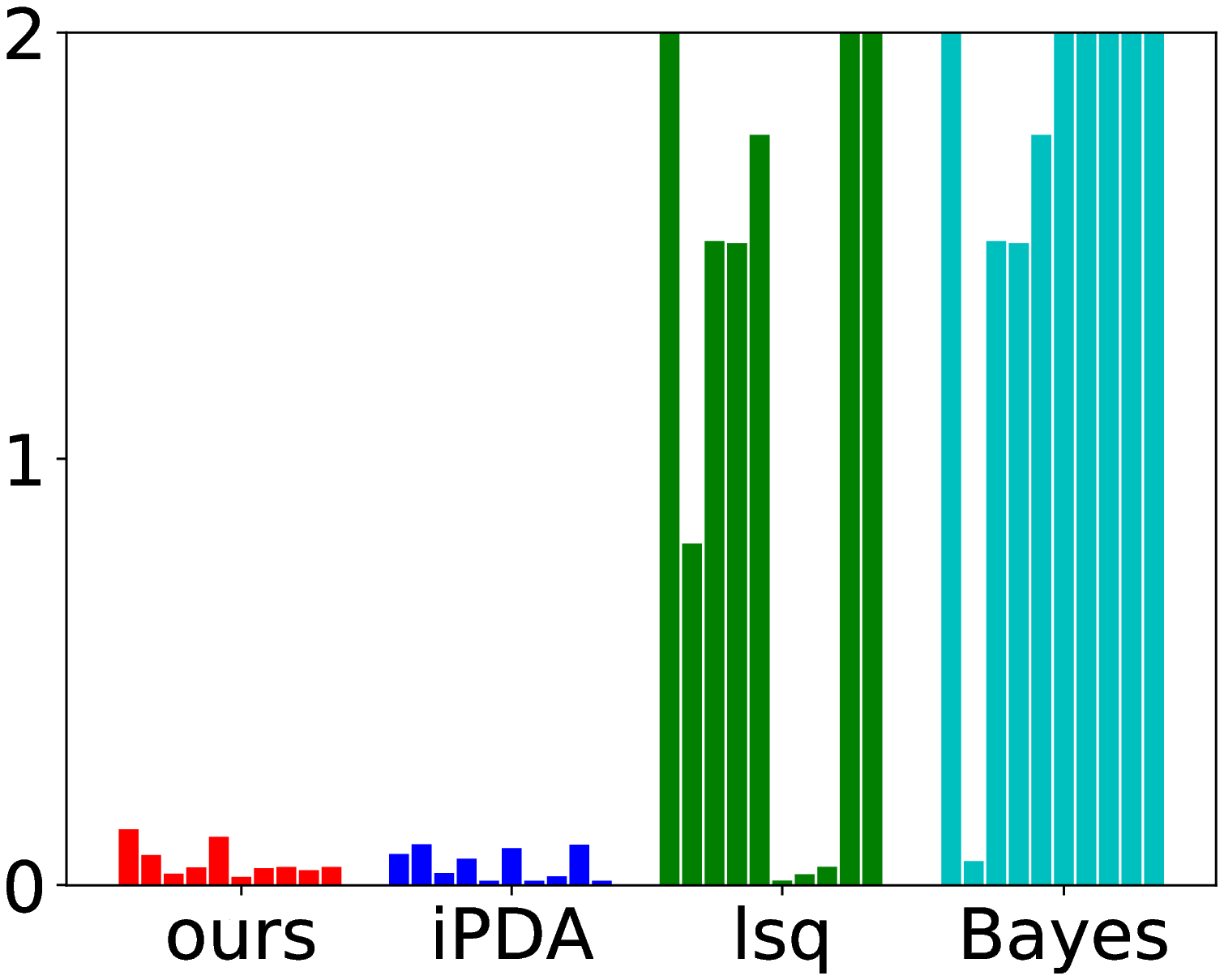}&
    \hspace{1ex}\includegraphics*[width=0.225\linewidth,height=0.12\linewidth]{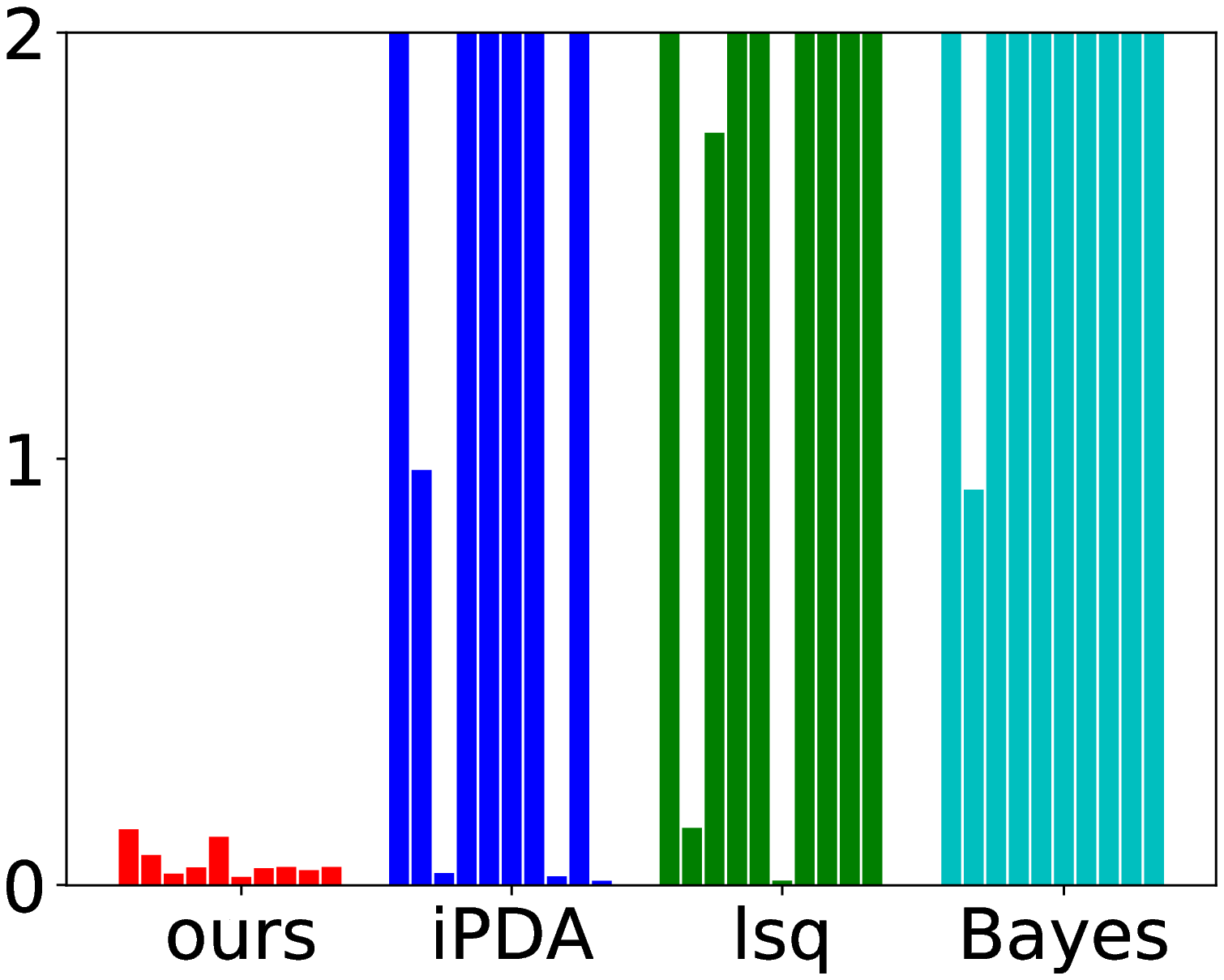}&
    \hspace{1ex}\includegraphics*[width=0.225\linewidth,height=0.12\linewidth]{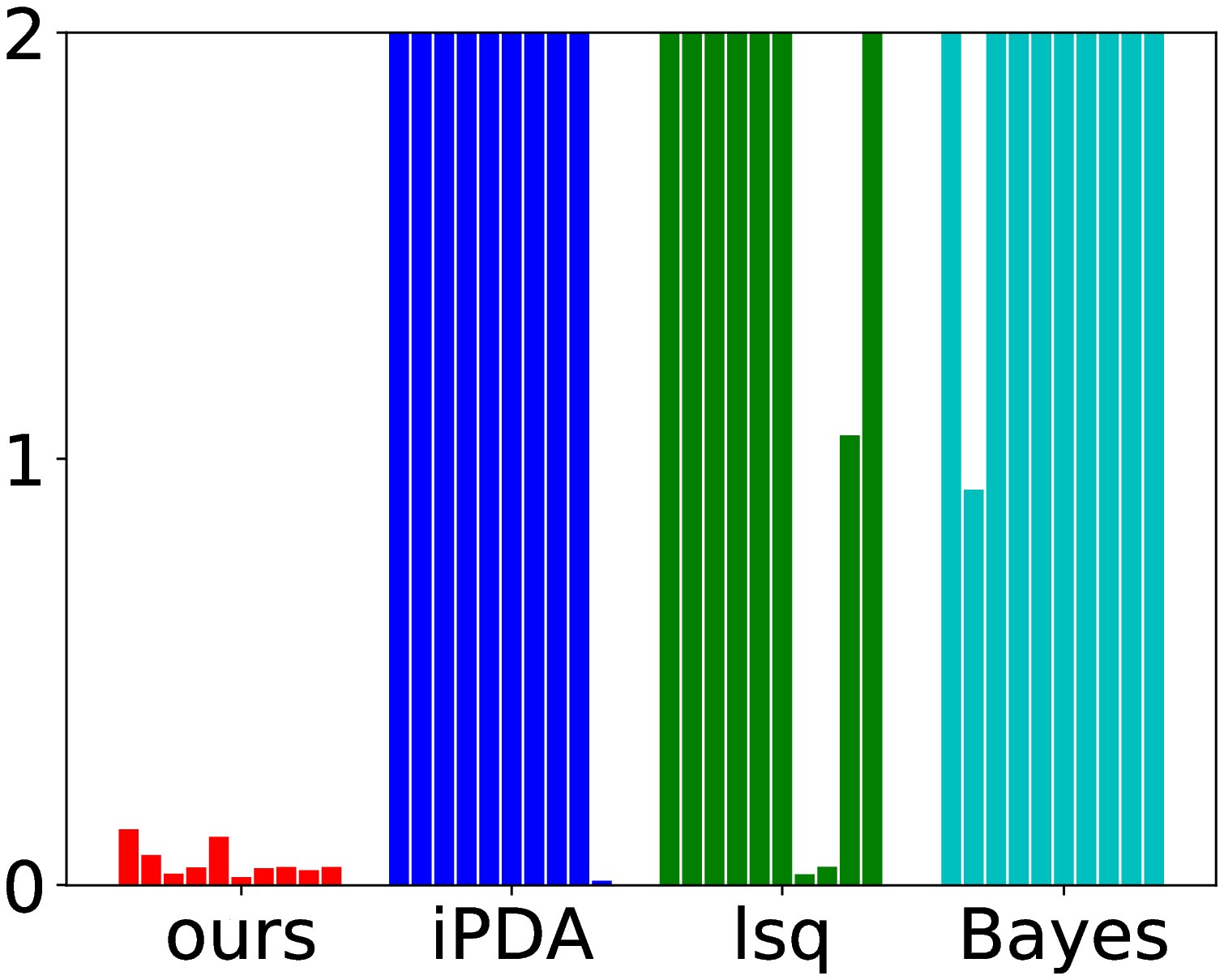}\\[-1ex]
  \end{tabular}
  \vspace{-2ex}
  \caption{Comparison with other methods. We create initializations by adding Gaussian noise of variance $\sigma_{\btheta}^2$ to the true parameters. We create $10$ sets of observations and initializations per each $\sigma_{\btheta}^2$ and report the errors. Each error bar corresponds to the error in one of the experiments. BCD-prox performs significantly better.}
  \vspace{-3ex}
  \label{f:comp}
\end{figure*}

\paragraph{Robustness to the hyperparameter $\lambda$.}The only hyperparameter in our algorithm is $\lambda$. In Fig.~\ref{f:robust}, we set $\lambda$ in turn to a set of values from $0$ to $20$, run our algorithm, and report the results after convergence. In both models, we generate observations with the variance $\sigma^2=0.5$. Because of randomness included in creating noisy observations, we create $10$ sets of observations, run our algorithm once for each of them, and report the mean in Fig.~\ref{f:robust}. We also show the standard deviation in prediction errors, but not in parameter values (to avoid clutter).

In Fig.~\ref{f:robust} we report the prediction error and the estimated parameters for each value of $\lambda$. The true values for the FitzHugh--Nagumo are $\theta_0=.5, \theta_1=.3,$ and $\theta_2=3$. For the Lotka--Volterra model, the true values are $\theta_0=2, \theta_1=1, \theta_2=4$, and $\theta_3=1$.
 
We see in Fig.~\ref{f:robust} that for $\lambda>0$, BCD-prox correctly finds the parameters and brings the error close to zero. Also, in the range of $\lambda=1$ to $20$, the errors and the estimated parameters remain almost the same. We have found that increasing $\lambda$ to $1\,000$ does not change the estimated parameters. The only disadvantage of increasing $\lambda$ to a large value is that training time increases---as explained above, increasing $\lambda$ is analogous to decreasing the step size in a gradient descent method.  Large $\lambda$ implies that states can change very little from one iteration to another, forcing the algorithm to run longer for convergence. The algorithm, as explained in detail before, does not work well when $\lambda = 0$; in this case, the algorithm stops after a single iteration, with the predicted states far from the clean states.

\vspace{-2ex}
\paragraph{Comparison with other methods (robustness to initialization).} As the first experiment, we compare BCD-prox with three other methods, each of them from a different category. Among the iPDA (spline-based) methods, we use a MATLAB code available online \cite{Ramsay07}, denoted by ``iPDA'' in our experiments. Among the Bayesian approaches, we use an R code available online \cite{Dondelin13}, denoted ``Bayes'' in our experiments. We also implement a method that uses the iterative least square approach, denoted ``lsq'' in our experiments. This method considers the parameters and the initial state as the unknown variables. To implement lsq, we use the Python LMFIT package \cite{Newville14}. The variance of the noisy observations is $\sigma^2=0.5$.

All methods including ours need an initial guess for the unknown parameters. We add Gaussian noise with mean $0$ and variance $\sigma_{\theta}^2$ to the true parameter and use the result to initialize the methods. Fig.~\ref{f:comp} shows the results for the R\" ossler model and the supplementary material contains the results on the FitzHugh--Nagumo model. We change the variance from $\sigma_{\theta}^2=1$ to $20$. Since there is randomness in both initialization and observation, we repeat the experiment $10$ times. Note that the comparisons are fair, with the same observations and initializations used across all methods. 

In Fig.~\ref{f:comp}, each of the bars corresponds to the prediction or parameter error for one of the methods in one of the experiments. Hence there are $10$ error bars for each of the methods in each plot. We set $\lambda=1$ in BCD-prox for all the experiments. For the other methods, we chose the best hyperparameters that we could determine after careful experimentation.

The first point in Fig.~\ref{f:comp} is that BCD-prox is robust with respect to the initialization, while the other methods are not. The total number of experiments per method is $80$ ($40$ for the FitzHugh--Nagumo and $40$ for R\" ossler). The prediction error of BCD-prox exceeds $100$ in $4$ experiments. The prediction error of iPDA (the second best method after ours) exceeds $100$ in $39$ experiments (nearly half the experiments). For lsq and Bayes, the errors are substantially worse.

\begin{figure*}[!t]
  \centering
  \begin{tabular}{@{}l@{}c@{\hspace{1ex}}c@{\hspace{0ex}}c@{\hspace{0ex}}c@{}c@{}c@{}}
    	 & & \small{prediction error} & \small{$\theta_0$ error} & \small{$\theta_1$ error} & \small{$\theta_2$ error} & \small{$\theta_3$ error} \\[-.5ex]
     \rotatebox{90}{\hspace{5ex}  \small{$T=20$ }} &
    \rotatebox{90}{\hspace{5ex}  \small{$\sigma^2=1$} } &
	\includegraphics*[width=0.19\linewidth]{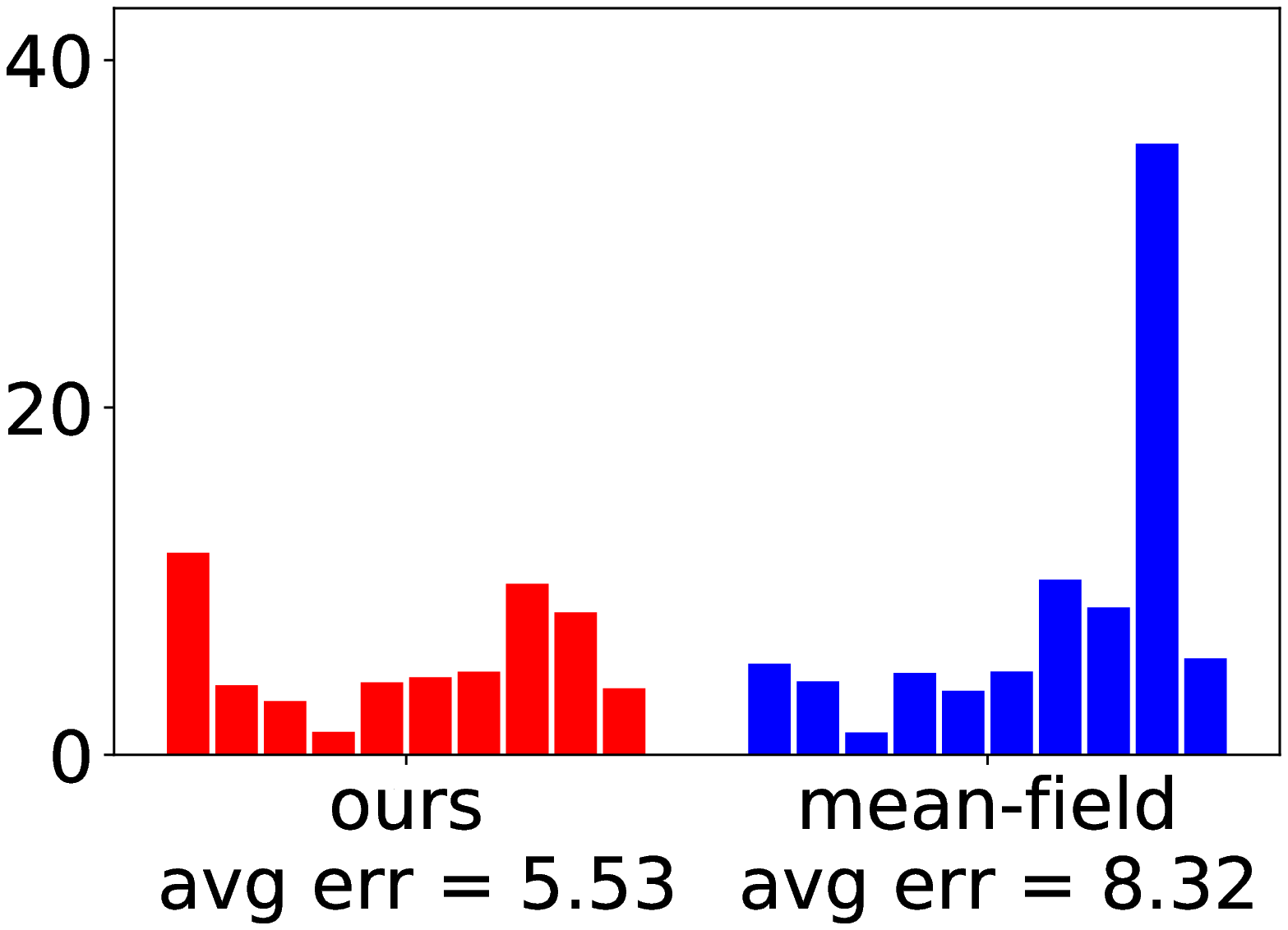}&
	 \includegraphics*[width=0.19\linewidth]{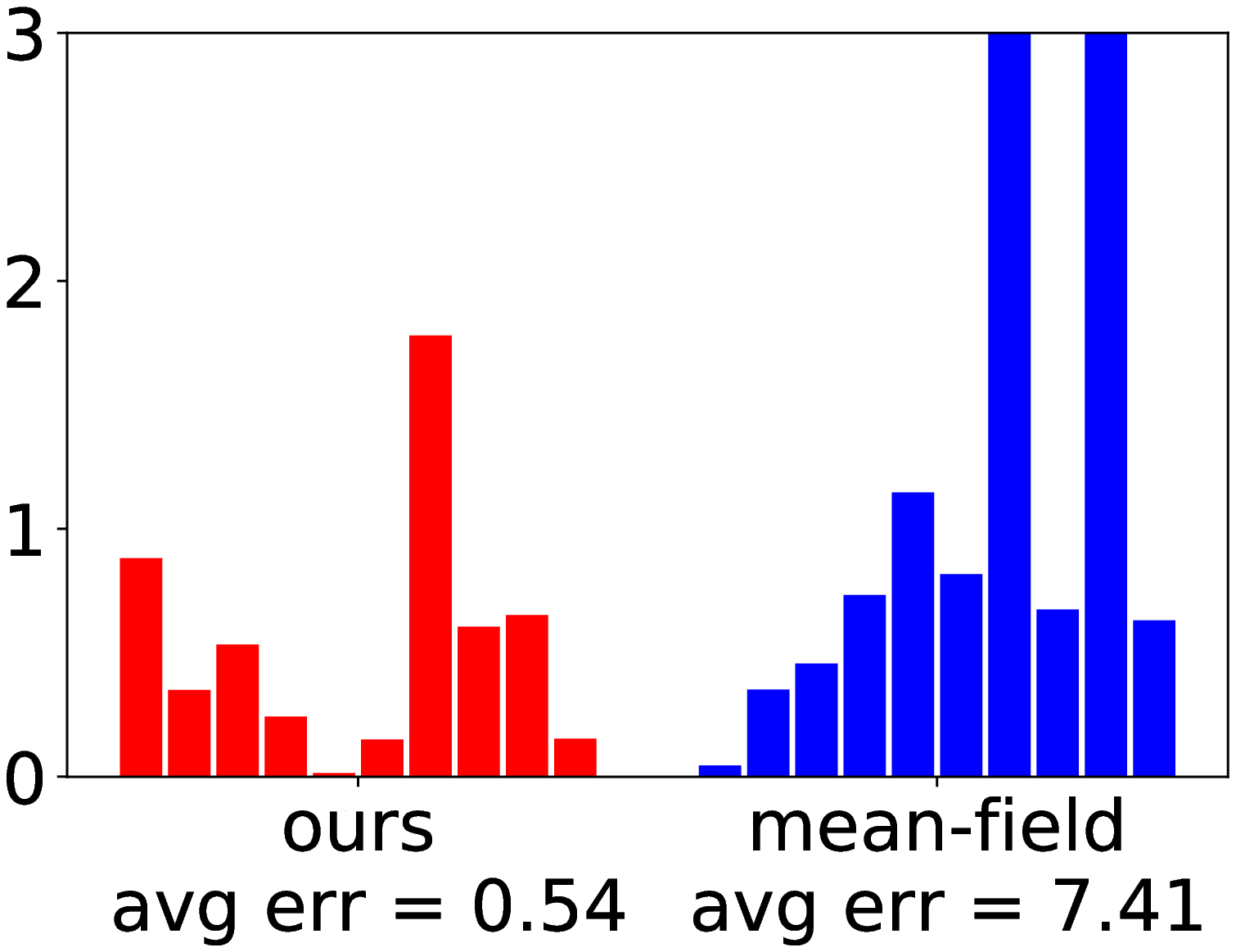}&
	 \includegraphics*[width=0.19\linewidth]{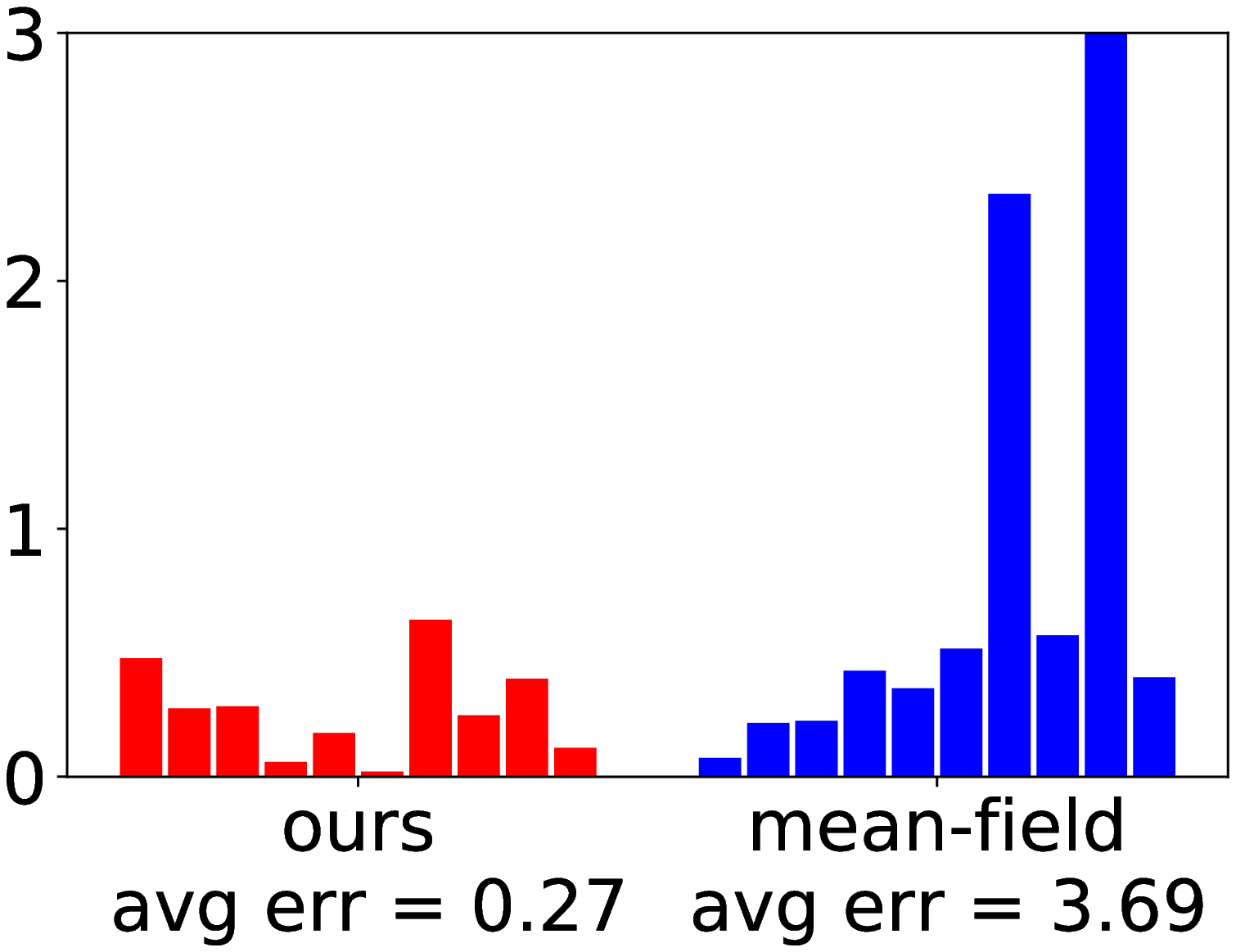}&
	 \includegraphics*[width=0.19\linewidth]{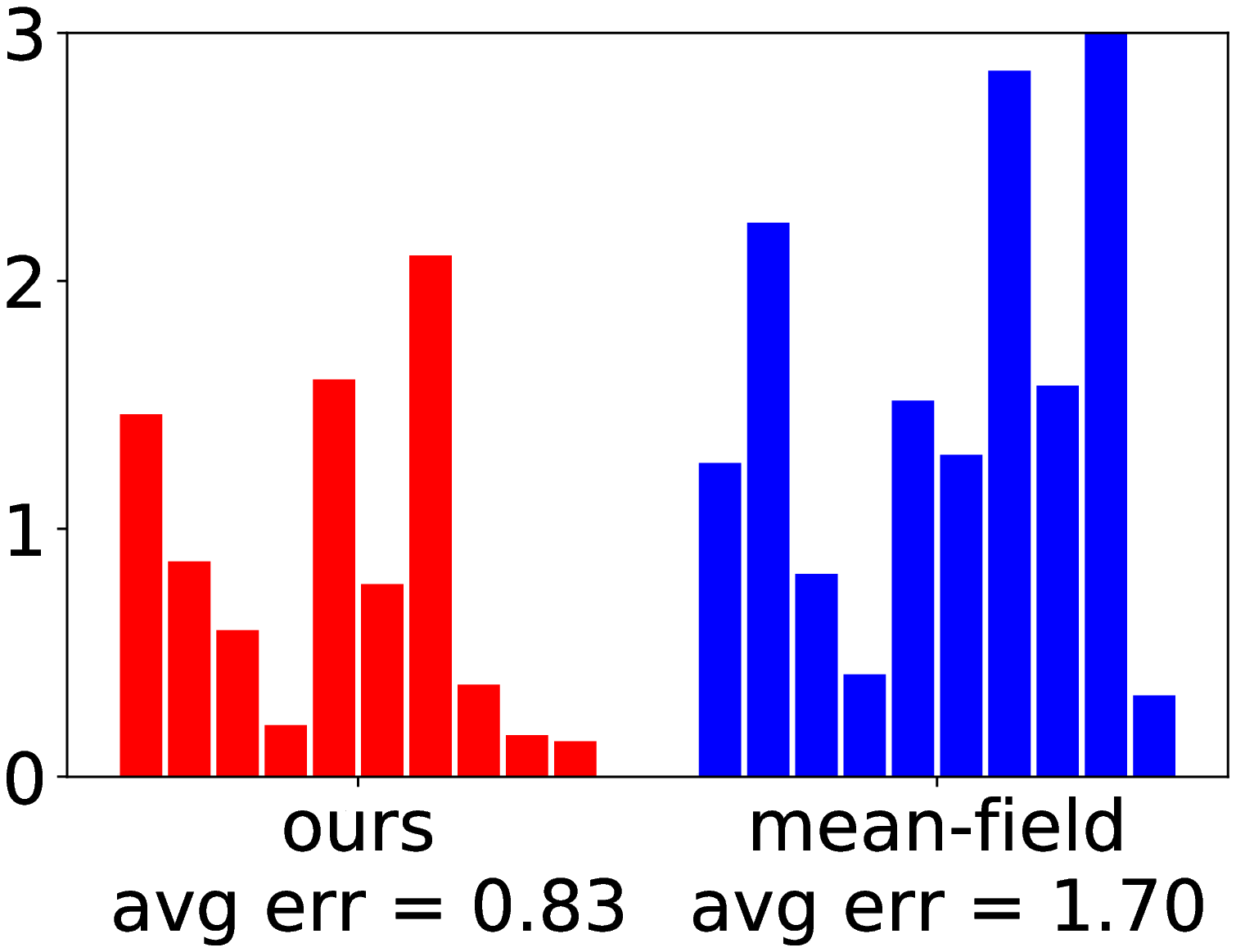}&
	 \includegraphics*[width=0.19\linewidth]{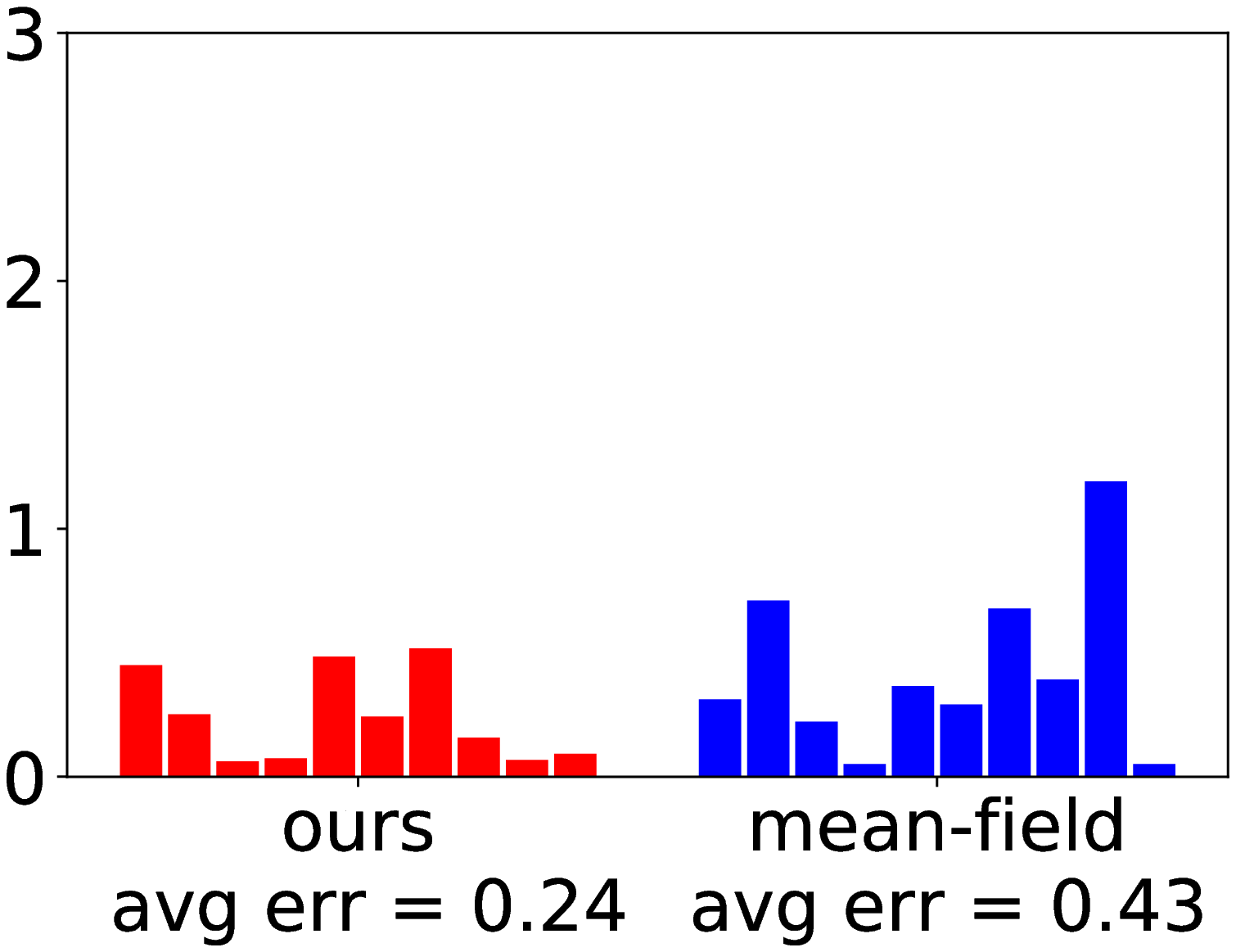}\\
	\rotatebox{90}{\hspace{5ex}  \small{$T=10^4$} } &
    \rotatebox{90}{\hspace{5ex}  \small{$\sigma^2=0.1$} } &
	\includegraphics*[width=0.19\linewidth]{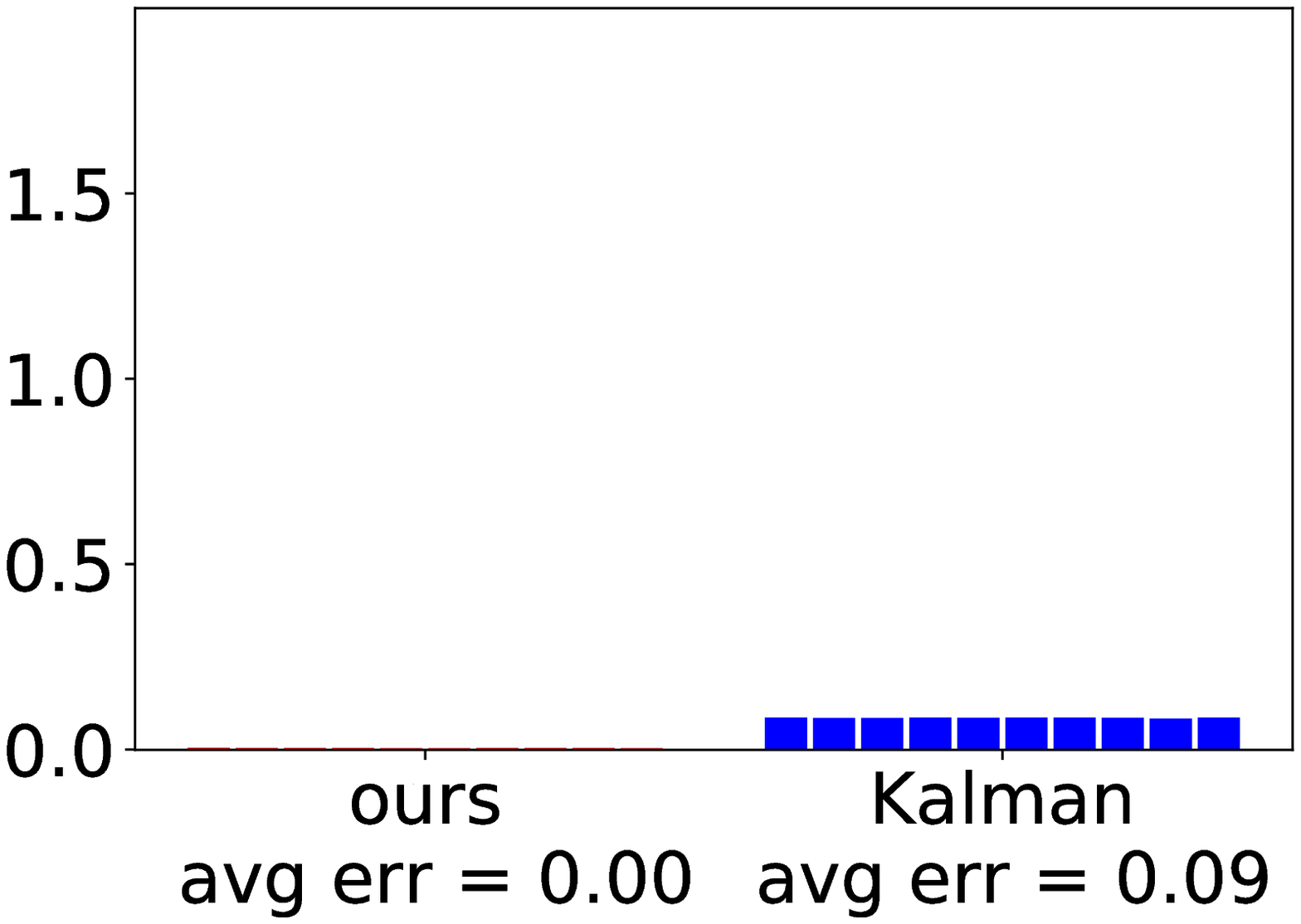}&
	 \includegraphics*[width=0.19\linewidth]{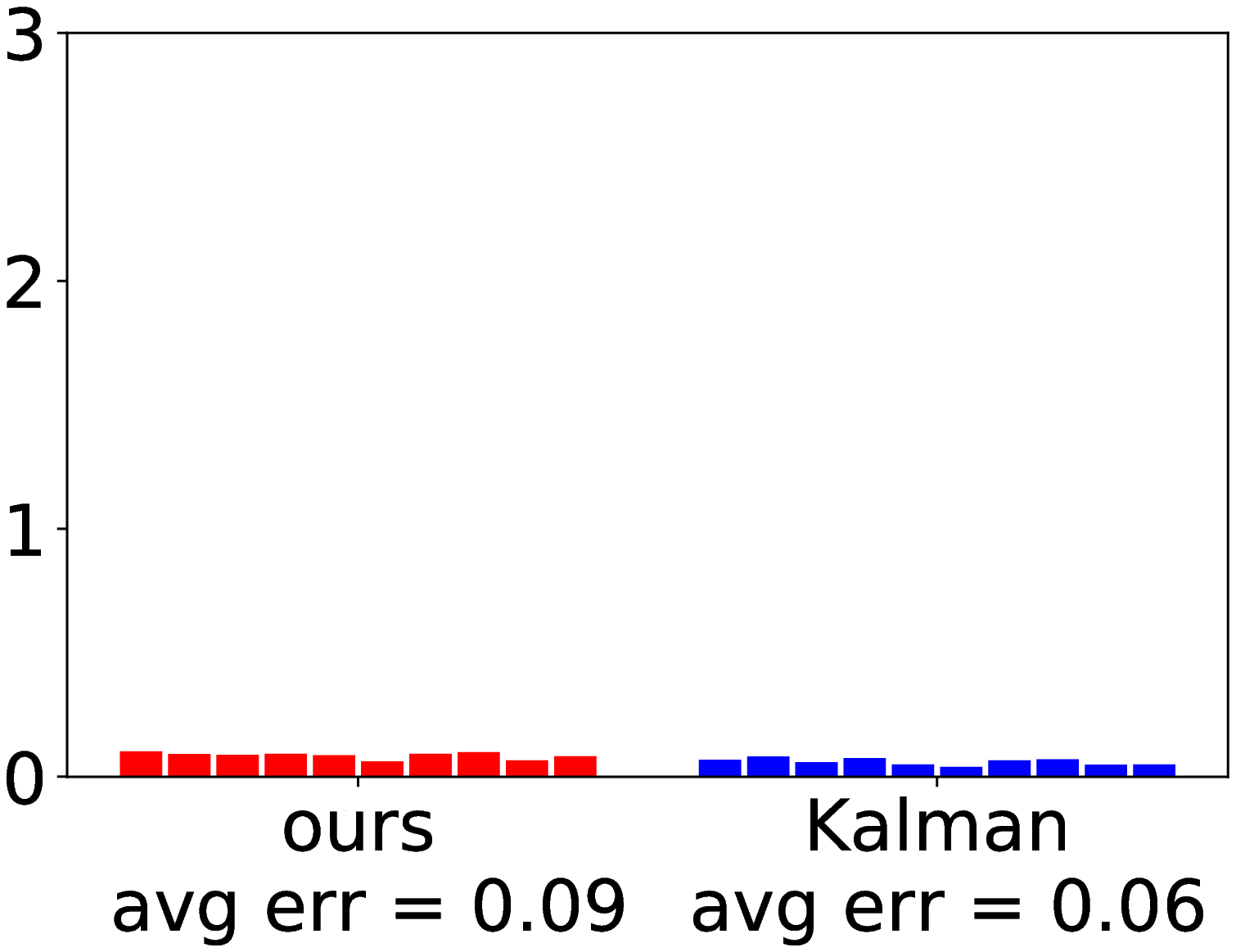}&
	 \includegraphics*[width=0.19\linewidth]{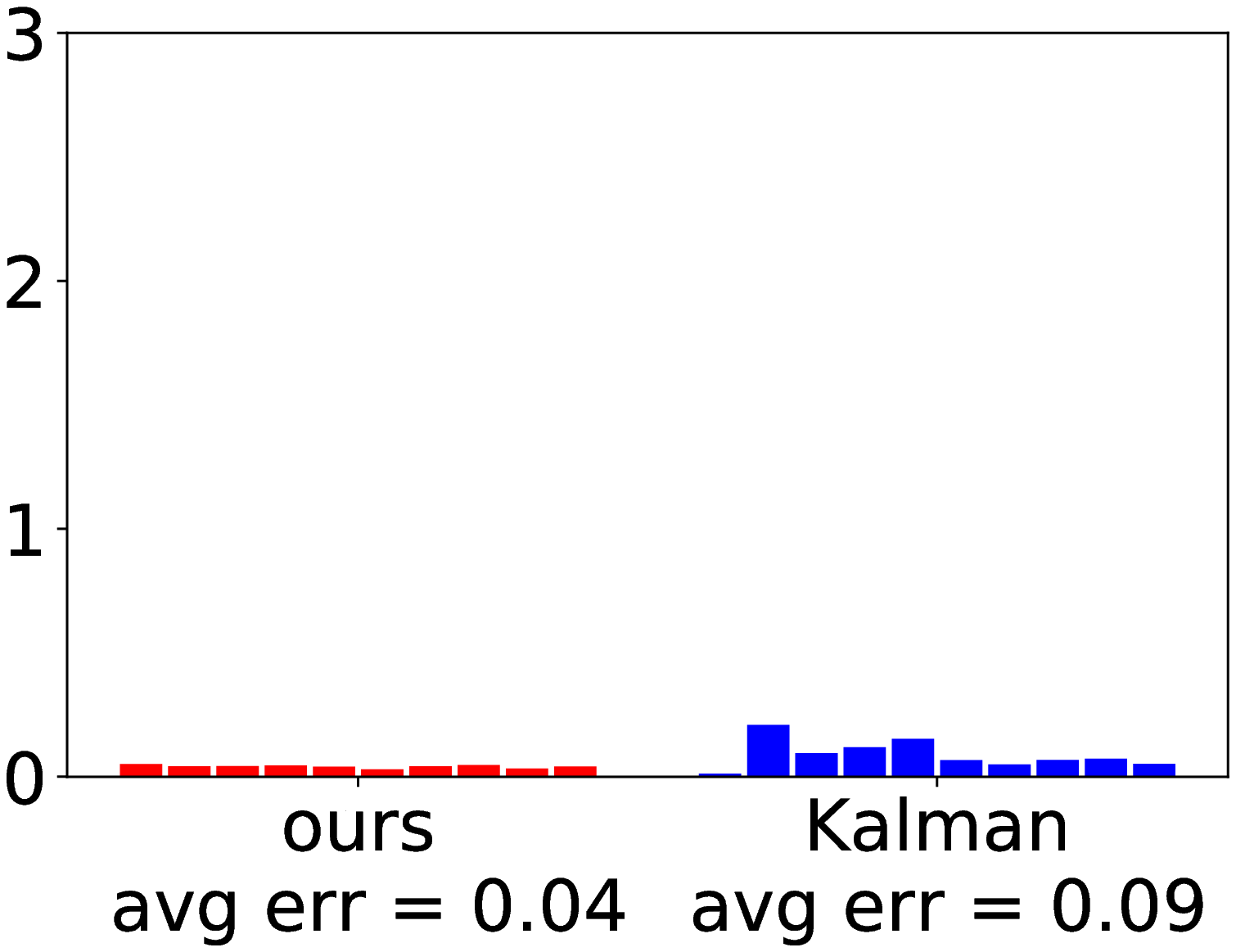}&
	 \includegraphics*[width=0.19\linewidth]{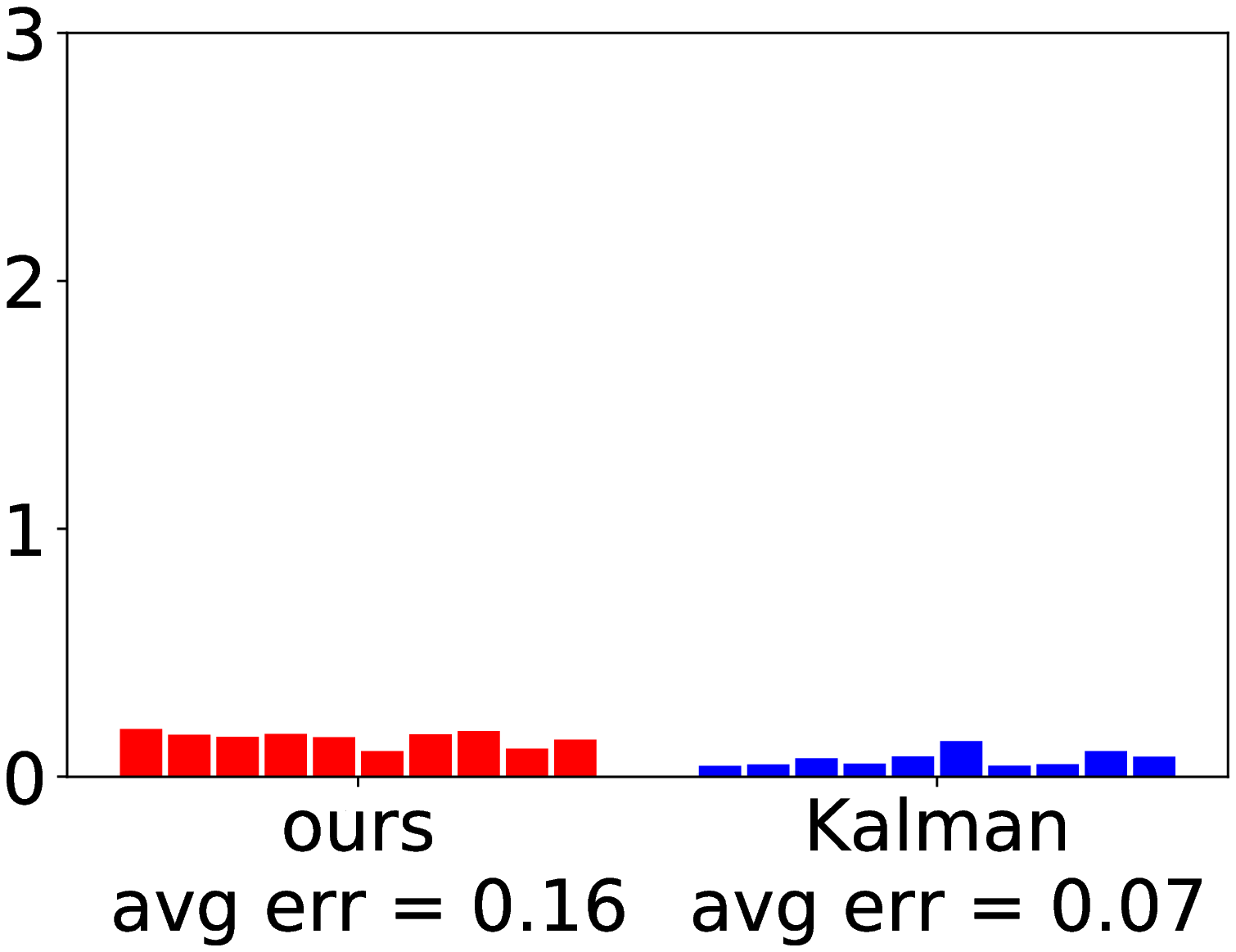}&
	 \includegraphics*[width=0.19\linewidth]{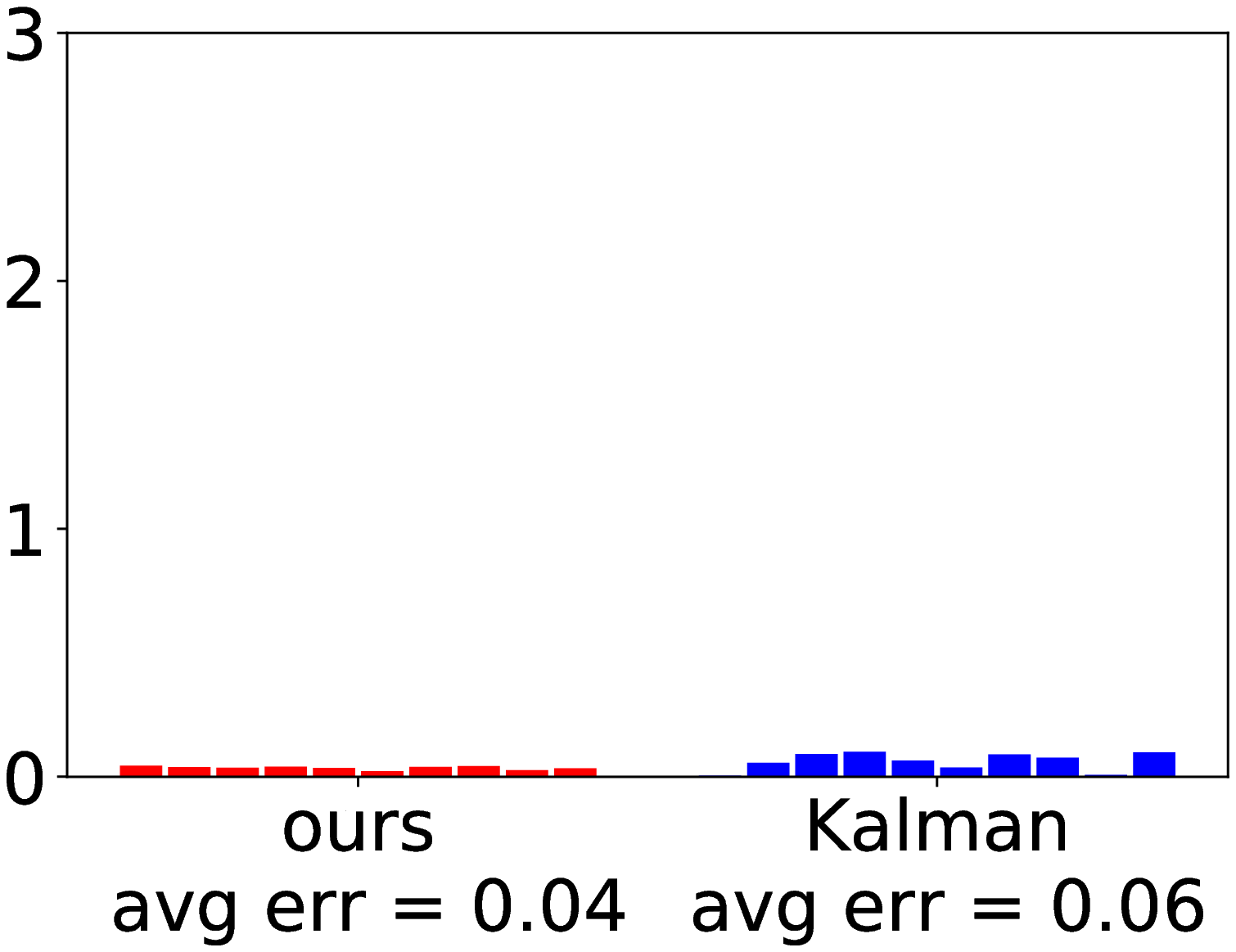}\\
	\rotatebox{90}{\hspace{5ex}  \small{$T=10^4$} } &
    \rotatebox{90}{\hspace{5ex}  \small{$\sigma^2=1.5$} } &
	\includegraphics*[width=0.19\linewidth]{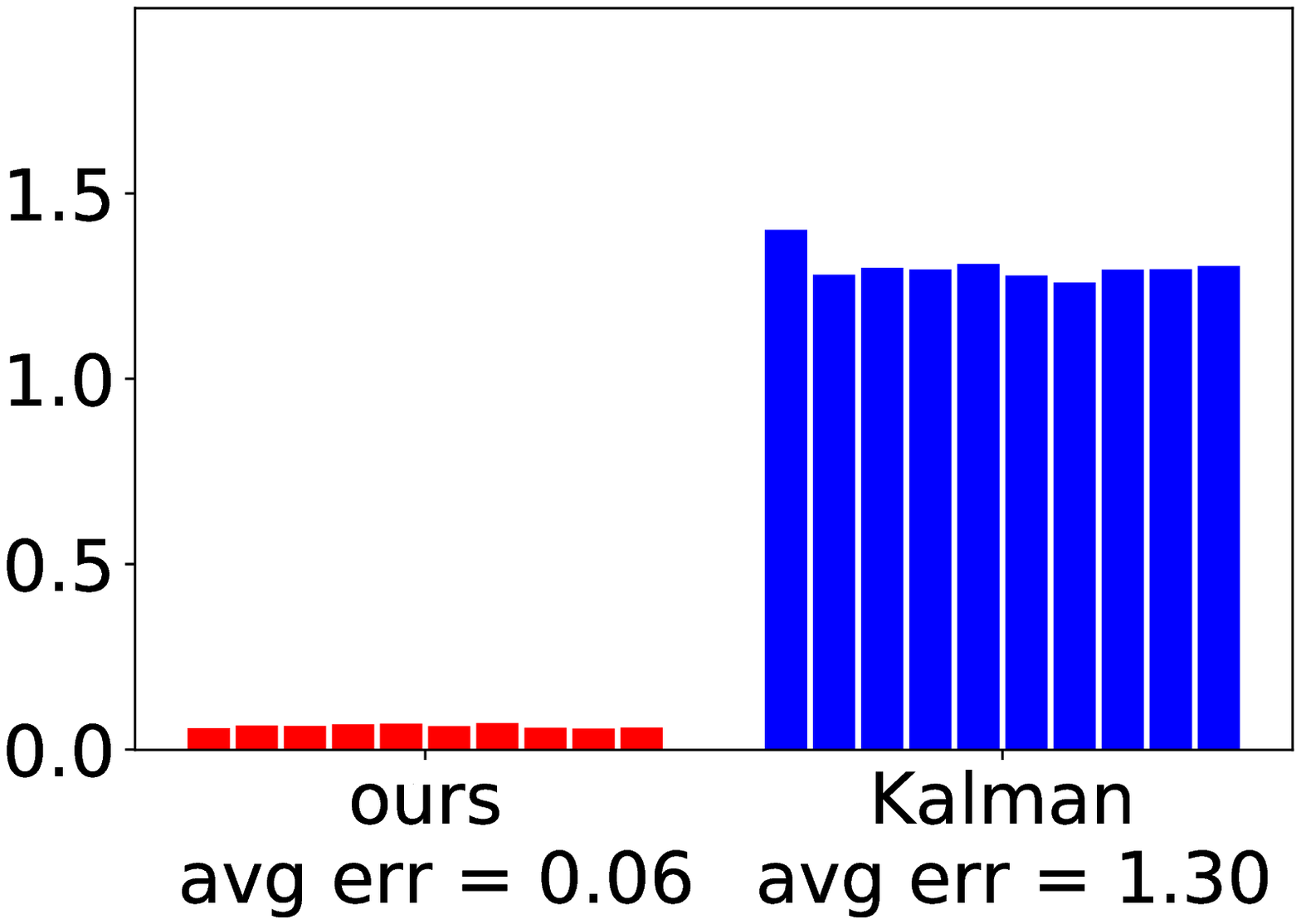}&
	 \includegraphics*[width=0.19\linewidth]{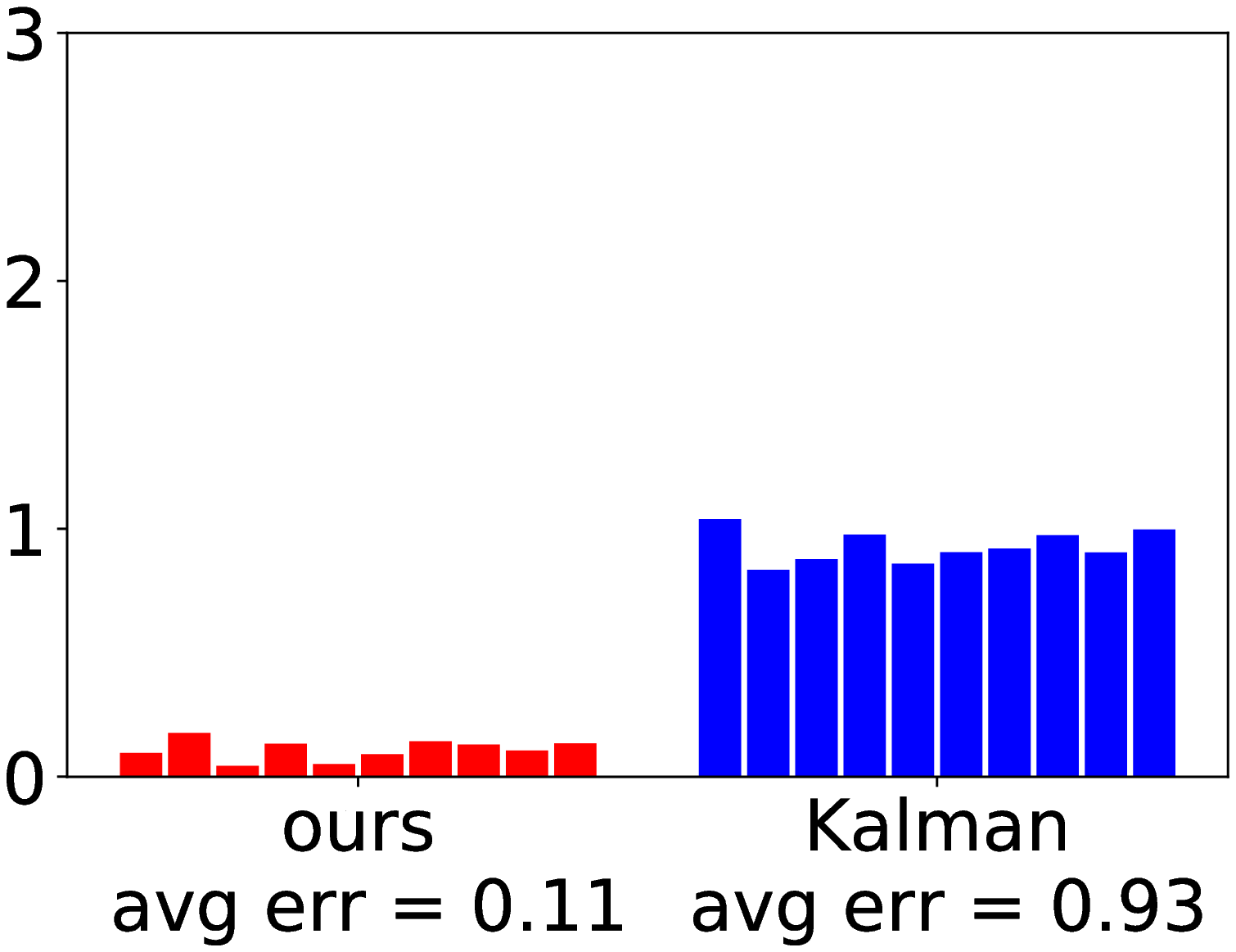}&
	 \includegraphics*[width=0.19\linewidth]{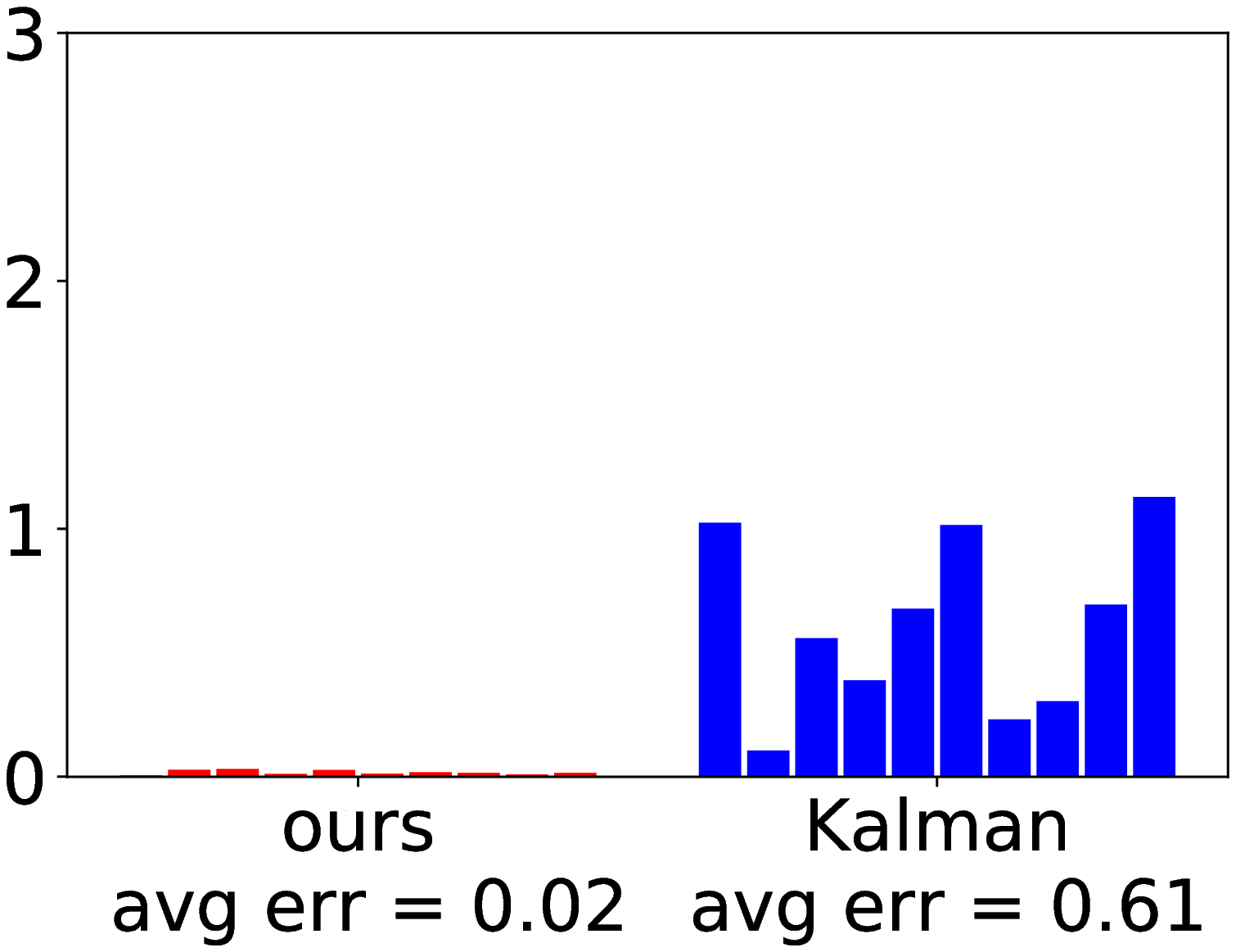}&
	 \includegraphics*[width=0.19\linewidth]{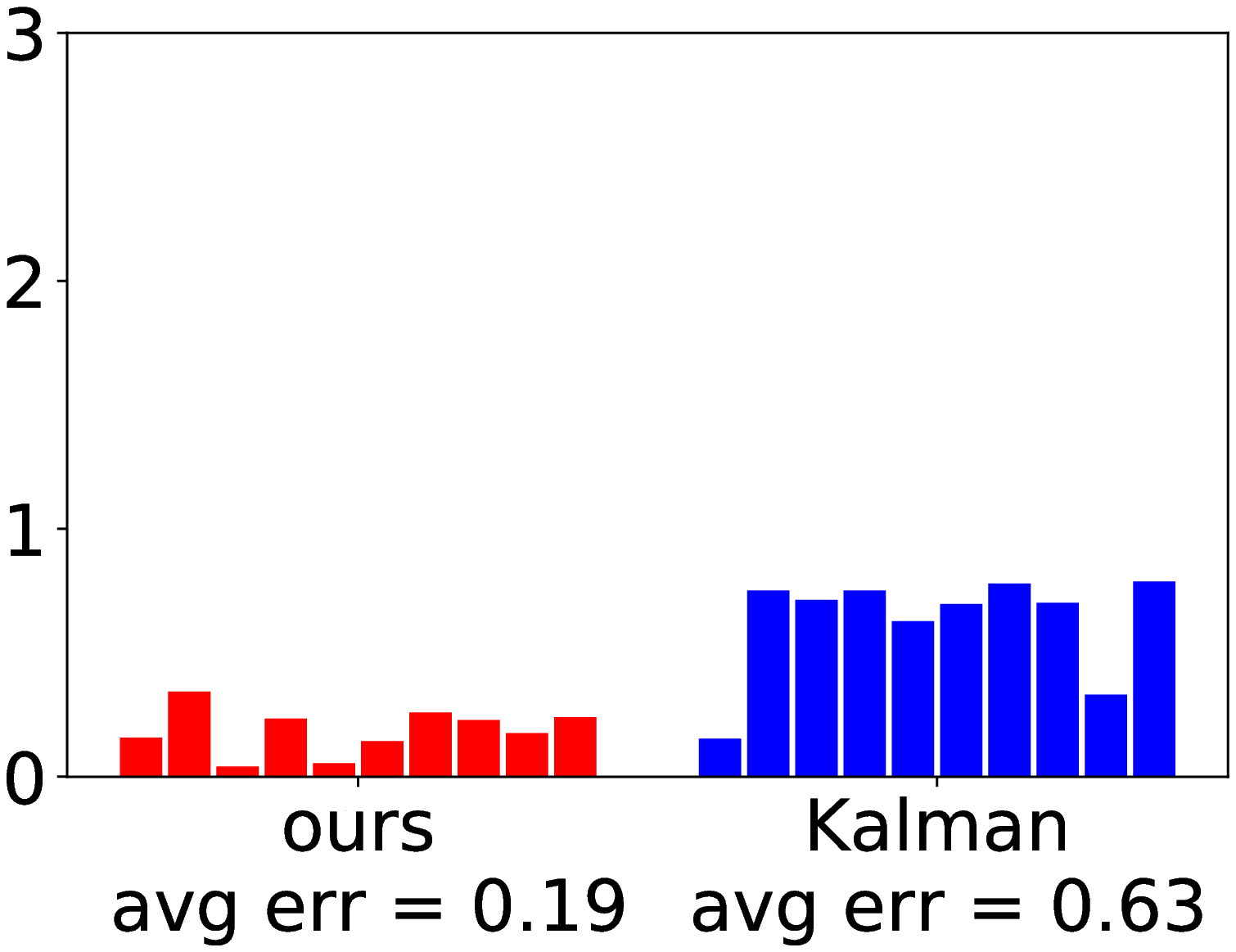}&
	 \includegraphics*[width=0.19\linewidth]{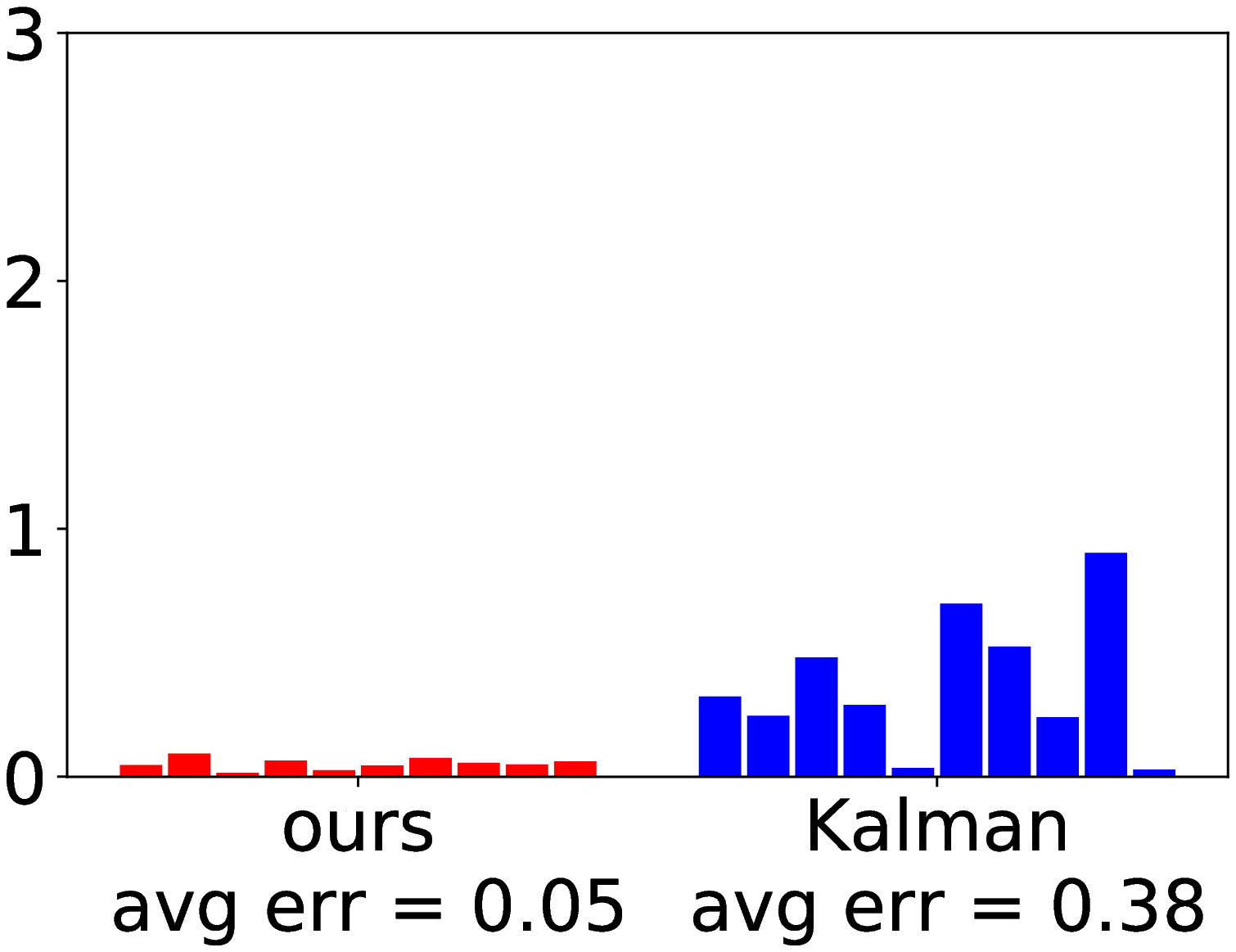}
  \end{tabular}
    \vspace{-2ex}
  \caption{We generate $10$ sets of observations for the Lotka--Volterra model and report the error for each of the experiments. The average error has been reported below each plot. \emph{First row:} comparison with the mean-field method of \citet{Gorbach17}, where the noisy observations have the variance $\sigma^2=1$. \emph{Second and third rows:} comparison with the extended Kalman filter (EKF) where the noisy observations have the variance $\sigma^2=0.1$ and $1.5$, respectively.The number of observation is $T=10\,000$.}
  \vspace{-3ex}
\label{f:comp1}
\end{figure*}

Fig.~\ref{f:comp} shows that almost all the methods work well when the initialization is close to the true parameters (small noise). In reality, we do not know what the real parameters are; it is reasonable to say that the last column of Fig.~\ref{f:comp} (initialization with the largest noise) determines which method performs better in real-world applications. BCD-prox outperforms other methods in both prediction and parameter error.

In our second experiment, we compare BCD-prox with the mean-field variational Bayes method of \citet{Gorbach17} on the Lotka--Volterra model. The mean-field method is only applicable to  differential equations with a specific form---see Eq.\ $(10)$ in \citet{Gorbach17}. While we cannot apply the mean-field method to the FitzHugh--Nagumo and R\" ossler models, we can apply it to the Lotka--Volterra model. In Fig.~\ref{f:comp1}, we compare the methods by prediction and parameter errors.

Fig.~\ref{f:comp1} (first row) shows the results for $\sigma^2=1$ (results for other variances are in supplementary material). Similar to our previous experiments, we generate $10$ sets of noisy observations and each bar corresponds to the error for one of the methods in one of the experiments.

Fig.~\ref{f:comp1} shows that the average error of BCD-prox is less than that of the mean-field method in almost all cases. The average parameter error of the mean-field method for $\theta_0$ and $\theta_1$ becomes around $3$ and $8$, respectively, but the average error of BCD-prox for both parameters remains less than $1$. The results in the supplementary material show that as we increase the noise in the observations, the error of both methods increases. Still, BCD-prox is more robust to observational noise than the mean-field method.

\vspace{-2ex}
\paragraph{Comparison with extended Kalman filter (EKF).} We follow \citet{Sitz02} to apply  EKF to our problem.  We use an open-source Python code \citep{Labbe14} in our implementation. For details, see the supplementary material.

In the second and third rows of Fig.~\ref{f:comp1}, we compare BCD-prox with EKF on the Lotka--Volterra model. We compare the methods in different settings by changing the amount of noise and the number of samples. The noise variances are $\sigma^2=0.1$ and $\sigma^2=1.5$ and the number of samples are $T = 20$ (time range $[0,2]$) and $T = 10\,000$ (time range $[0,1\,000]$). The results for $T=20$ can be found in the supplementary material.

In Fig.~\ref{f:comp1} we report the average estimation error instead of the prediction error. Estimation error is defined as the difference between the clean states and the estimated states $\X^*$. We report the estimation error because the prediction error of EKF goes to infinity. To see why this happens, note that to obtain reasonable predictions we need good estimations of the parameters and the initial state. Since EKF is an online method, it never updates the initial state. Given that the initial state is noisy, no matter how well parameters are estimated, the prediction error becomes very large. BCD-prox updates the initial state, yielding small prediction error.

We found that the only setting in which EKF performs comparably to BCD-prox is the case of $T=10\,000$ and $\sigma^2=0.1$. In other words, EKF works fine when we have long time series with low noise.  In more realistic settings, BCD-prox significantly outperforms EKF.  A key difference between the two methods is that EKF is an online method while ours is a batch method, iterating over the entire data set repeatedly.  Consequently, BCD-prox updates parameters based on information in all the states, leading to more robust updates than is possible with EKF, which updates parameters based on a single observation.

We also see that the error of both methods becomes smaller as we increase the number of samples $T$. This is expected because increasing $T$ is equivalent to giving more information about the model to the methods. The average estimation error of BCD-prox becomes almost $0$ for large $T$.

\paragraph{Conclusion.} BCD-prox addresses issues of previous approaches to simultaneous parameter estimation and filtering, achieving fast training and robustness to noise, initialization, and hyperparameter tuning. We have shown how to use BCD-prox with multistep ODE integration methods.  Additional features of BCD-prox include its connection to BCD and proximal methods, its unified objective function, and a convergence theory resulting from blockwise  convexity.  In ongoing/future work, we seek to extend BCD-prox to estimate the vector field $\mathbf{f}$ from noisy observations.

\section*{Acknowledgements}
H. S. Bhat was partially supported by NSF award DMS-1723272.  Both authors acknowledge use of the MERCED computational cluster, funded by NSF award ACI-1429783.

\begin{center}
\Large \textbf{Supplementary Material}
\end{center}

\printAffiliationsAndNotice{} 

%

%

\begin{abstract}
This supplementary material contains the following: 1) The equations and ground truth parameters for the ODEs that we used in the experiments, 2) Extensions of our experiments with different types and magnitudes of the noise, and 3) An animation that shows how our method works and how the estimated and predicted states move closer to each other at each iteration.
\end{abstract}

\section{ODEs in our experiments}
We have used four benchmark datasets in our experiments. We only gave a brief explanation of each of them in the paper. Here, we introduce them in detail.

\paragraph{Lotka--Volterra model.} This model is used to study the interaction between predator (variable $x_0$) and prey (variable $x_1$) in biology \citep{Lotka32}. The model contains two nonlinear equations as follows:

\begin{equation*}
	\label{eq:lot-vol}
	\frac{dx_0}{dt} = \theta_0 x_0 - \theta_1 x_0 x_1 \qquad \frac{dx_1}{dt} = \theta_2 x_0 x_1 - \theta_3 x_1.
\end{equation*}

The state is two-dimensional and there are four unknown parameters. We use the same settings as in \citet{Dondelin13}. We set the parameters to $\theta_0=2$, $\theta_1=1$, $\theta_2=4$ and $\theta_3=1$. With initial condition $\x_{(1)}=[5,3]$, we generate clean states in the time range of $[0,2]$ with a spacing of $\Delta t = 0.1$.

\paragraph{FitzHugh--Nagumo model.} This model describes spike generation in squid giant axons \citep{FitzHugh61,Nagumo62}. It has two nonlinear equations:
\begin{align*}
	\frac{dx_0}{dt} &= \theta_2 \left(x_0 - \frac{(x_0)^3}{3} + x_1 \right)\\
	 \frac{dx_1}{dt} &= - \frac{1}{\theta_2} (x_0 - \theta_0 + \theta_1 x_1),	
\end{align*}
where $x_0$ is the voltage across an axon and $x_1$ is the outward current. The states are two-dimensional and there are three unknown parameters. We use the same settings as in \citet{Ramsay07}. We set the parameters as $\theta_0=0.5, \theta_1=0.2,$ and $\theta_3=3$. With initial condition $\x_{(1)}=[-1,1]$, we generate clean states in the time range of $[0,20]$ with a spacing of $\Delta t = 0.05$.

\paragraph{R\" ossler attractor.} This three-dimensional nonlinear system has a chaotic attractor \citep{Rossler76}:
\begin{align*}
	\frac{dx_0}{dt} &= -x_1 - x_2 \\
	\frac{dx_1}{dt} &= x_0 + \theta_0 x_1 \\
	\frac{dx_2}{dt} &= \theta_1 + x_2(x_0 - \theta_2).
\end{align*}
	The states are three-dimensional and there are three unknown parameters. We use the same settings as in \citet{Ramsay07}. We set the parameters as $\theta_0=0.2, \theta_1=0.2,$ and $\theta_3=3$. With the initial condition $\x_{(1)}=[1.13,-1.74,0.02]$, we generate clean states in the time range of $[0,20]$, with $\Delta t = 0.05$.

\paragraph{Lorenz-96 model.} The goal of this model is to study weather predictability \citep{Lorenz98}. For,  $k=0,\ldots,d-1$, the $k$th differential equation has the following form:
\begin{equation*}
\label{eq:lorenz}
	\frac{dx_k}{dt} = (x_{k+1} - x_{k-2})(x_{k-1}) - x_k + \theta_0,
\end{equation*}
The model has one parameter $\theta_0$ and $d$ states, where $d$ can be set by the user. This gives us the opportunity to test our method on larger ODEs. Note that to make this ODE meaningful, we have $x_{-1} = x_{d-1}$, $x_{-2} = x_{d-2}$, and $x_d = x_0$. As suggested by \citet{Lorenz98}, we set $d=40$ and $\theta_0=8$. The clean states are generated in the time range $[0,4]$ with a spacing of $\Delta_i=0.01$. The initial state is generated randomly from a Gaussian distribution with mean $0$ and variance $1$.

\section{Experimental results}

\paragraph{Optimization of our objective function leads to better estimation.} In Fig.\ 2 of our main paper, we reported the prediction error at each iteration of our algorithm for the R\" ossler and the Lorenz-96 models. Here, in Fig.~\ref{f:objerrSUPP}, we add the FitzHugh--Nagumo model and show the results for noisy observations with $\sigma^2=0.5$ and $\sigma^2=1$.

In all settings, our method decreases the error significantly for both Euler and three-step Adams-Bashforth methods. The three-step method performs better than the Euler method, specifically in the Lorenz-96 model.

\paragraph{Different types and amounts of noise in the observations.} Our method does not assume anything about the type of noise. In reality, the noise could be from any distribution. In Fig.~\ref{f:noise}, we investigate the effect of the type of noise on the outcome of our algorithm. The red (blue) curves correspond to the case when we add Gaussian (Laplacian) noise to the observations. We set the mean to $0$, change the variance of the noise, and report the prediction and parameter errors. Note that for each noise variance, we repeat the experiment $10$ times and report the mean and standard deviation of the error.

In general, increasing the noise variance increases the error. We can see this in almost all plots. In both models, the error does not change much by changing the variance from $0$ to $0.5$. We can also see that the method performs almost as well for observations corrupted by Laplacian noise as in the Gaussian noise case.  Note that the Laplacian noise has a heavier-than-Gaussian tail.

\paragraph{Comparison with other methods (robustness to initialization).} In Fig.\ 4 of the paper, we compared our method with three other methods in different categories on the R\" ossler model. Fig~\ref{f:compSUPP} shows the comparison on the FitzHugh--Nagumo model. In both models, our method is robust with respect to the initialization and outperforms other methods significantly.

\paragraph{Comparison with the mean-field method \citep{Gorbach17}.}In Fig.\ 5 of the paper, we compared our method with the mean-field method of \citet{Gorbach17} on the Lotka--Volterra model, with noise variance $\sigma^2=1$. Fig.~\ref{f:compSUPP1} compares the methods for $\sigma^2=.5, 1,$ and $1.5$. Our method is more robust with respect to noise and performs better.

\paragraph{Comparison with the extended Kalman filter (EKF).} 
As we mentioned in the main paper, we follow \citet{Sitz02} in applying the Kalman filter to our problem of estimating the parameters and states. Here, we provide more information regarding our implementation. 

We first need to write an equation that recursively finds the state $\x_{(t_{i+1})}$ in terms of $\x_{(t_{i})}$. As suggested by \citet{Sitz02}, this can be achieved by discretizing the ODE using the Euler discretization:
\begin{equation}
	\x_{(t_{i+1})} = \x_{(t_i)}  + \f(\x_{(t_i)},\btheta)\Delta_{i}.
\end{equation}
Let us define $\btheta_{(t_i)}$ as the parameter estimated at time $t_i$ by the Kalman filter. We define a joint state variable $\bxi_{(t_i)}$, which merges the states $\x_{(t_i)}$ and the parameters $\btheta_{(t_i)}$ as follows:
\begin{equation}
	\bxi_{(t_i)} = \begin{pmatrix}
	\x_{(t_i)} \\
	\btheta_{(t_i)}
	\end{pmatrix}, \quad \bxi_{(t_i)} \in \bbR^{d+p}.
\end{equation}
The process model to predict the next state variable can be written as:
\begin{equation}
	\bxi_{(t_{i+1})} = \begin{pmatrix}
	\x_{(t_{i+1})} \\
	\btheta_{(t_{i+1})}
	\end{pmatrix} = \begin{pmatrix}
	\x_{(t_i)}  + \f(\x_{(t_i)},\btheta)\Delta_{i}\\
	\btheta_{(t_{i})}
	\end{pmatrix}.
\end{equation}
We define the observation model as follows:
\begin{equation}
	\y_{(t_{i})} = \HH \bxi_{(t_{i})}, \quad \HH=
	\begin{pmatrix}
	\I & \0
	\end{pmatrix}_{d \times (d+p)},
\end{equation}
where $\HH$ is a $d \times (d+p)$ matrix, $\I$ is a $d \times d$ identity matrix, and $\0$ is a $d \times p$ matrix where all elements are $0$.

In most cases, the function $\f(\cdot)$ is nonlinear, which makes the process model nonlinear. For this reason, we use the extended Kalman filter (EKF), which linearizes the model.

We use an open-source Python code \cite{Labbe14} to implement  EKF. We set the state covariance (noise covariance) to a diagonal matrix with elements equal to $1\,000$ ($0.1$). We set the process covariance using the function Q\_discrete\_white\_noise() provided in \cite{Labbe14}, where the variance is set to $1$. Note that these parameters must be carefully tuned to obtain reasonable results; 
changing the state or noise covariance yields significantly worse results.

In Fig.\ 5 of the main paper, we compared our method with  EKF on the Lotka--Volterra model. In that experiment, we set the number of samples to $T = 10\,000$ (time range $[0,2]$). Here, we show the results for both $T = 20$ (time range $[0,2]$) and $T = 10\,000$ (time range $[0,1\,000]$).

As we can see in Fig.~\ref{f:kalman}, the only setting in which  EKF performs comparably to our method is the case of $T=10\,000$ and $\sigma^2=0.1$. In more realistic settings, our method significantly outperforms  EKF. 

\paragraph{Animation to show how our method works.} We consider the FitzHugh--Nagumo model, with settings as explained at the beginning of this section, except that we consider the first $10$ seconds instead of $20$. We add Gaussian noise with variance $0.5$ to the clean states to create the noisy observations. In Fig. 10 (see PDF file on second author's web site), we show how our algorithm works in the first $250$ iterations. Acrobat Reader is required to play the animation. In this animation, $\X$ denotes clean states (green circles), $\X^*$ denotes estimated states, and $\hat{\X}$ denotes predicted states. Note that initially, $\X^*$ is the same as the noisy observations. Fig. 10 (see PDF file on second author's web site) shows the two dimensions separately. At the top of each figure, we show the estimated parameters at each iteration. Note that the true parameters are $\theta_0=0.5, \theta_1=0.2$, and $\theta_2=3$. As explained before, the estimated and predicted states move closer to each other at each iteration. This helps the estimated parameters converge to the true parameters.

\begin{figure*}[!t]
  \centering
  \begin{tabular}{@{}c@{\hspace{1ex}}c@{\hspace{.5ex}}c@{\hspace{1ex}}c@{}}
  		& & \large{$\sigma^2=0.5$} & \large{$\sigma^2$=1}\\
		\rotatebox{90}{\hspace{1ex} \large{FitzHugh--Nagumo}} &       
      \rotatebox{90}{\hspace{7ex} \large{pred.\ error}} &       
    \includegraphics*[width=0.4\linewidth,height=.26\linewidth]{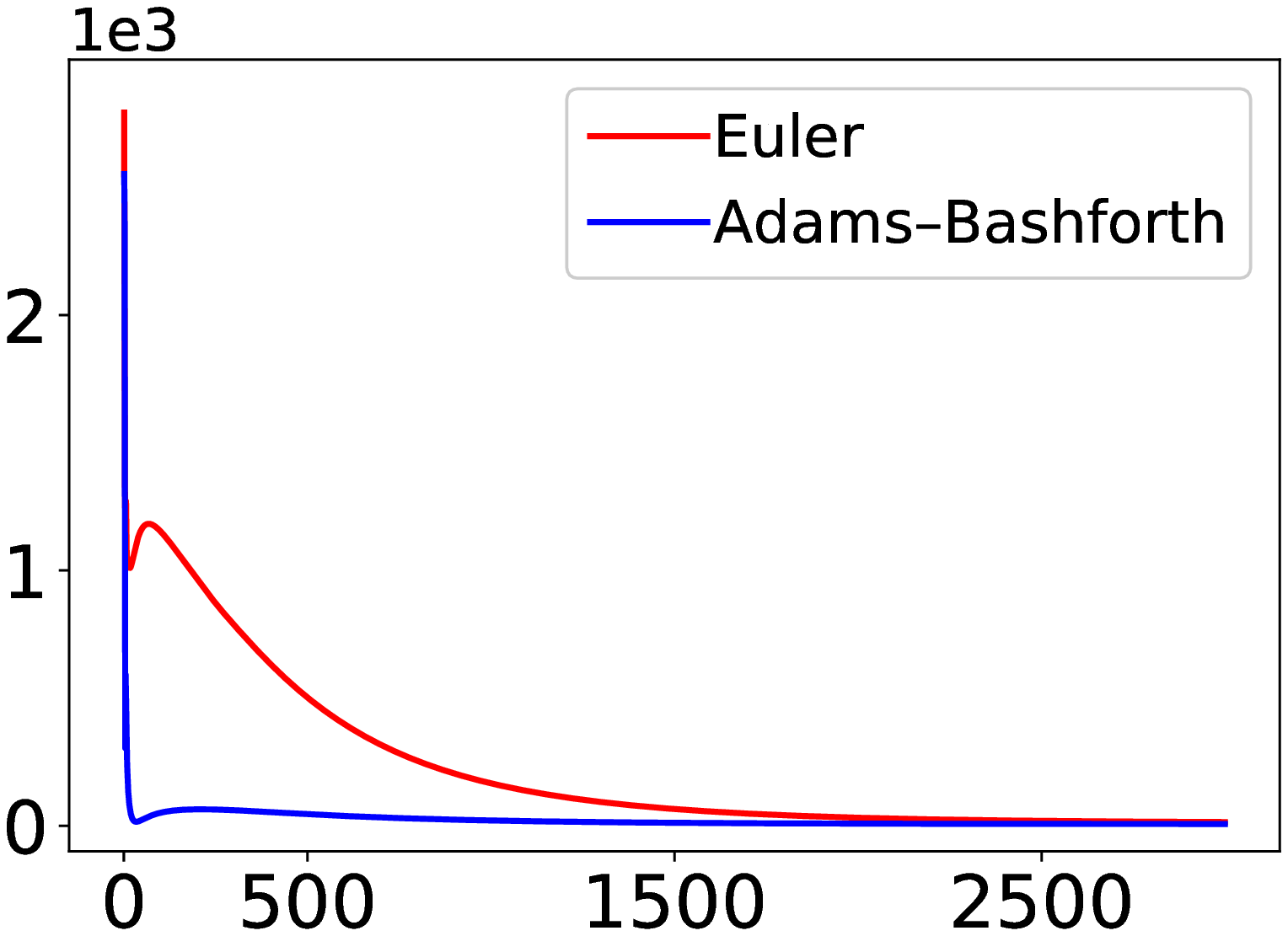}&
    \includegraphics*[width=0.435\linewidth,height=.26\linewidth]{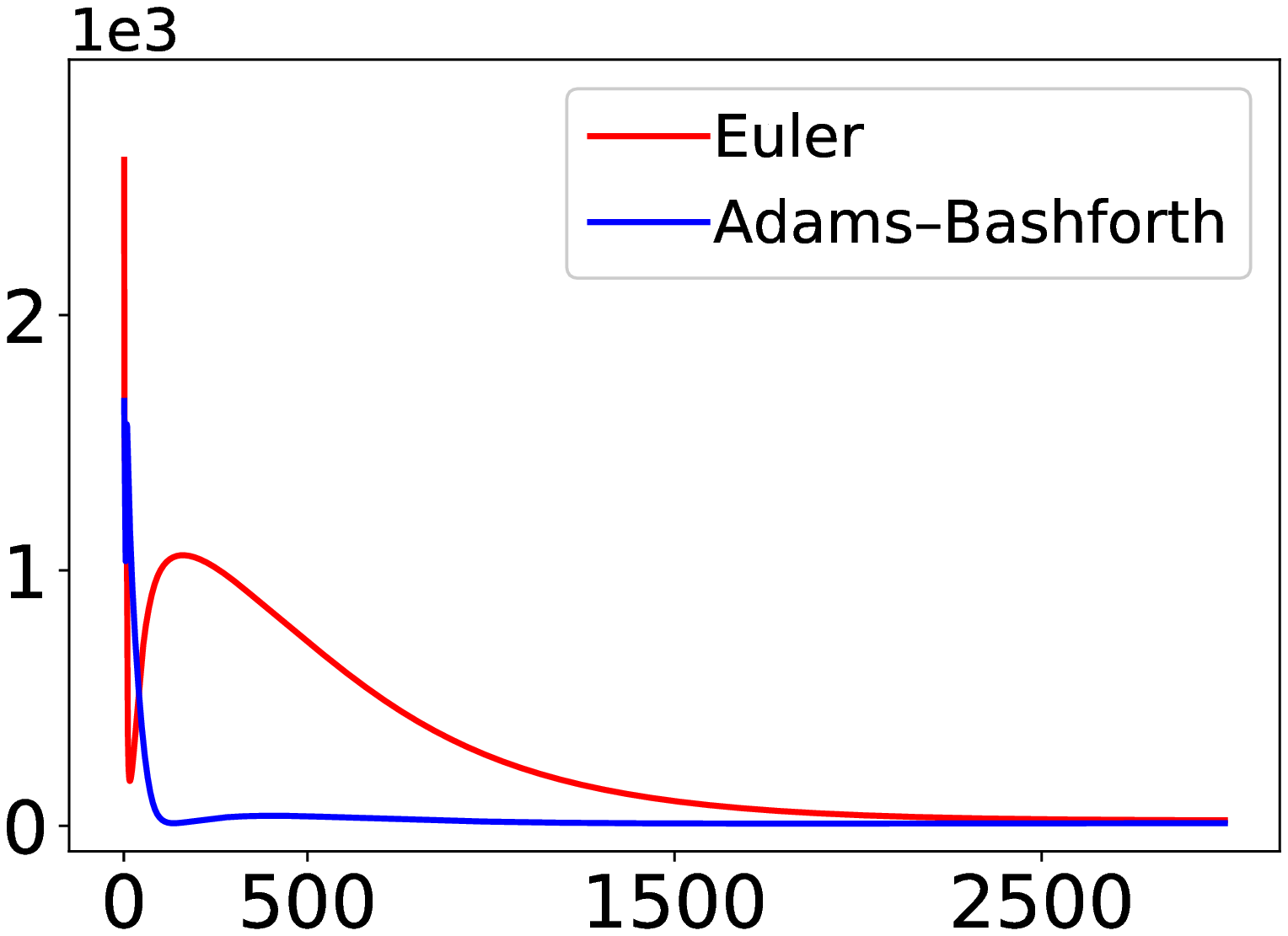}\\[.5ex]
    \rotatebox{90}{\hspace{3ex} \large{R\" ossler attractor}} &
     \rotatebox{90}{\hspace{7ex} \large{pred.\ error}} &       
    \includegraphics*[width=0.425\linewidth,height=.26\linewidth]{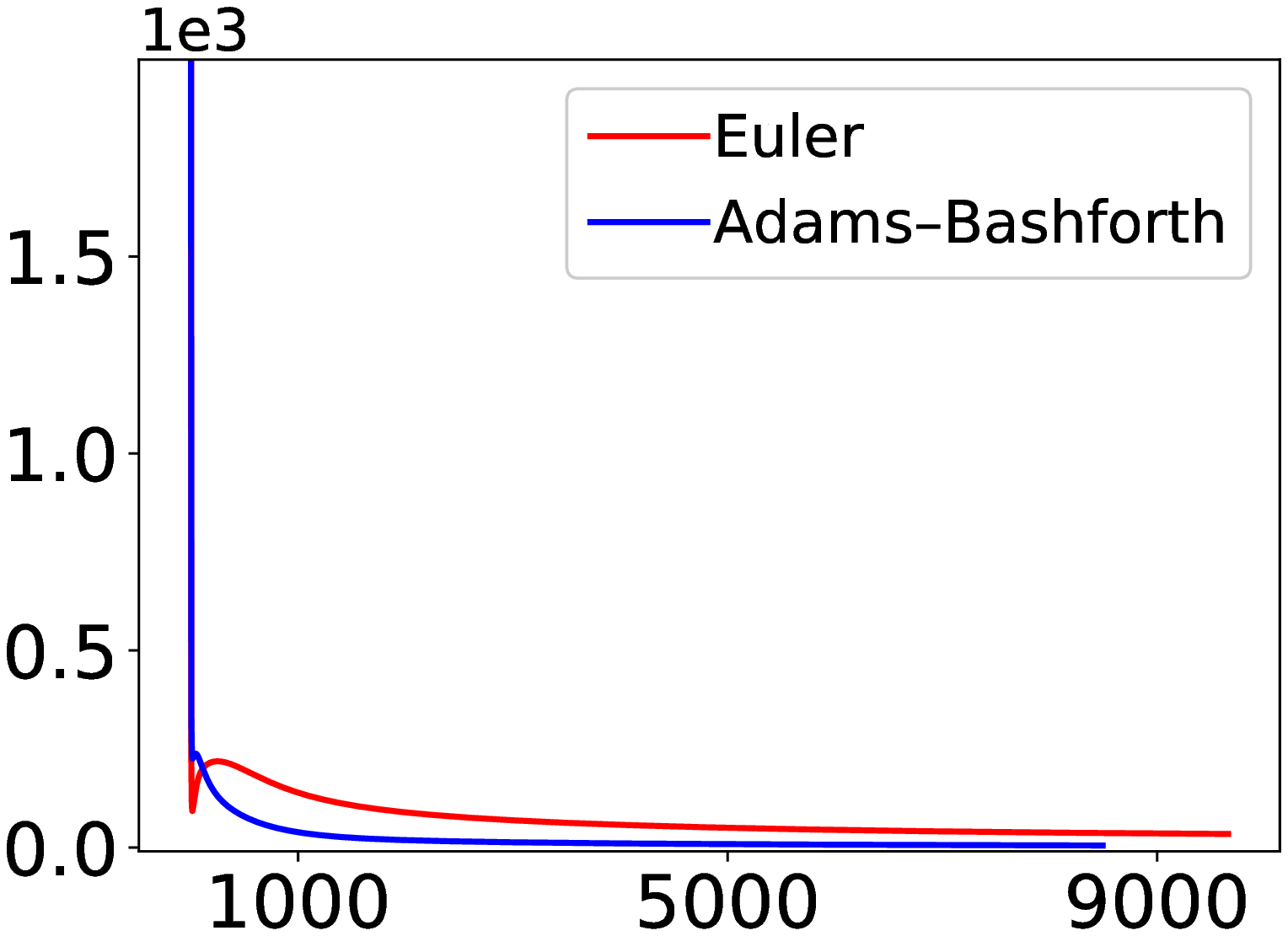}&
    \includegraphics*[width=0.435\linewidth,height=.26\linewidth]{objerr/rosslerSig1.eps}\\[.5ex]
    \rotatebox{90}{\hspace{8ex} \large{Lorenz-96}} &
     \rotatebox{90}{\hspace{7ex} \large{pred.\ error}} &       
    \includegraphics*[width=0.43\linewidth,height=0.29\linewidth]{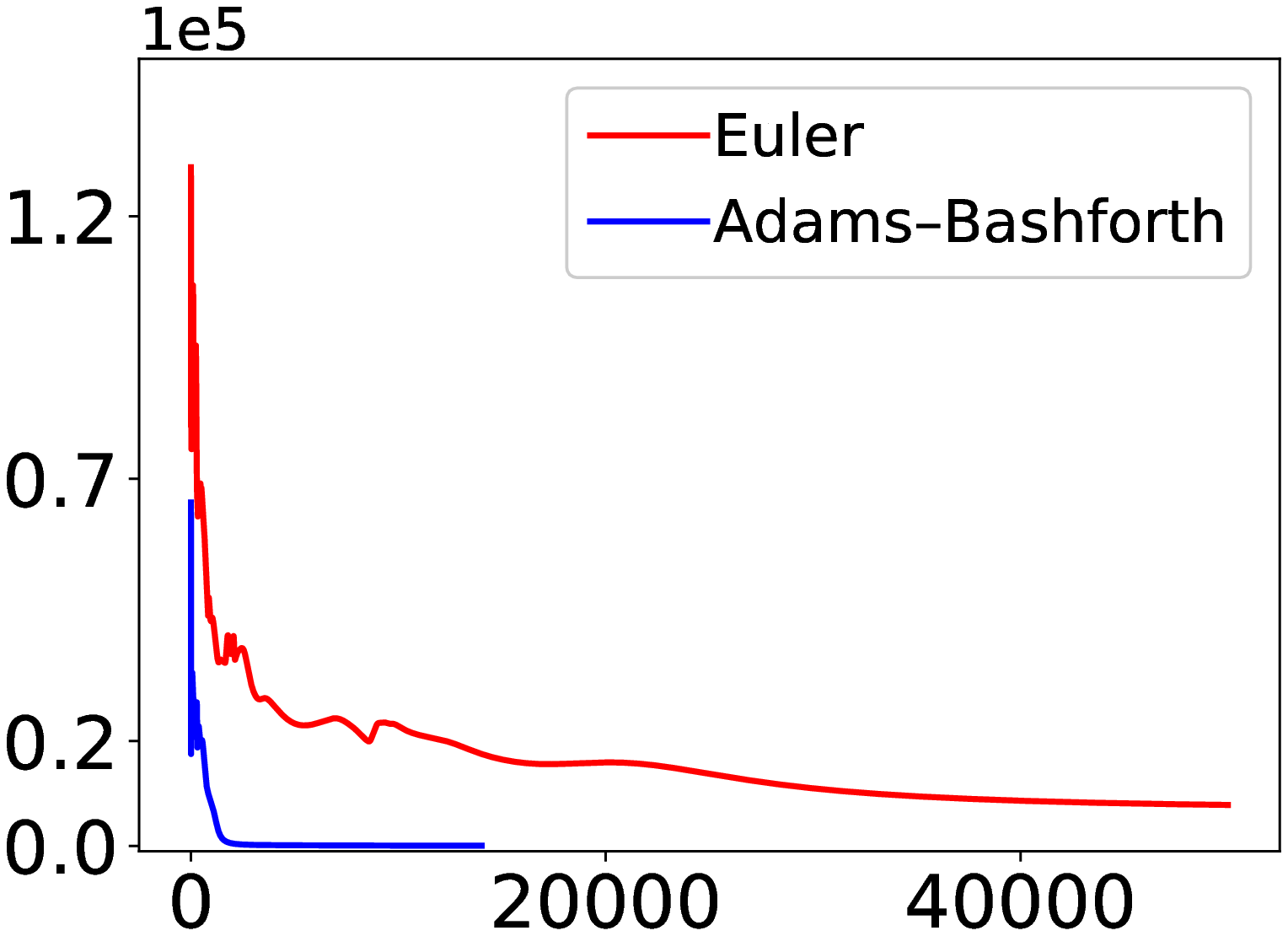}&
\includegraphics*[width=0.43\linewidth,height=0.29\linewidth]{objerr/lorenz96Sig1.eps}\\[-1ex]
    & & \large{iteration} & \large{iteration} \\[-1ex]
  \end{tabular}
  \caption{Similar to Fig.\ 2 of the main paper. We added FitzHugh--Nagumo and noisy observations with $\sigma^2=0.5$. Our learning strategy decreases the error in both cases and in all models.}
  
  \label{f:objerrSUPP}
\end{figure*}

\begin{figure*}[!t]
  \centering
  \begin{tabular}{@{}c@{\hspace{1ex}}c@{\hspace{4ex}}c@{\hspace{1ex}}c@{}}
    &\large{FitzHugh--Nagumo} & &  \large{R\" ossler} \\
    \rotatebox{90}{\hspace{6ex}  \large{pred.\ error} } &       
    \includegraphics*[width=0.41\linewidth]{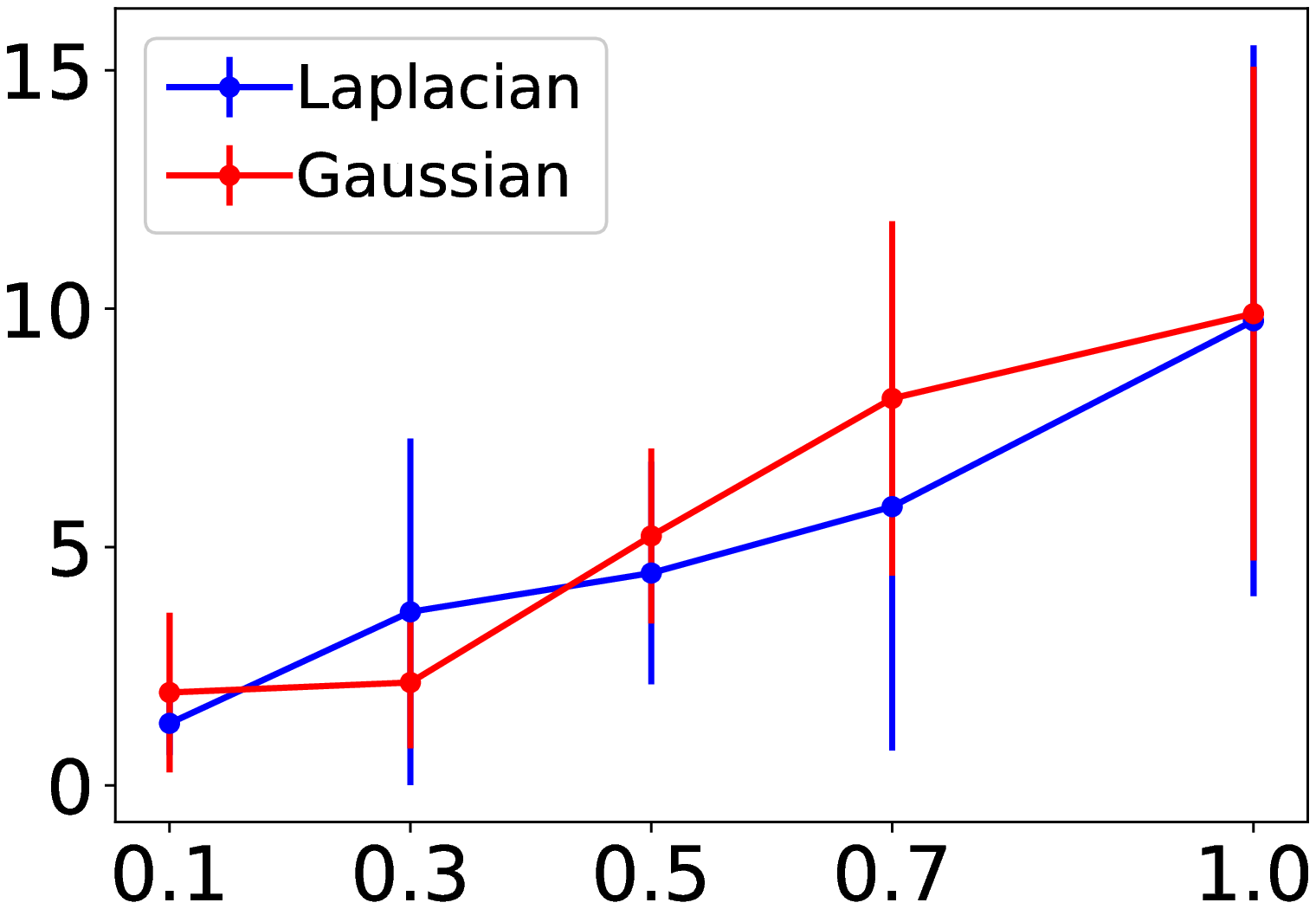}&
    \rotatebox{90}{\hspace{6ex} \large{pred.\ error} } &       
    \includegraphics*[width=0.41\linewidth]{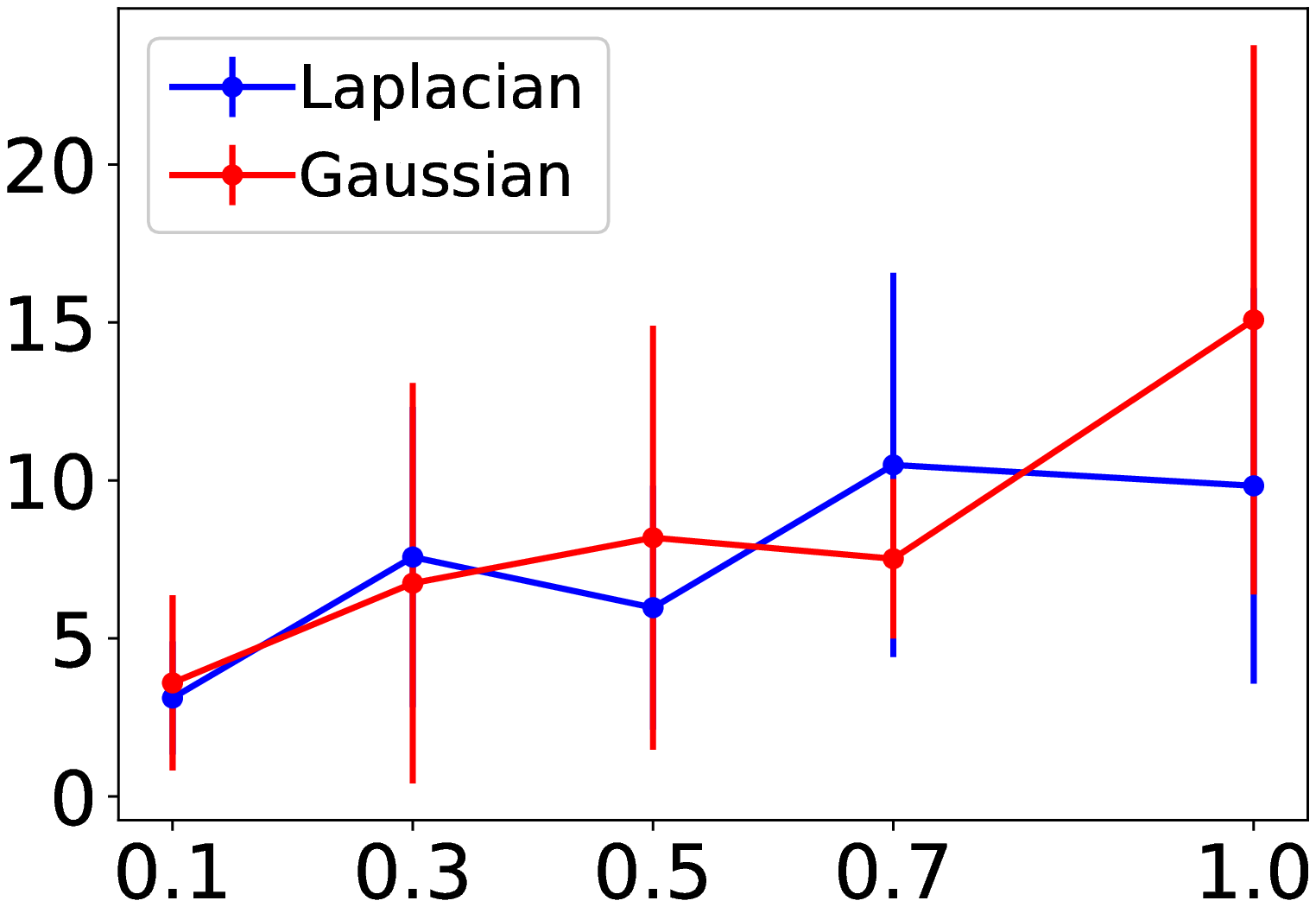}\\
    \rotatebox{90}{\hspace{7ex}  \large{$\theta_0$ error} } &  
    \includegraphics*[width=0.43\linewidth]{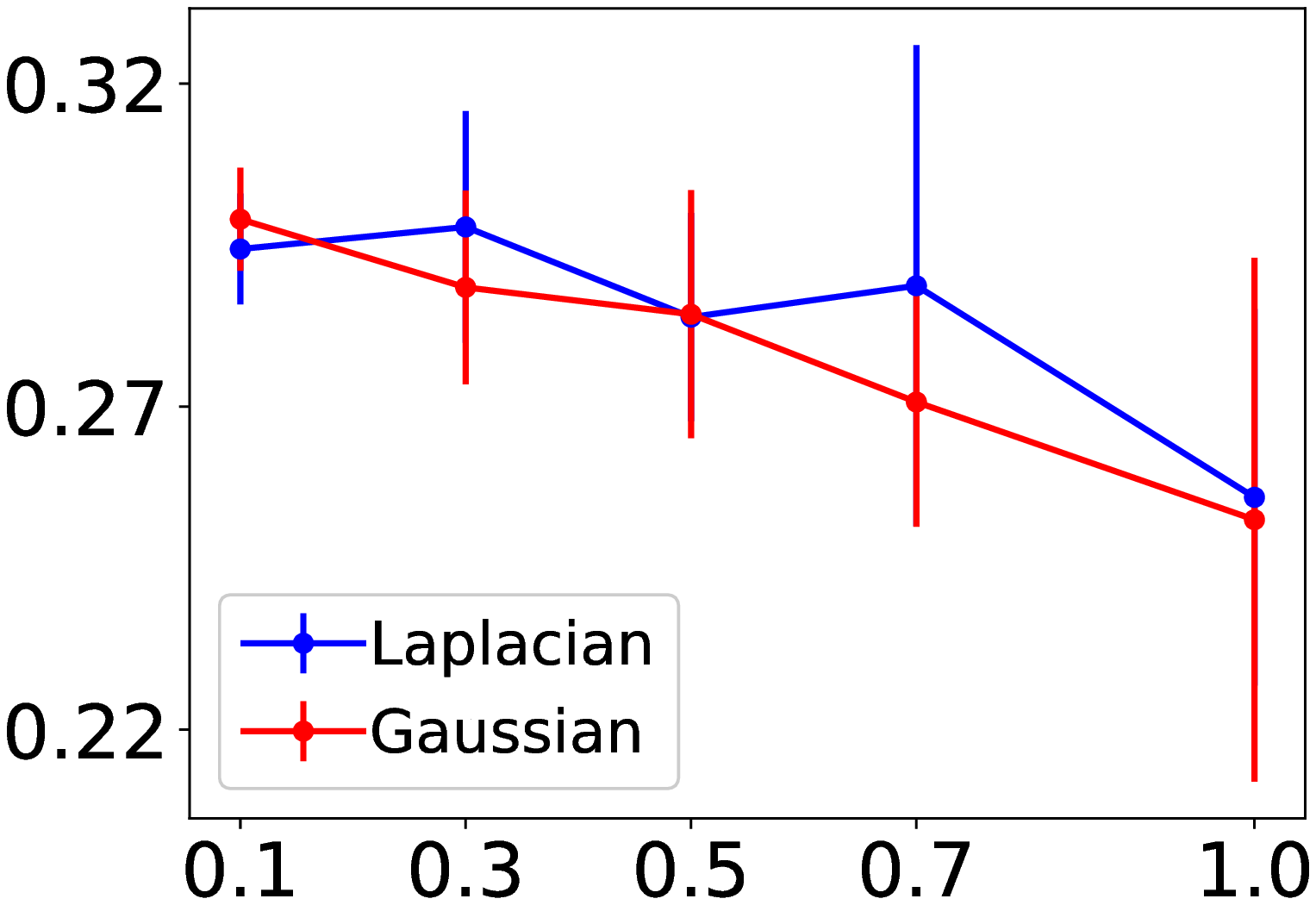}&
    \rotatebox{90}{\hspace{7ex}  \large{$\theta_0$ error} } &  
	 \includegraphics*[width=0.43\linewidth]{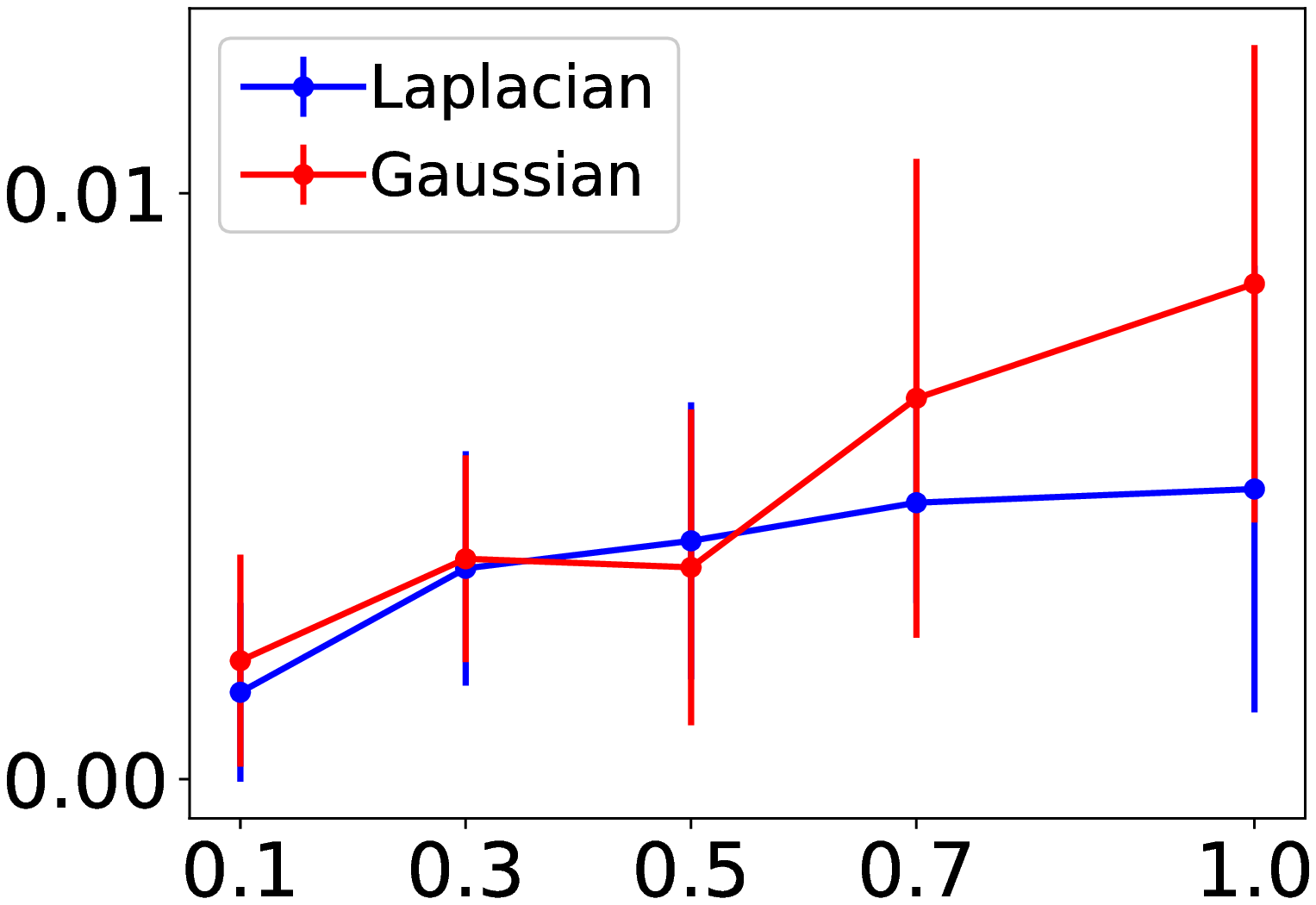}\\	 
	  \rotatebox{90}{\hspace{7ex}  \large{$\theta_1$ error} } &  
    \includegraphics*[width=0.43\linewidth]{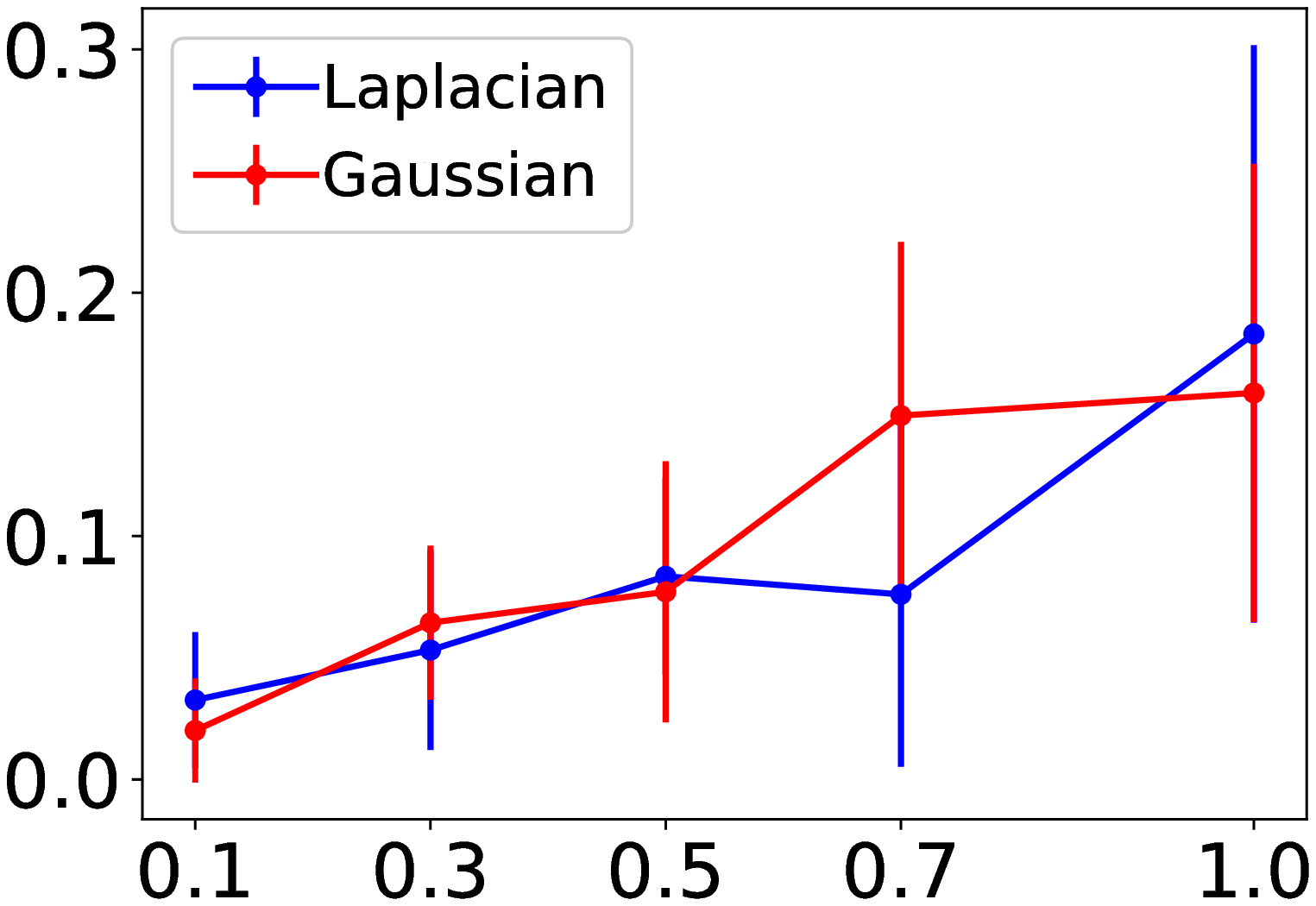}&
    \rotatebox{90}{\hspace{7ex}  \large{$\theta_1$ error} } &  
	 \includegraphics*[width=0.43\linewidth]{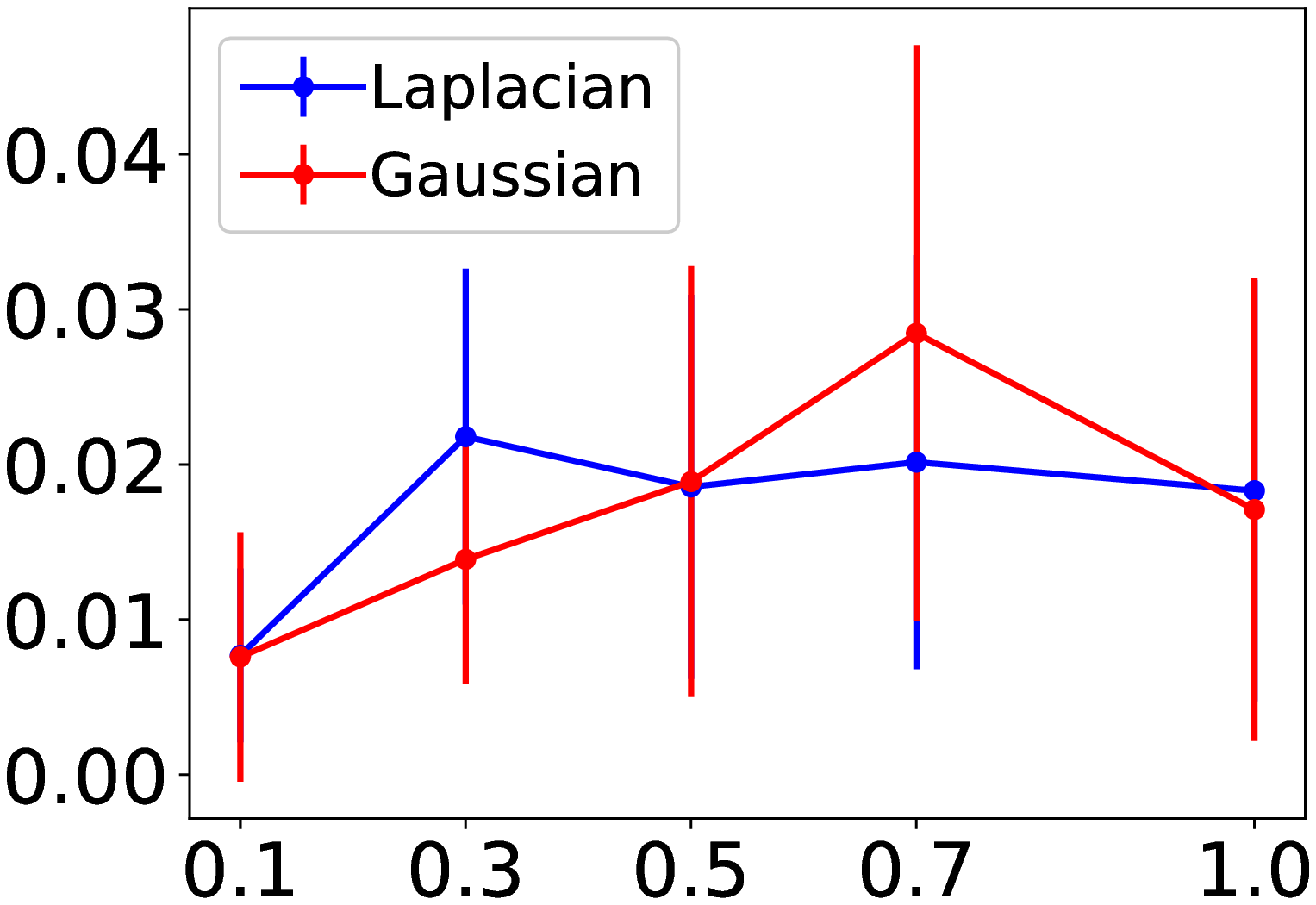}\\	 
	  \rotatebox{90}{\hspace{7ex}  \large{$\theta_2$ error} } &  
    \includegraphics*[width=0.43\linewidth]{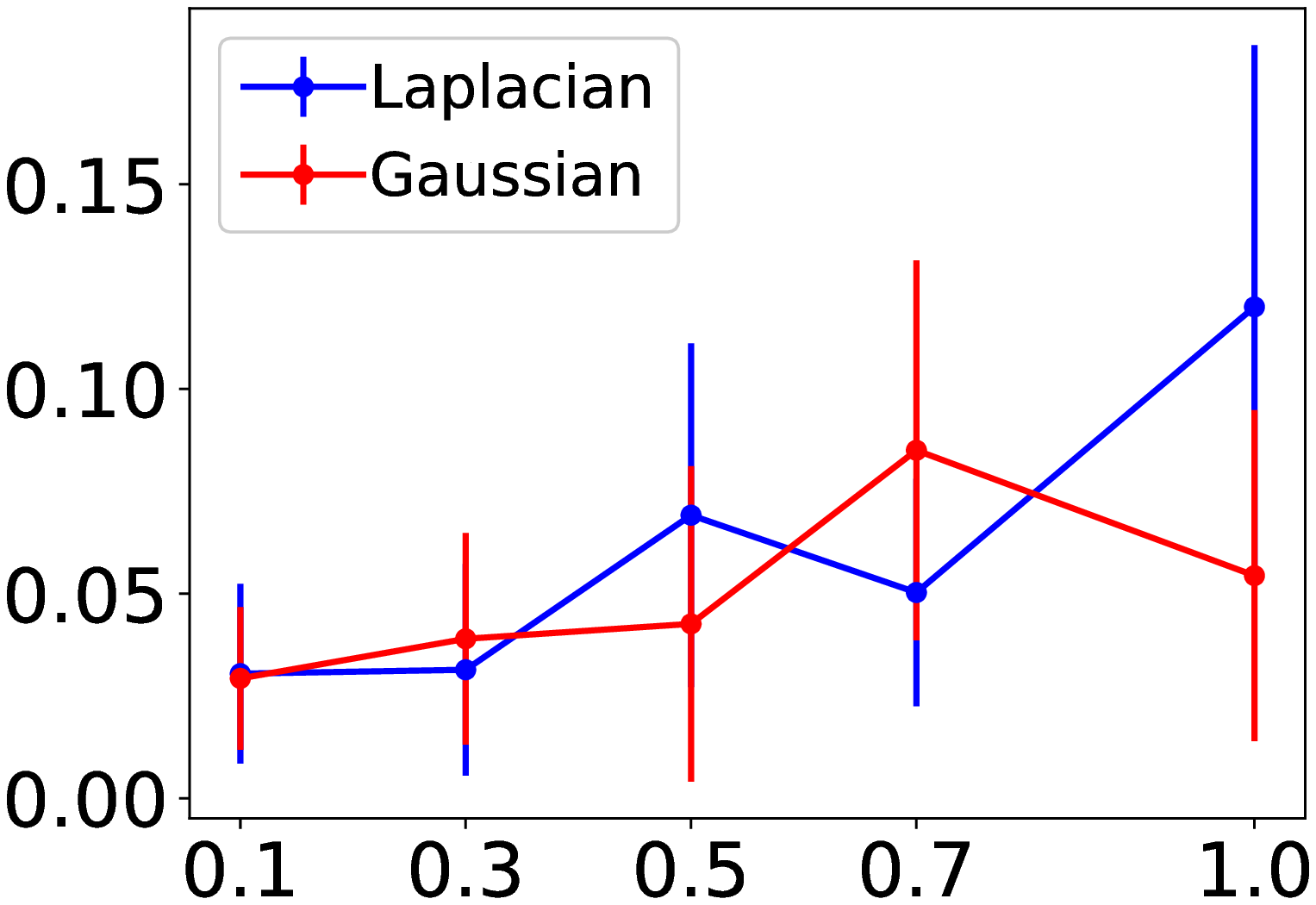}&
    \rotatebox{90}{\hspace{7ex}  \large{$\theta_2$ error} } &  
	 \includegraphics*[width=0.43\linewidth]{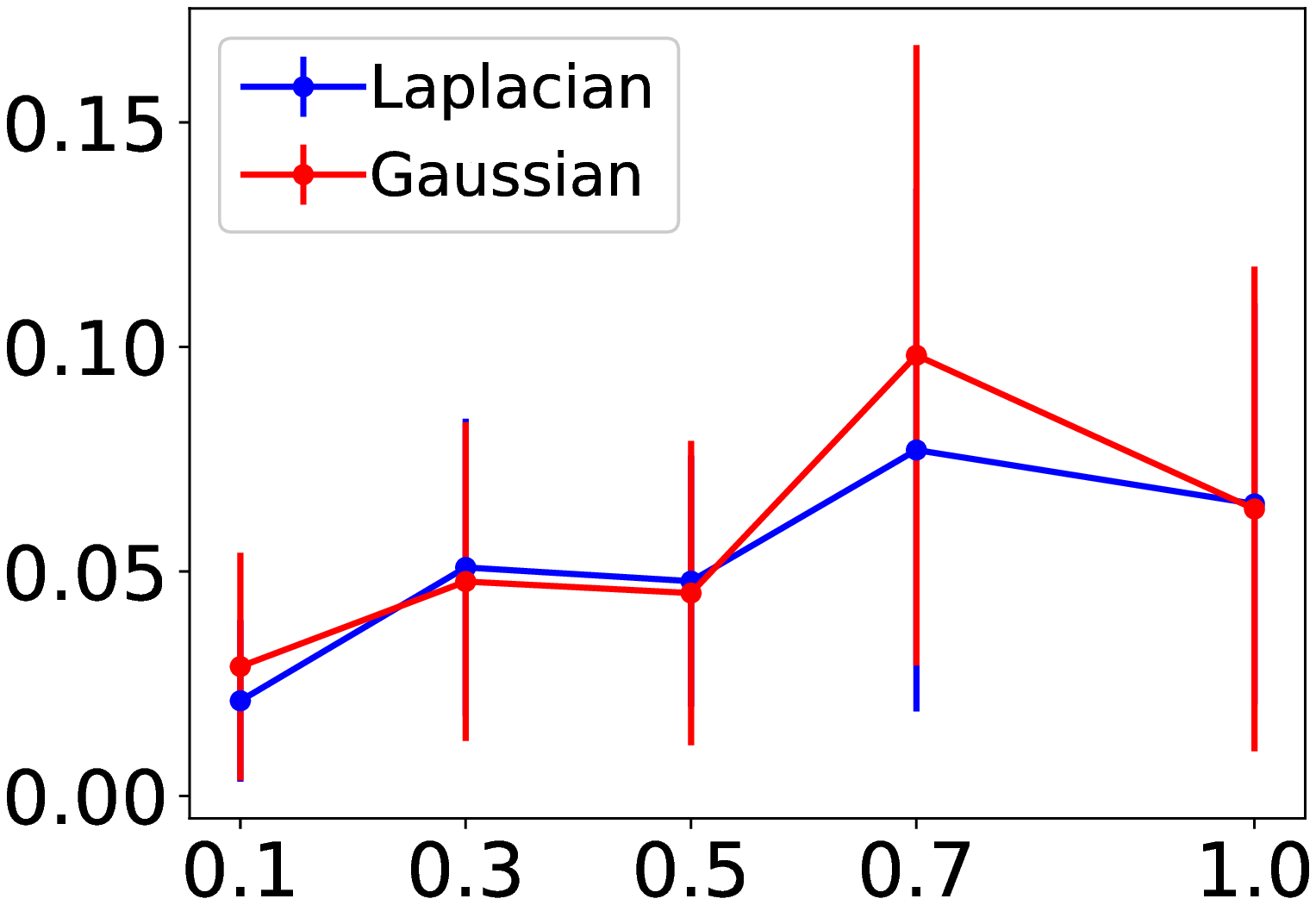}\\	 	 
     & \large{noise variance} & & \large{noise variance} \\[-1ex]
  \end{tabular}
  \caption{We change the amount and type of noise in the observations, and report the prediction and parameter errors on the FitzHugh--Nagumo (first column) and R\" ossler (second column) models.}

  \label{f:noise}
\end{figure*}

\begin{figure*}[!h]
  \centering
  \psfrag{i}[t][t]{it}
  \psfrag{idx}[t][t]{index of the hash fcn}
  \begin{tabular}{@{}c@{}c@{\hspace{.2ex}}c@{\hspace{.2ex}}c@{\hspace{.2ex}}c@{}}
  	& $\sigma_{\btheta}^2=1$ & $\sigma_{\btheta}^2=5$ & $\sigma_{\btheta}^2=10$ & $\sigma_{\btheta}^2=20$\\
	  \hspace{3ex}\rotatebox{90}{\hspace{2ex}\raisebox{3ex}[0pt][0pt]{\makebox[0pt][c]		{\hspace{-30ex}\makebox[21.5em][c]{\dotfill \small{FitzHugh--Nagumo}\dotfill}}} \small{state error}} &              
    \includegraphics*[width=0.235\linewidth,height=0.165\linewidth]{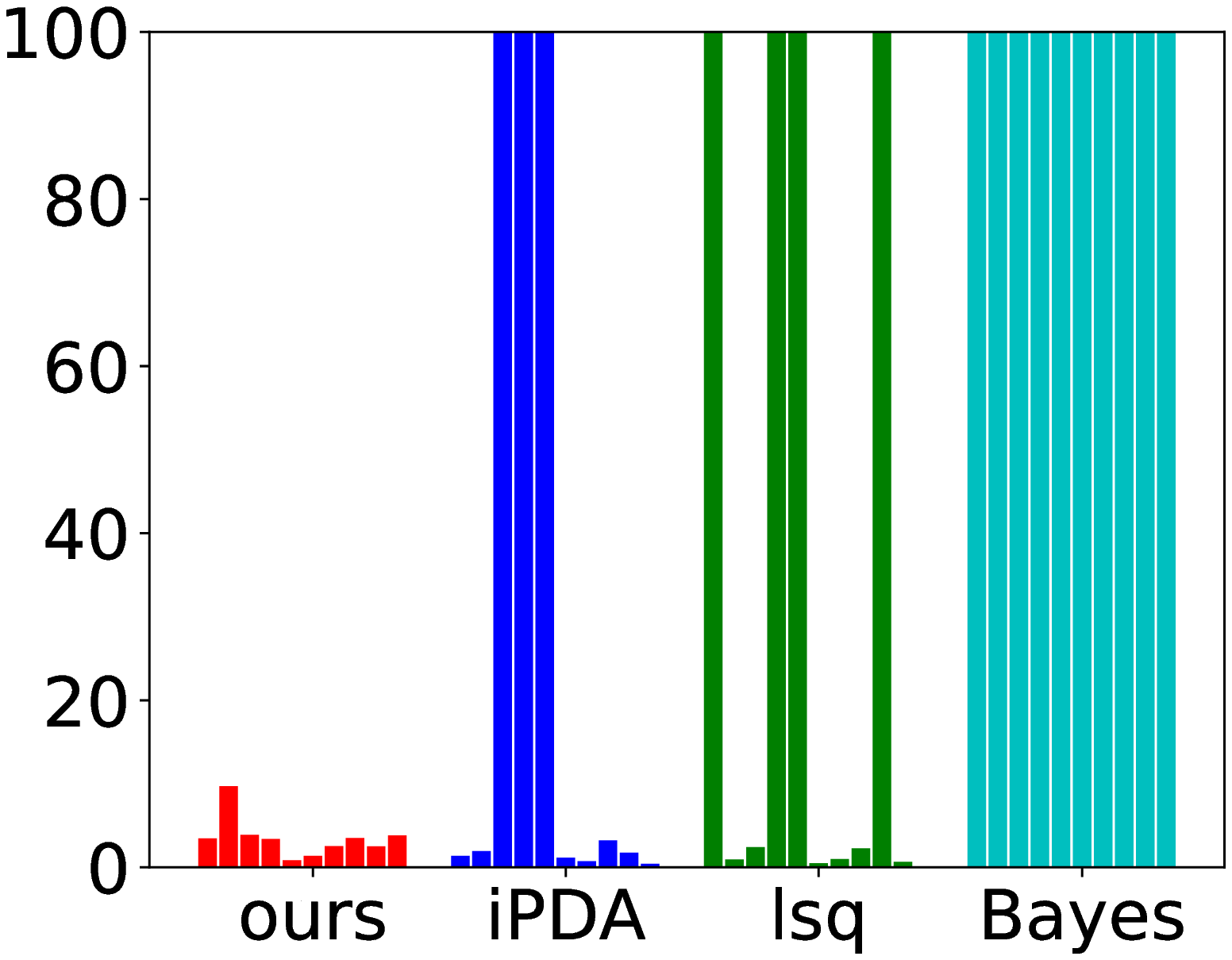}&
    \includegraphics*[width=0.235\linewidth,height=0.165\linewidth]{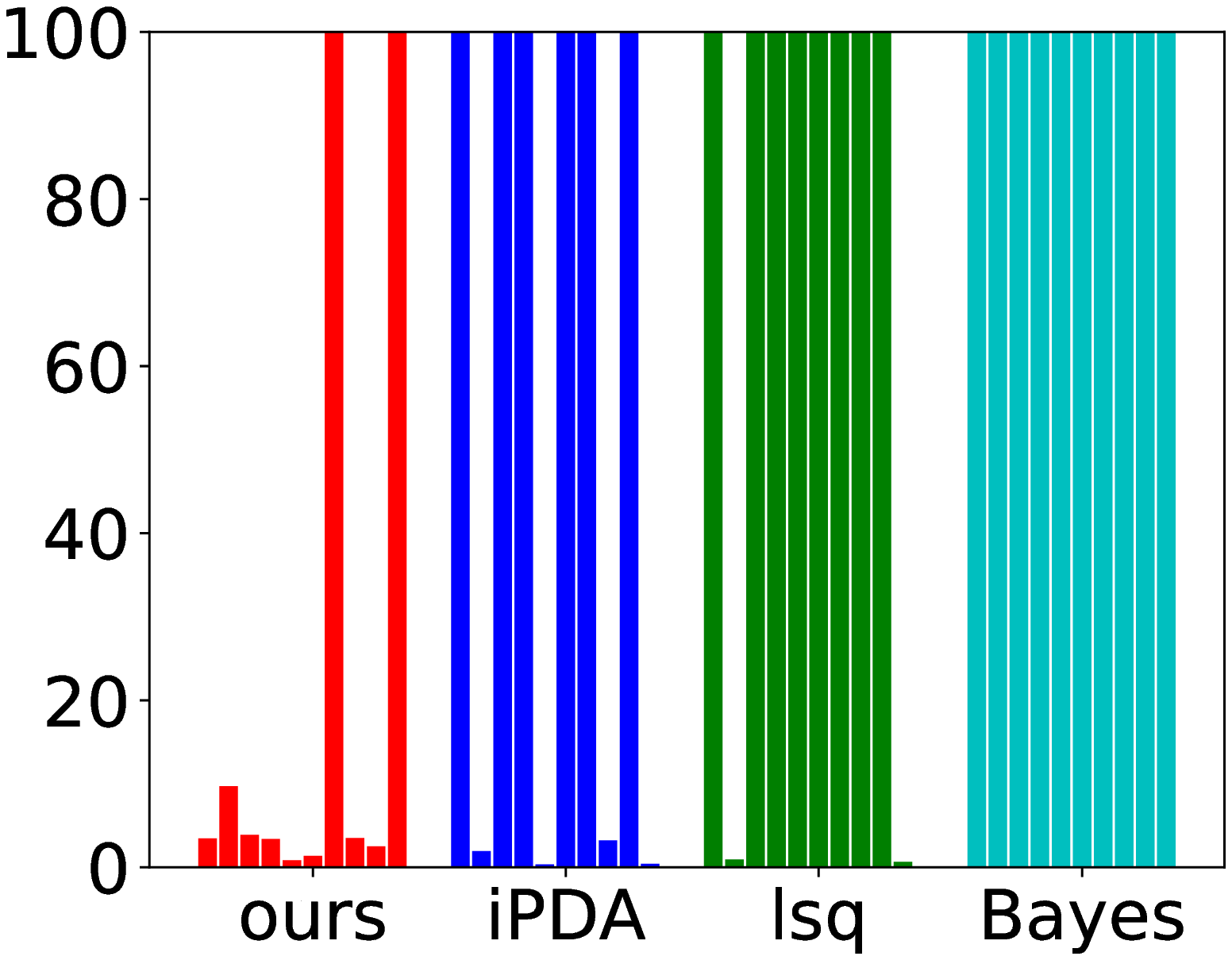}&
    \includegraphics*[width=0.235\linewidth,height=0.165\linewidth]{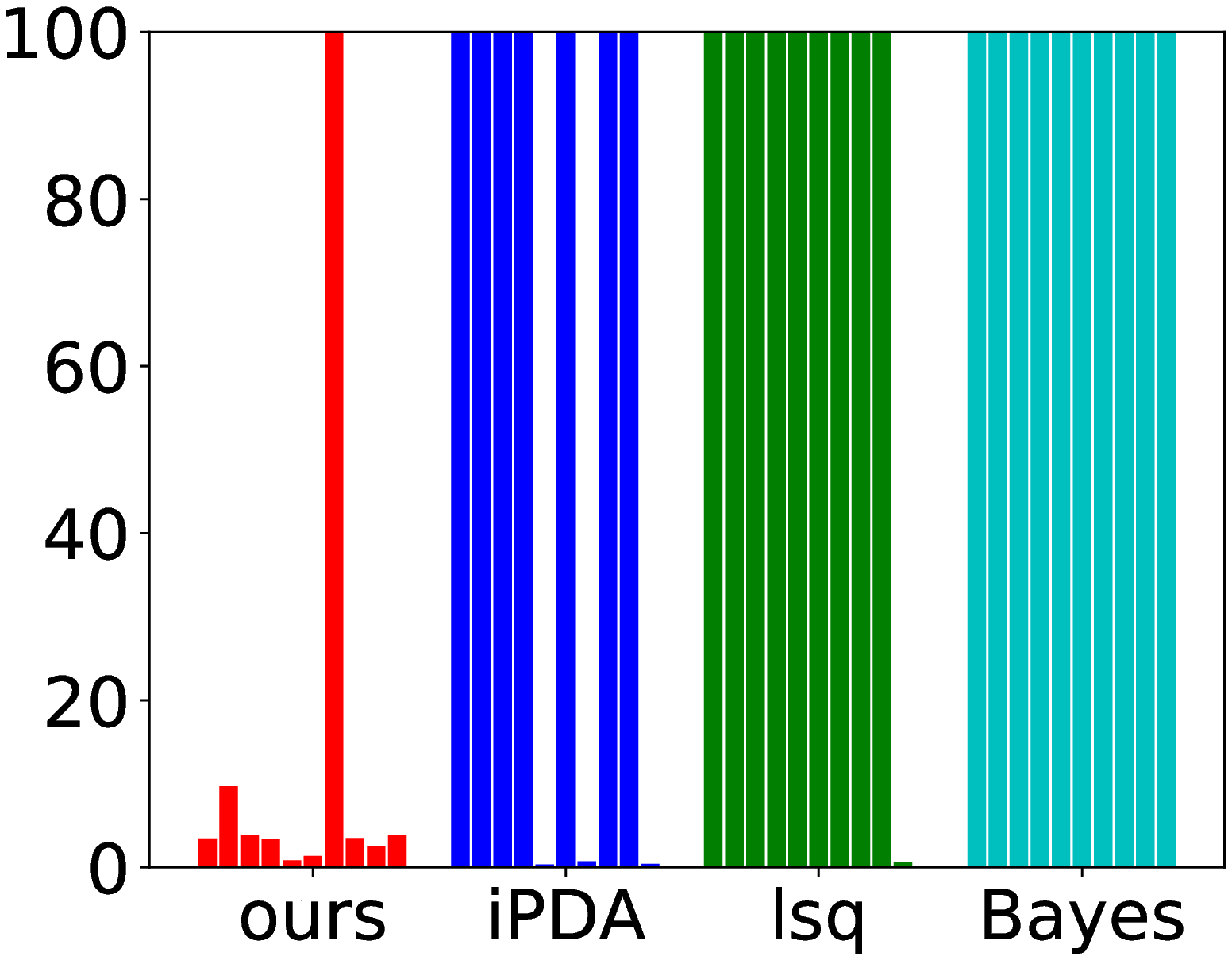}&
    \includegraphics*[width=0.235\linewidth,height=0.165\linewidth]{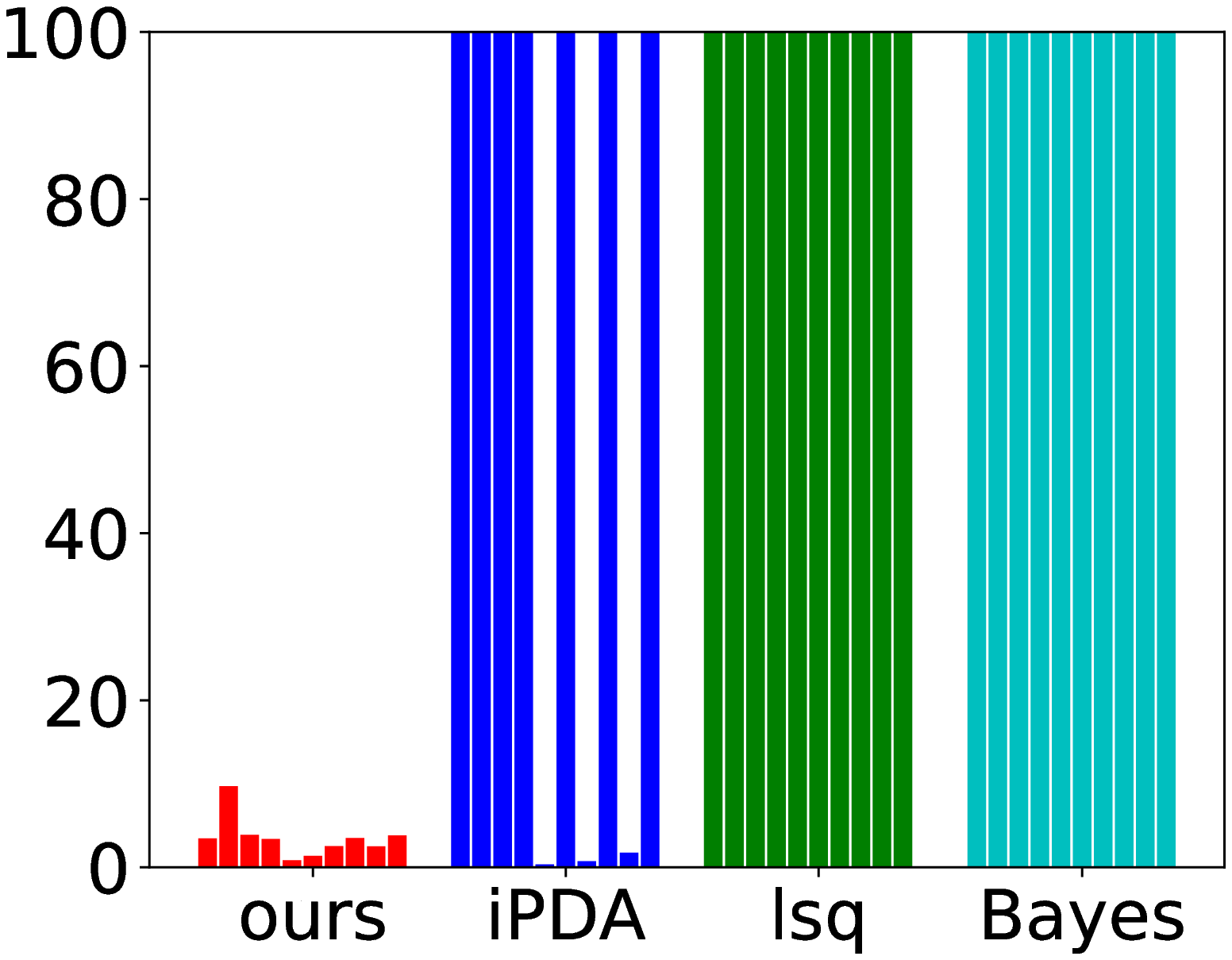}\\
     \hspace{3ex}\rotatebox{90}{\hspace{2.5ex} \small{$\theta_0$ error} } &       
    \hspace{1ex}\includegraphics*[width=0.225\linewidth,height=0.165\linewidth]{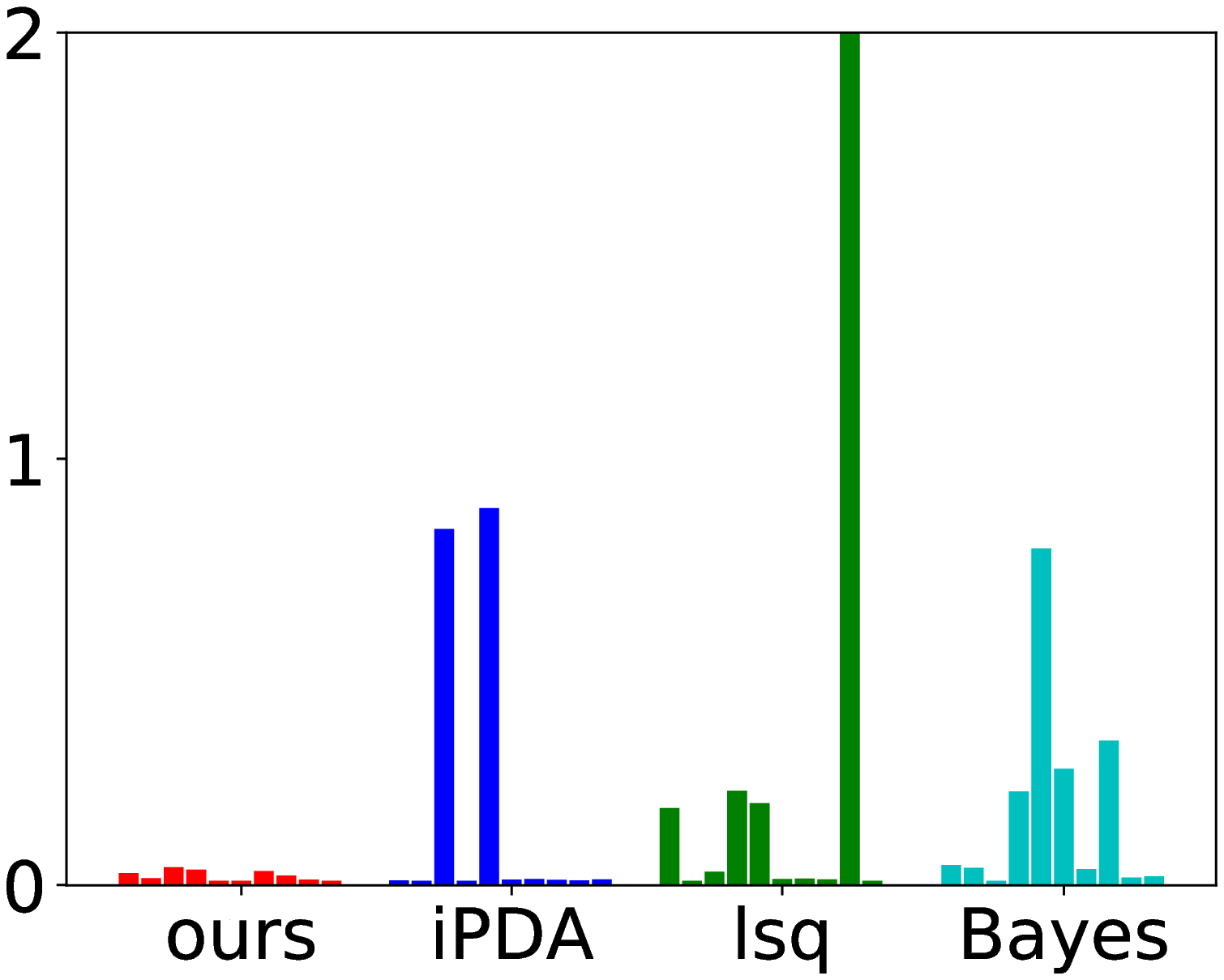}&
    \hspace{1ex}\includegraphics*[width=0.225\linewidth,height=0.165\linewidth]{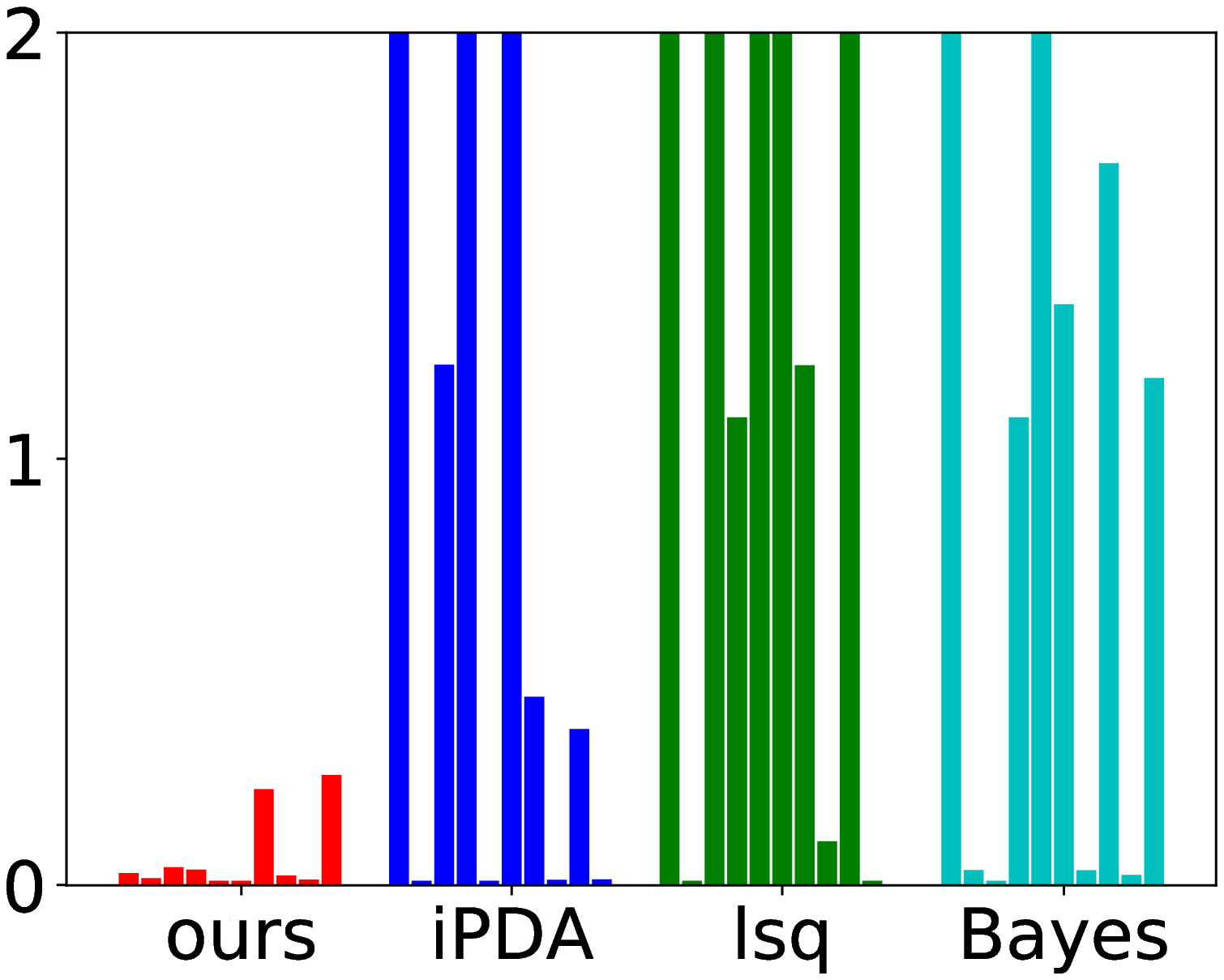}&
    \hspace{1ex}\includegraphics*[width=0.225\linewidth,height=0.165\linewidth]{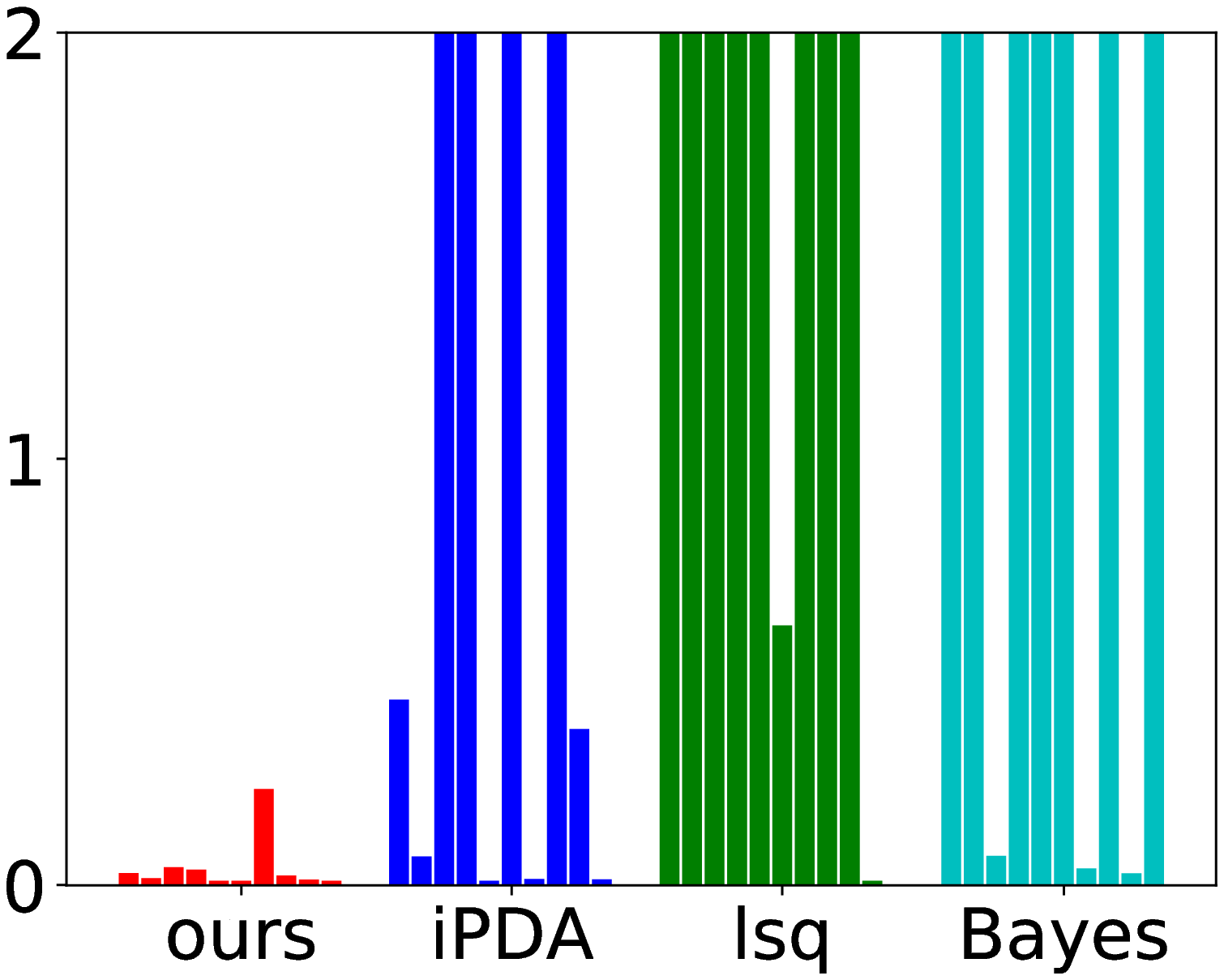}&
    \hspace{1ex}\includegraphics*[width=0.225\linewidth,height=0.165\linewidth]{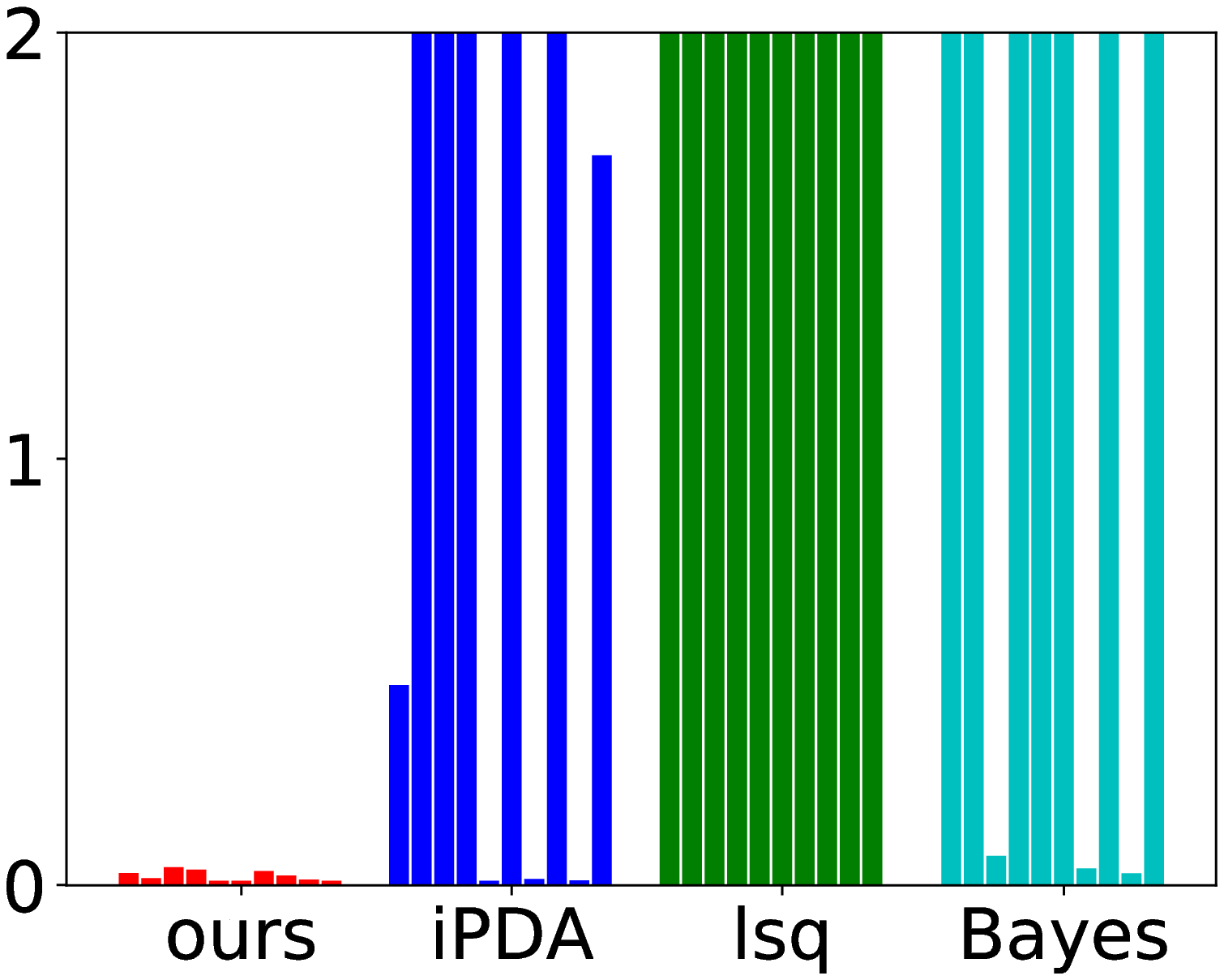}\\
     \hspace{3ex}\rotatebox{90}{\hspace{2.5ex} \small{$\theta_1$ error} } &       
    \hspace{1ex}\includegraphics*[width=0.225\linewidth,height=0.165\linewidth]{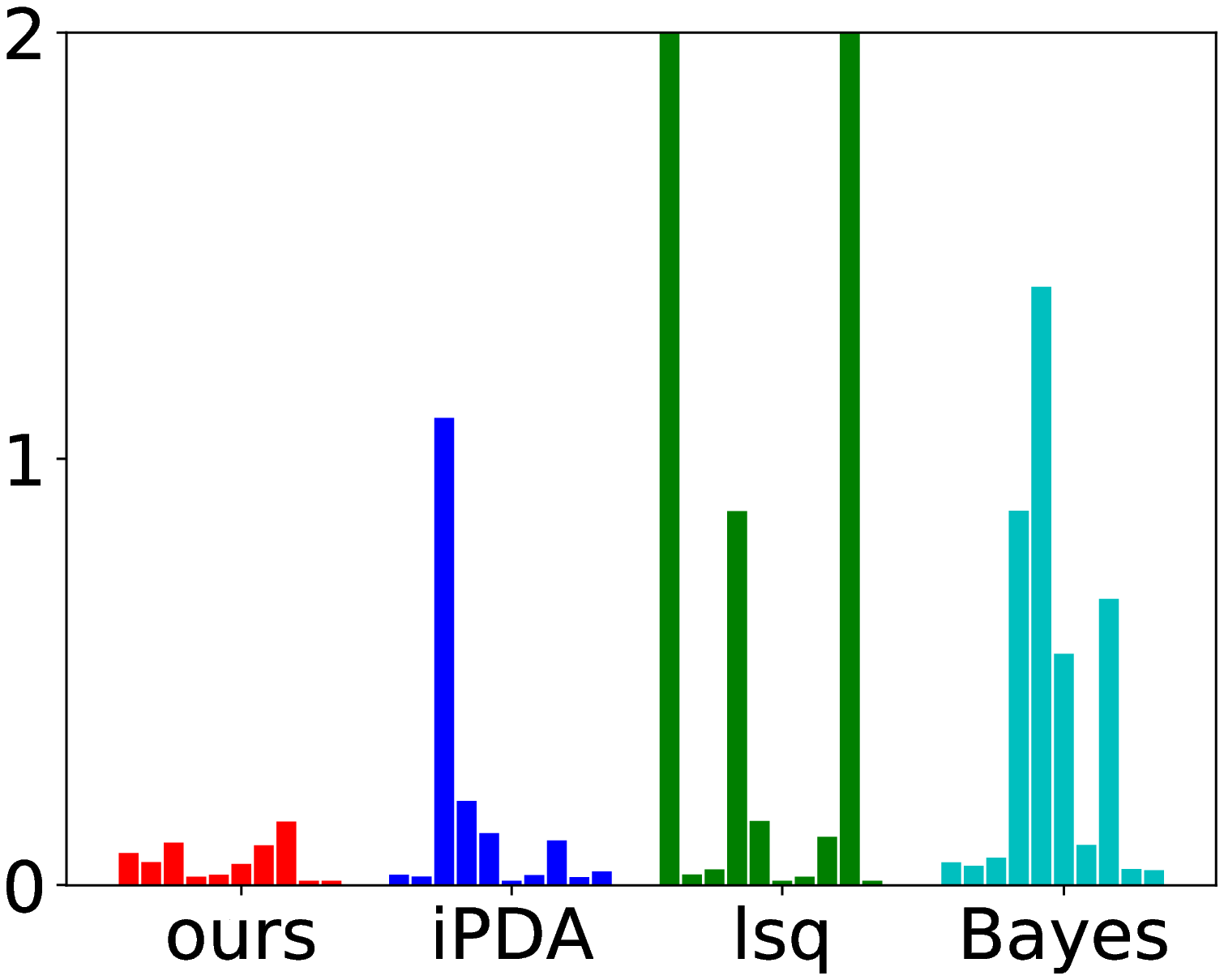}&
    \hspace{1ex}\includegraphics*[width=0.225\linewidth,height=0.165\linewidth]{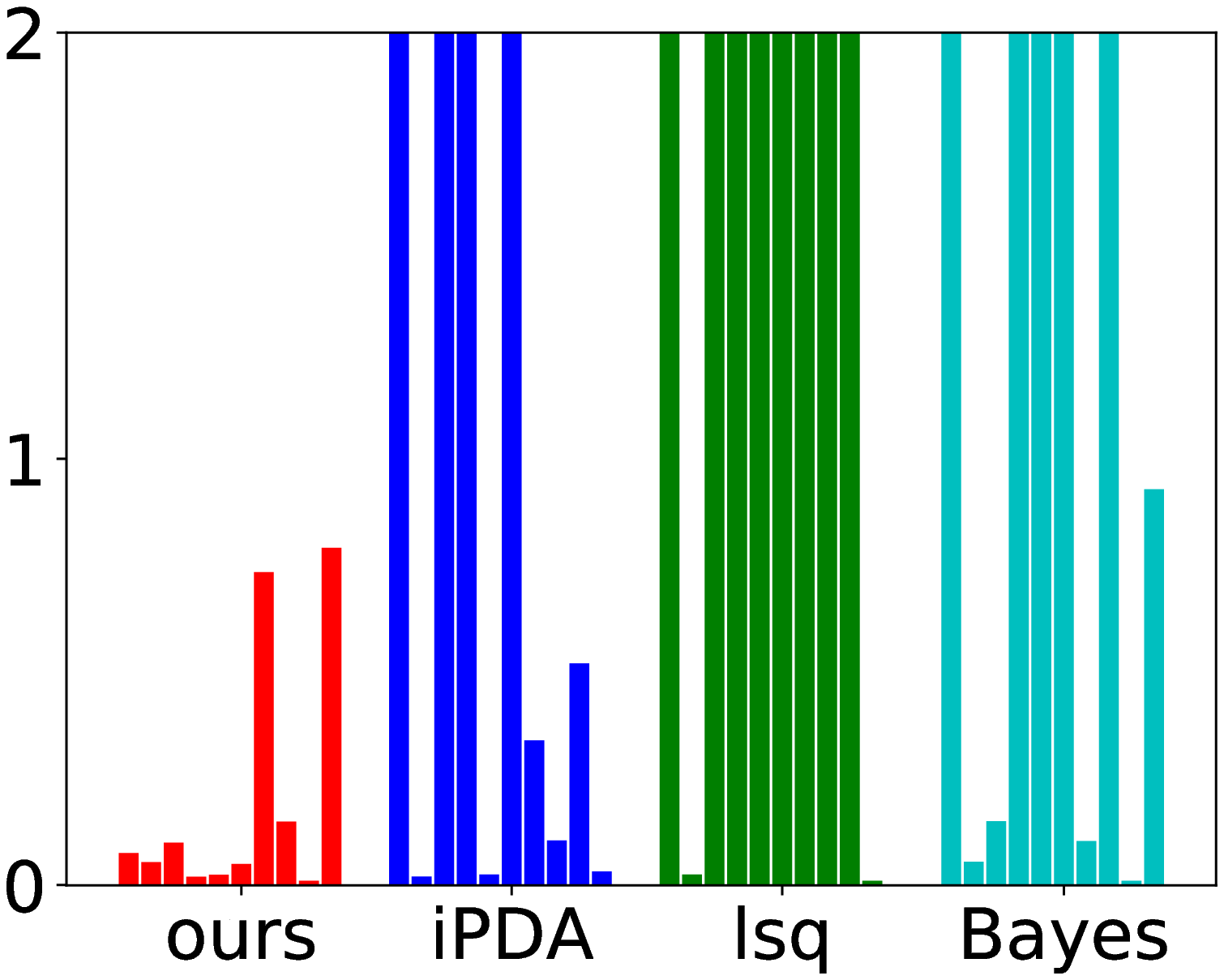}&
    \hspace{1ex}\includegraphics*[width=0.225\linewidth,height=0.165\linewidth]{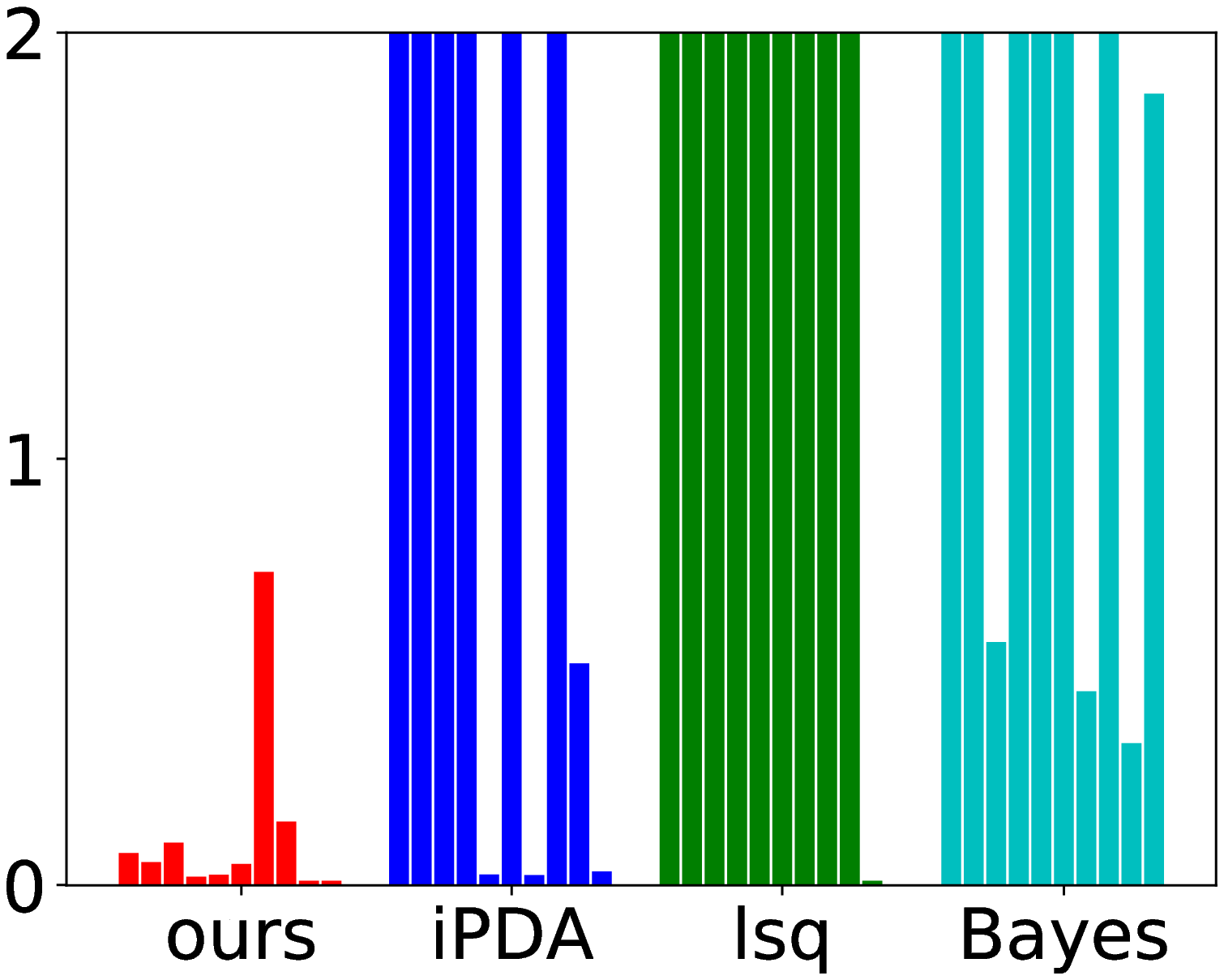}&
    \hspace{1ex}\includegraphics*[width=0.225\linewidth,height=0.165\linewidth]{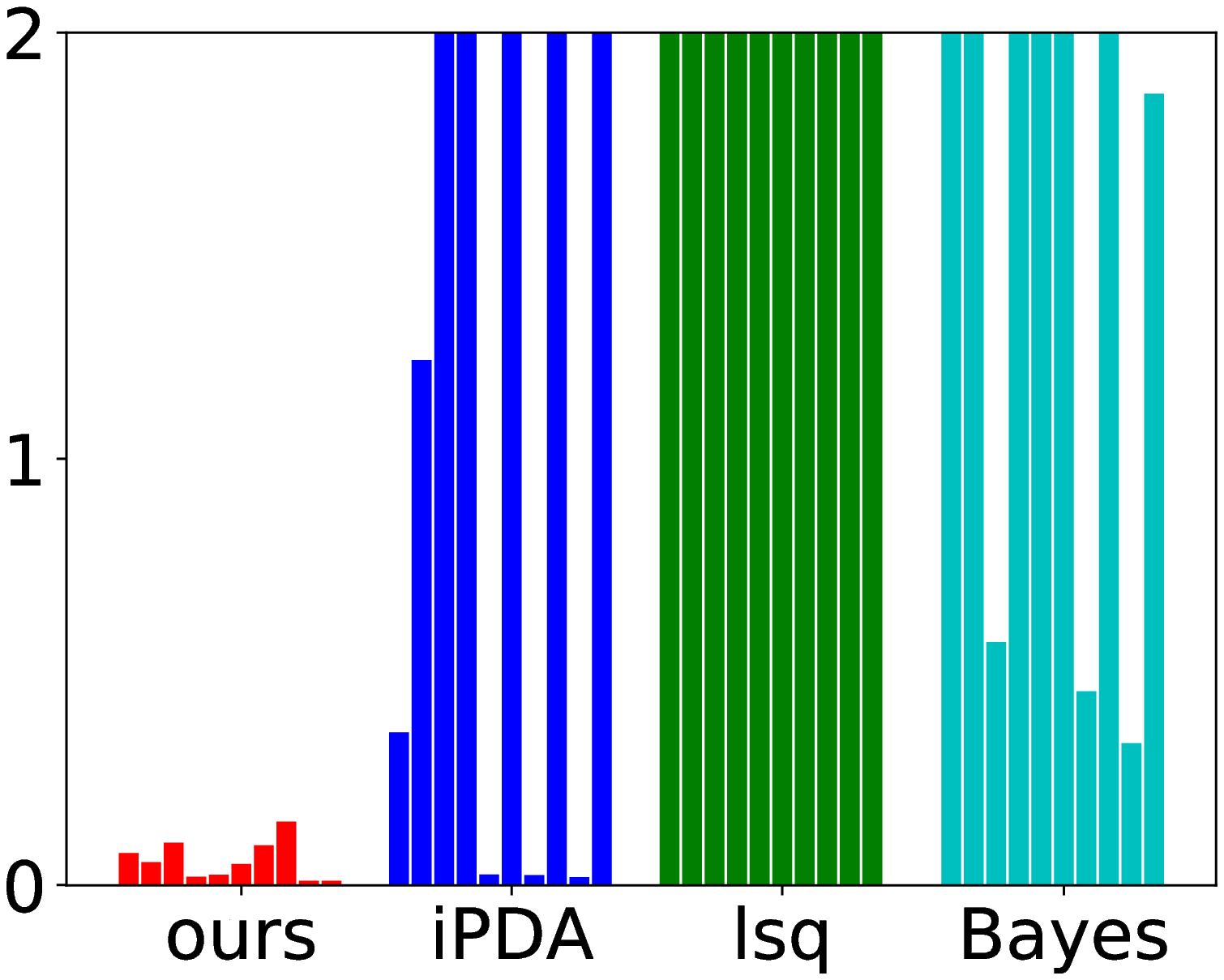}\\
     \hspace{3ex}\rotatebox{90}{\hspace{2.5ex} \small{$\theta_2$ error} } &       
    \hspace{1ex}\includegraphics*[width=0.225\linewidth,height=0.165\linewidth]{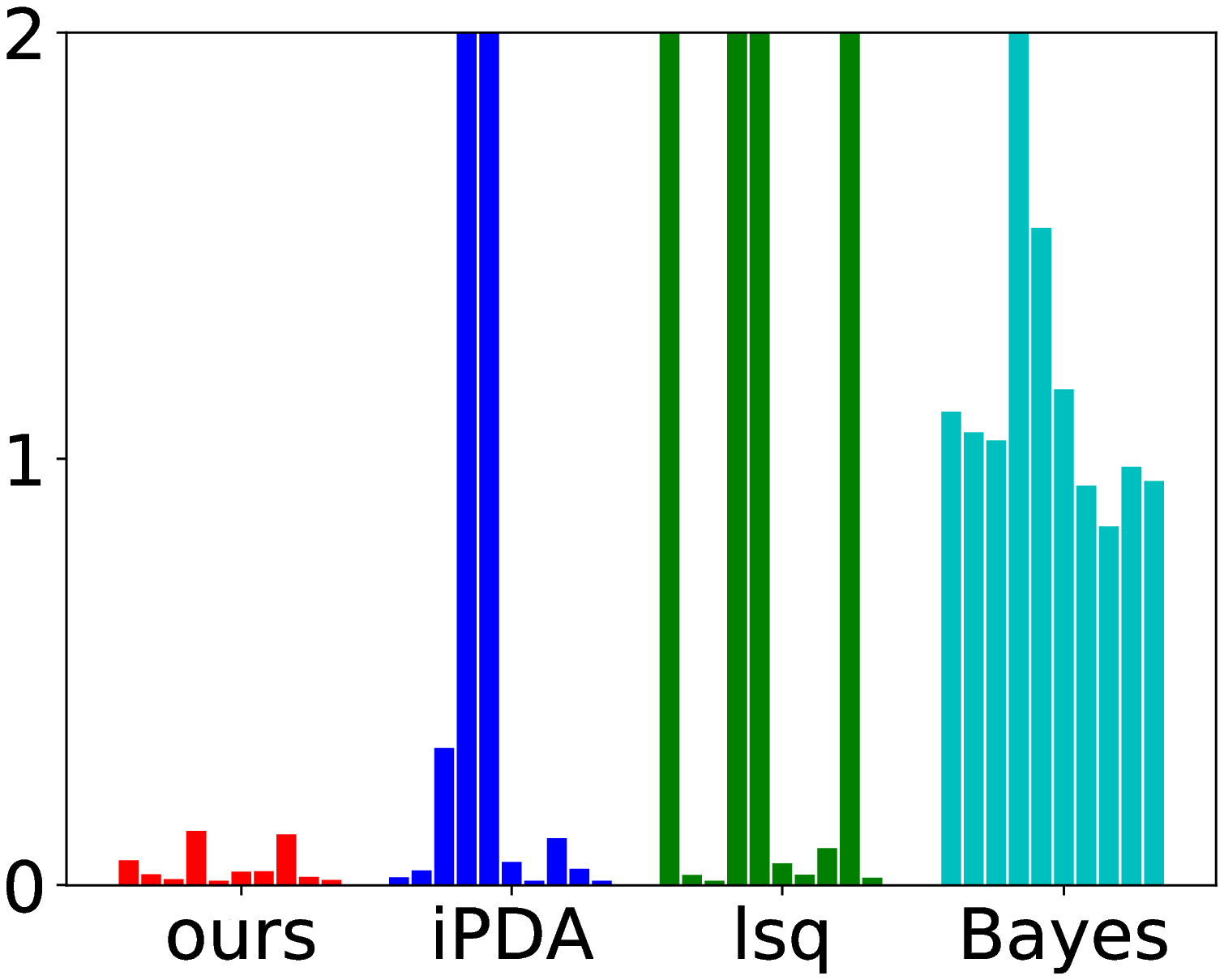}&
    \hspace{1ex}\includegraphics*[width=0.225\linewidth,height=0.165\linewidth]{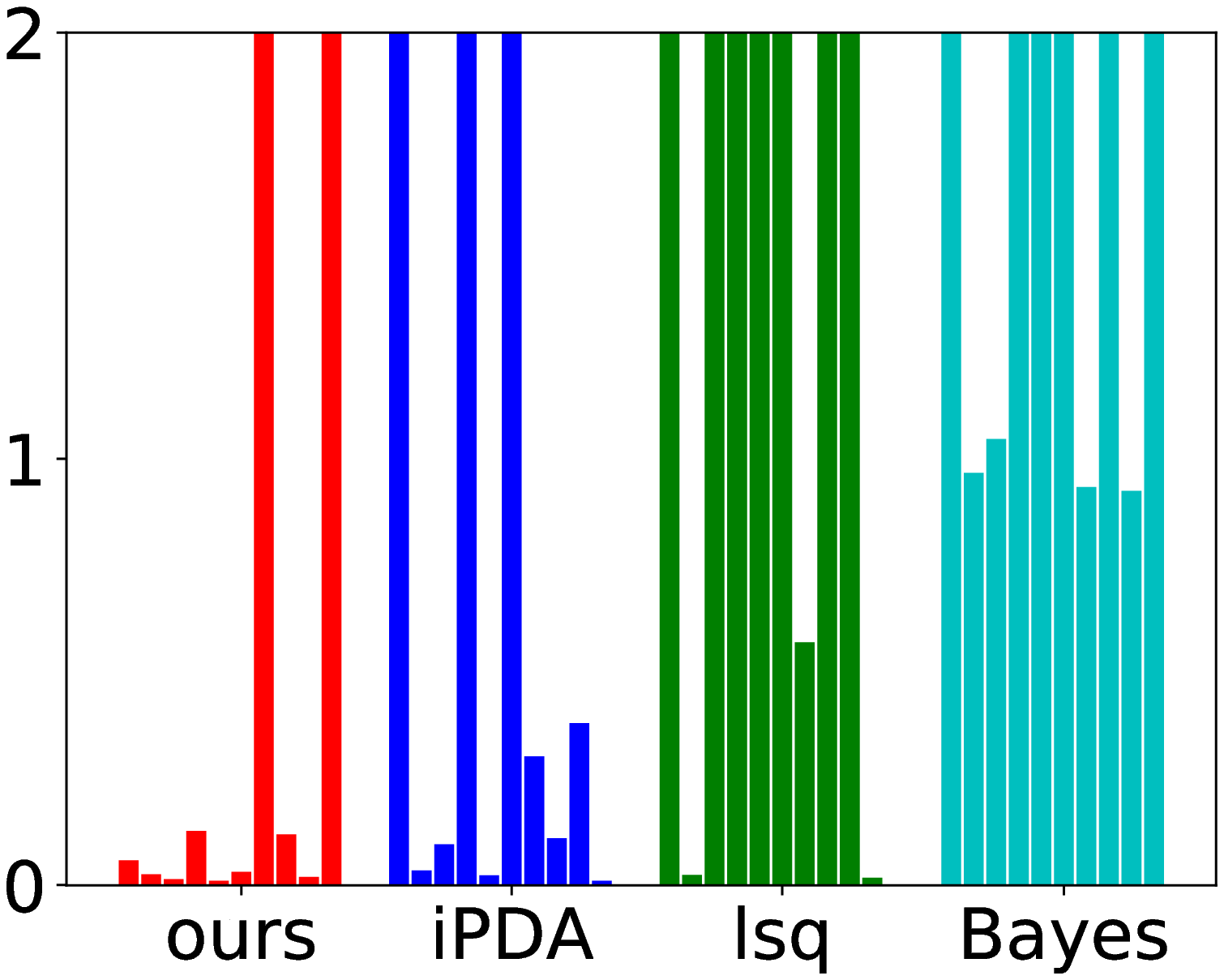}&
    \hspace{1ex}\includegraphics*[width=0.225\linewidth,height=0.165\linewidth]{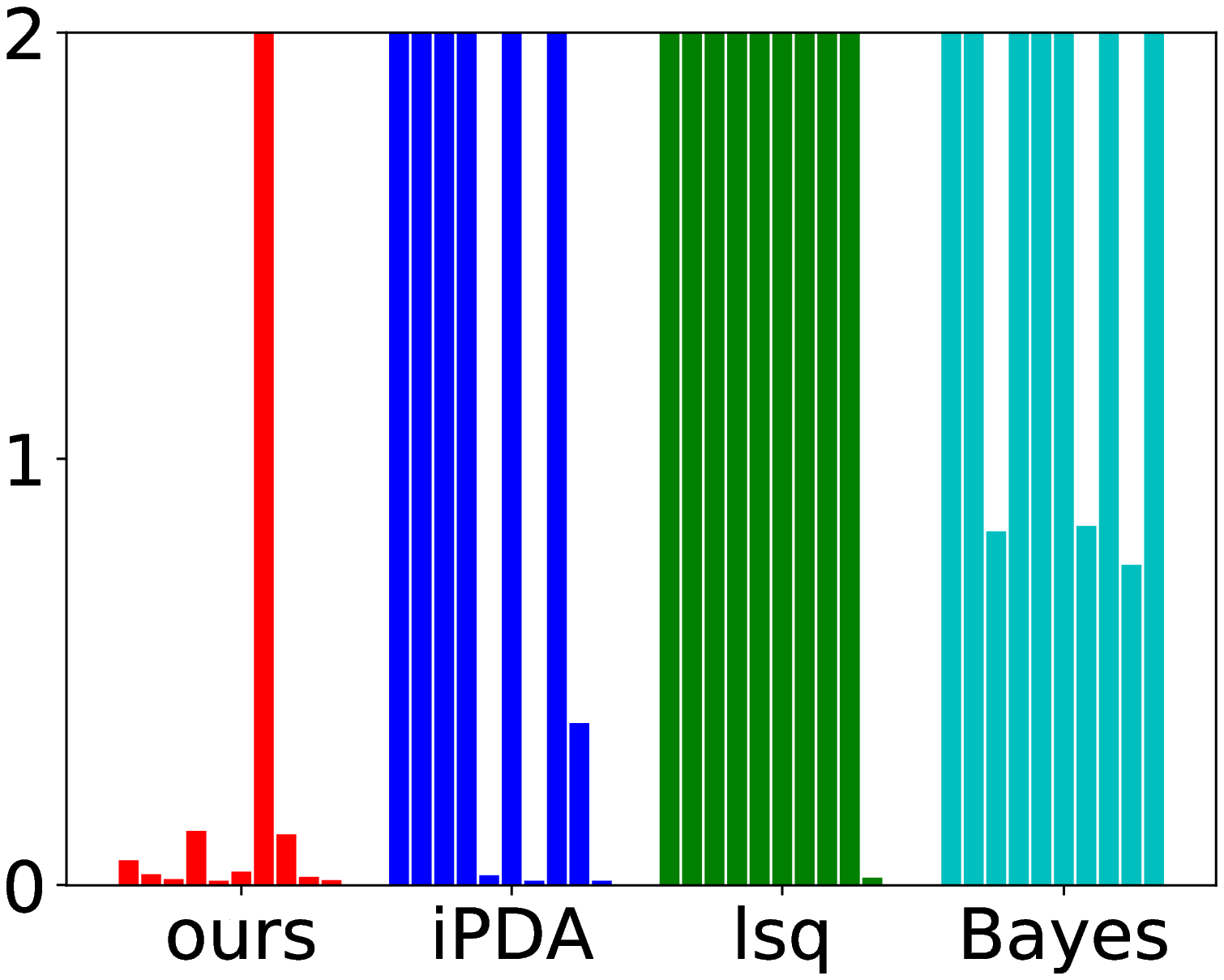}&
    \hspace{1ex}\includegraphics*[width=0.225\linewidth,height=0.165\linewidth]{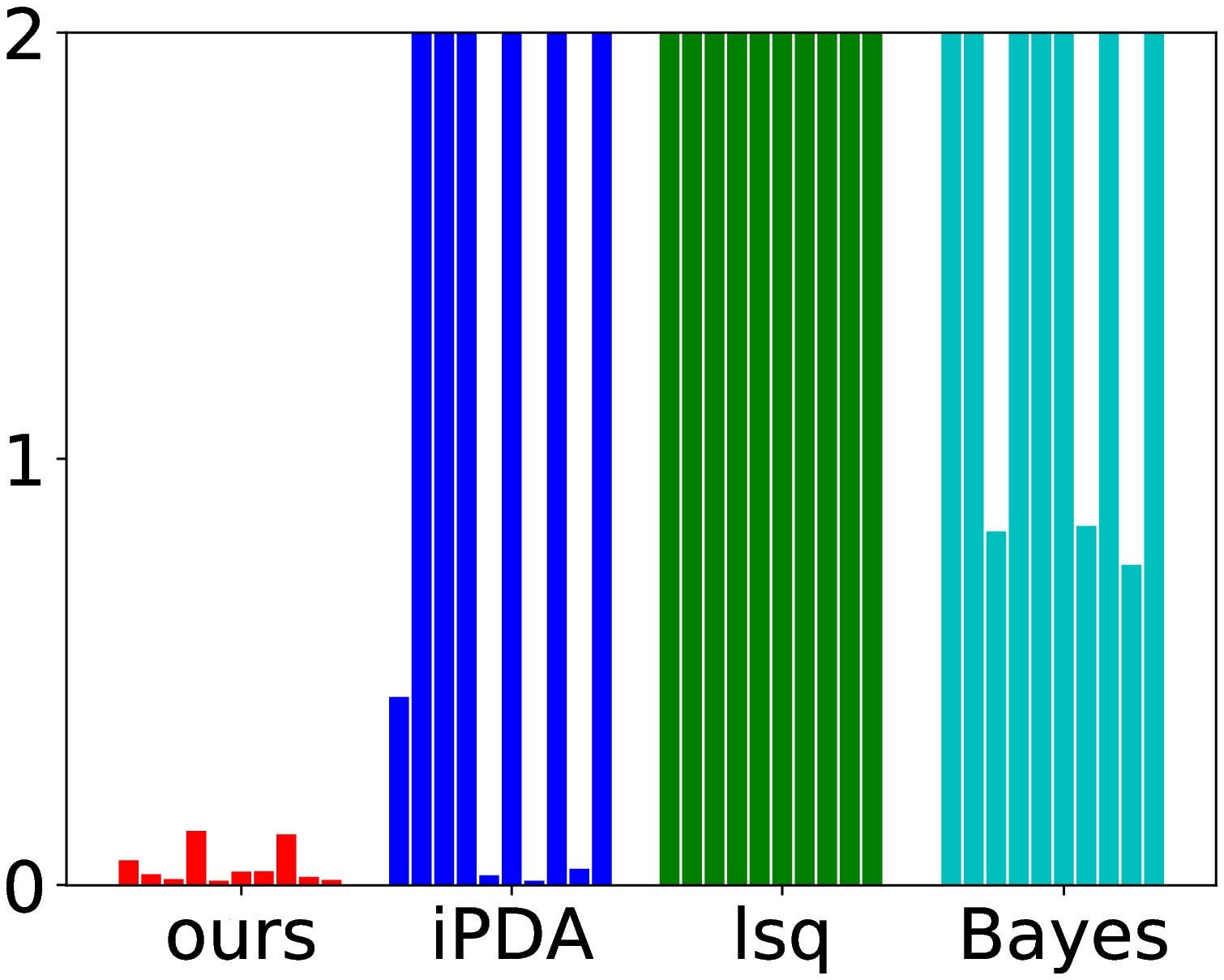}
  \end{tabular}
  \caption{Similar to Fig.\ 4 of the paper, but on the FitzHugh--Nagumo model. Our method significantly outperforms other methods.}

  \label{f:compSUPP}
\end{figure*}

\begin{figure*}[!t]
  \centering
  \begin{tabular}{@{}c@{\hspace{1ex}}c@{\hspace{0ex}}c@{\hspace{0ex}}c@{}c@{}}
    &\large{$\sigma^2=.5$} & \large{$\sigma^2=1$} &  \large{$\sigma^2=1.5$}\\
    \rotatebox{90}{\hspace{5ex}  \large{pred.\ error} } &
    \includegraphics*[width=0.31\linewidth]{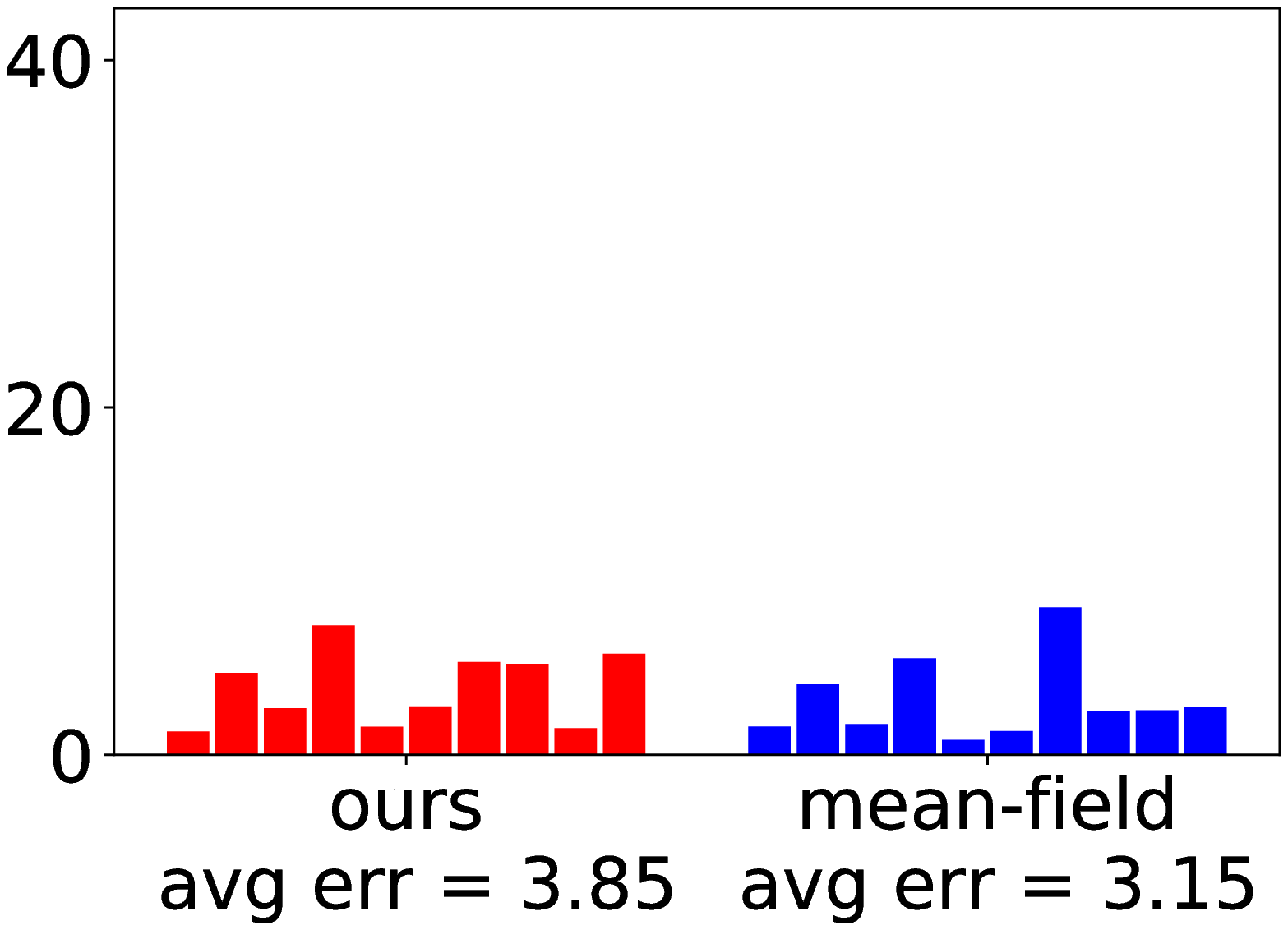}& 
	\includegraphics*[width=0.31\linewidth]{predator/compad3/objstd1.eps}&
	 \includegraphics*[width=0.31\linewidth]{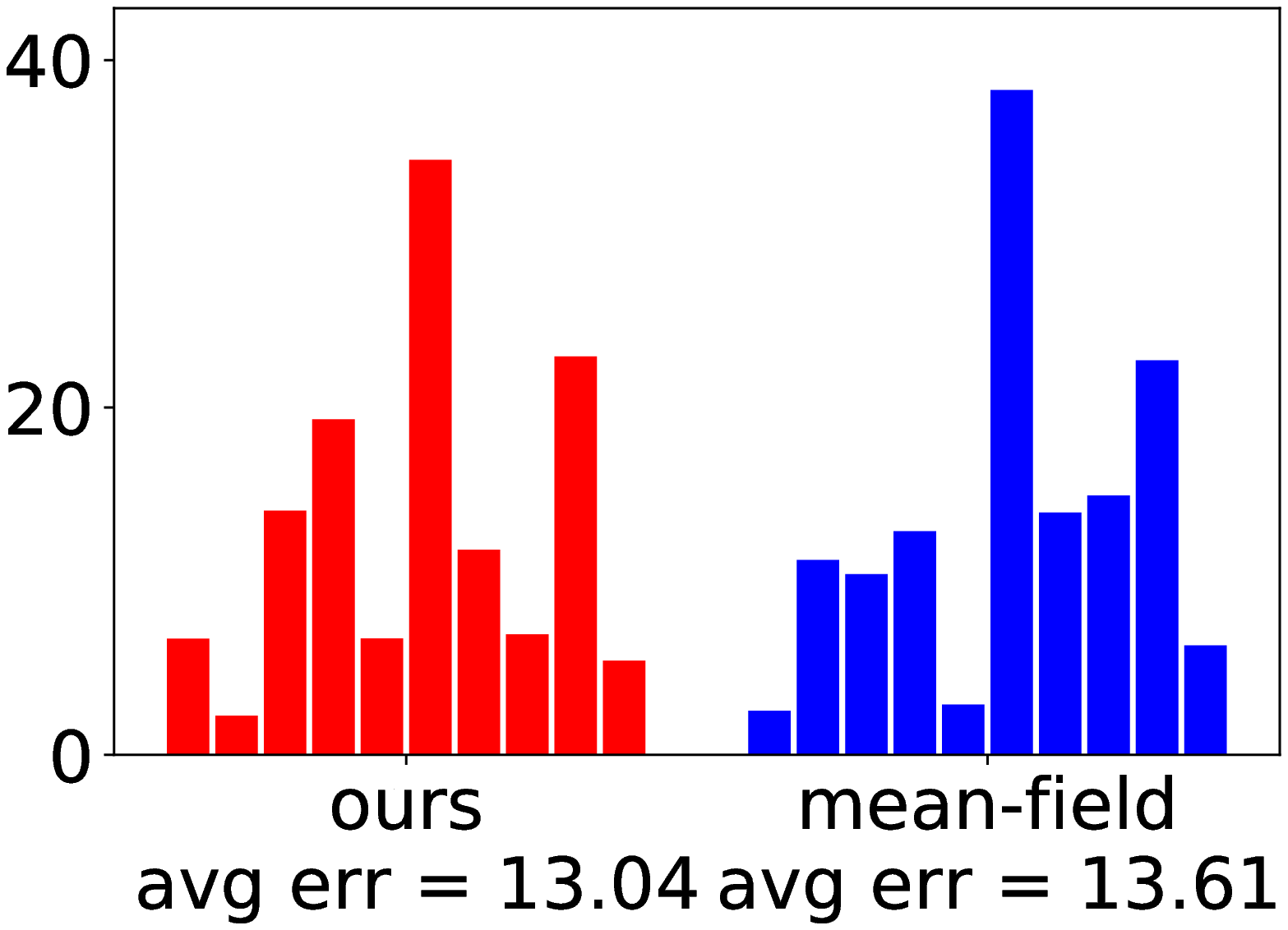}\\	 
    \rotatebox{90}{\hspace{7ex}  \large{$\theta_0$ error} } &  
    \includegraphics*[width=0.31\linewidth]{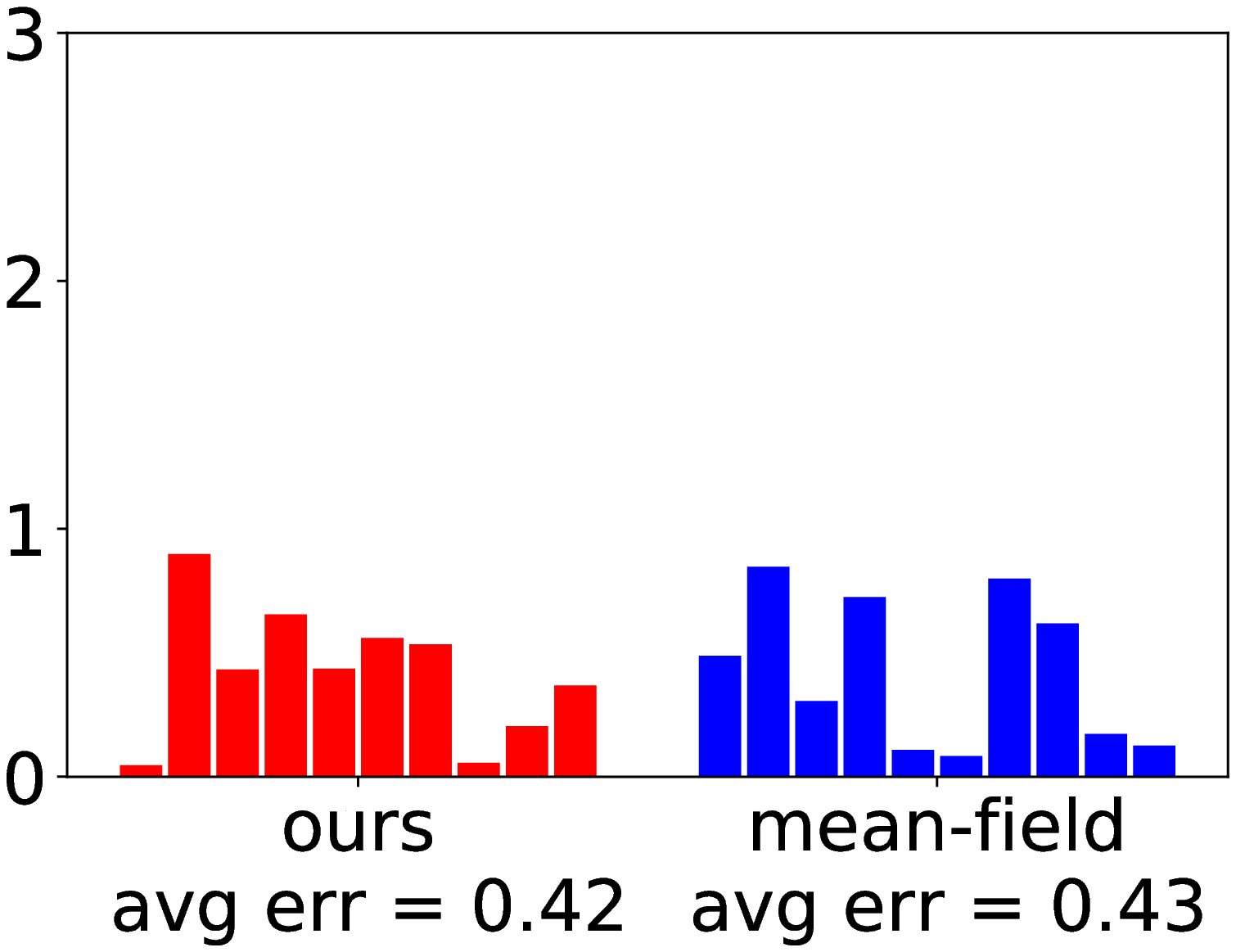}&
	 \includegraphics*[width=0.31\linewidth]{predator/compad3/param0std1.eps}&
	 \includegraphics*[width=0.31\linewidth]{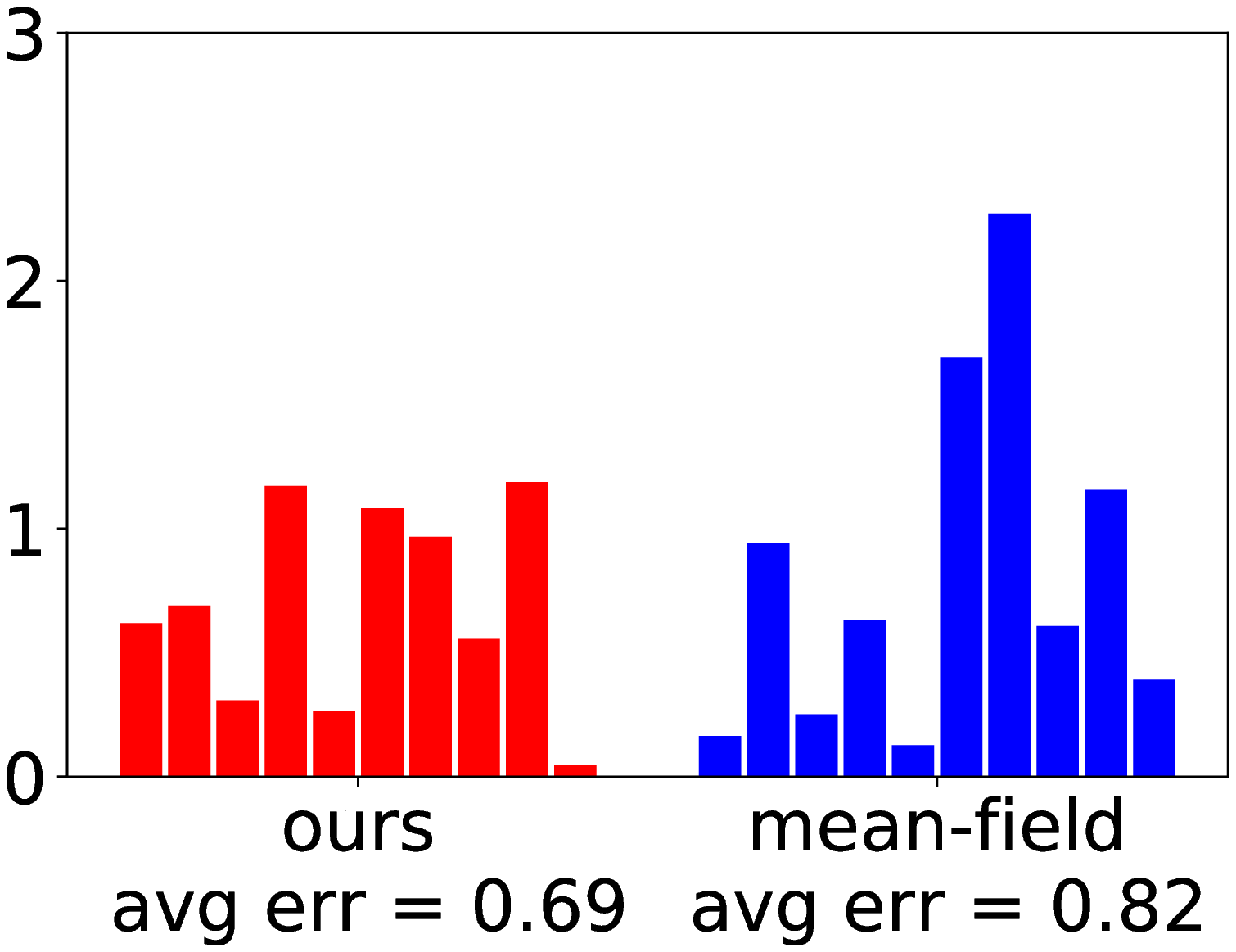}\\	 
	  \rotatebox{90}{\hspace{7ex}  \large{$\theta_1$ error} } &  
    \includegraphics*[width=0.31\linewidth]{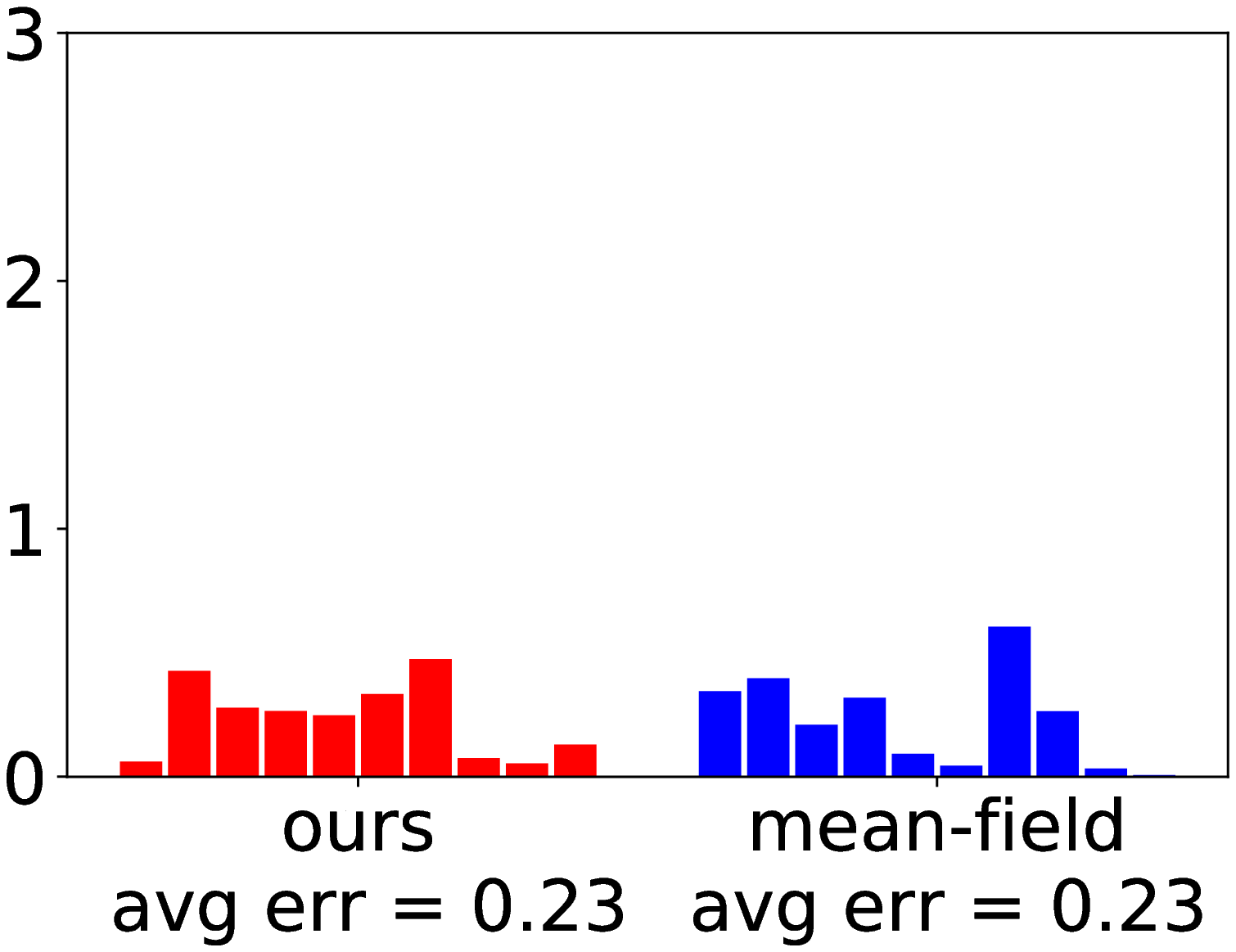}&
	 \includegraphics*[width=0.31\linewidth]{predator/compad3/param1std1.eps}&
	 \includegraphics*[width=0.31\linewidth]{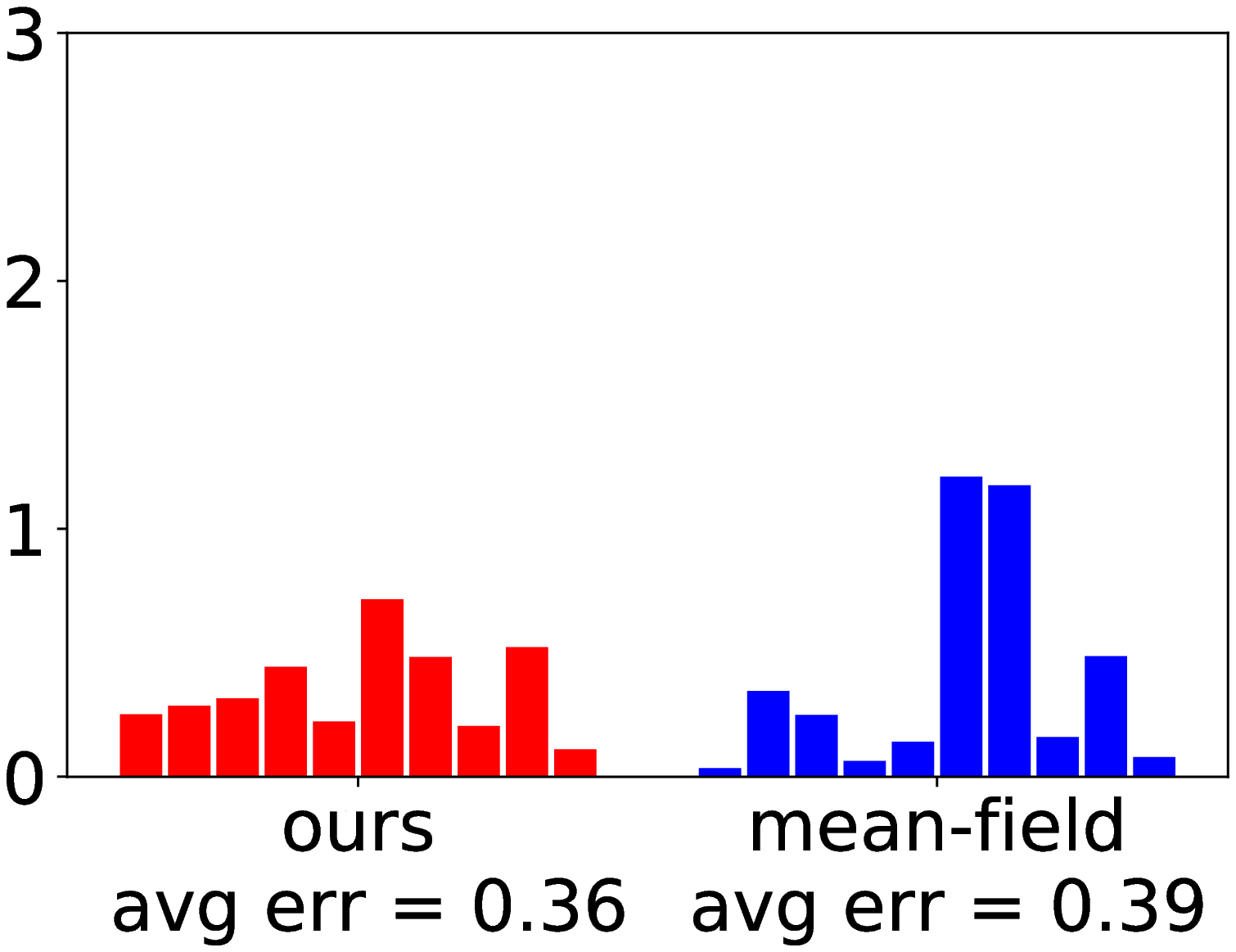}\\	 
	  \rotatebox{90}{\hspace{7ex}  \large{$\theta_2$ error} } &
    \includegraphics*[width=0.31\linewidth]{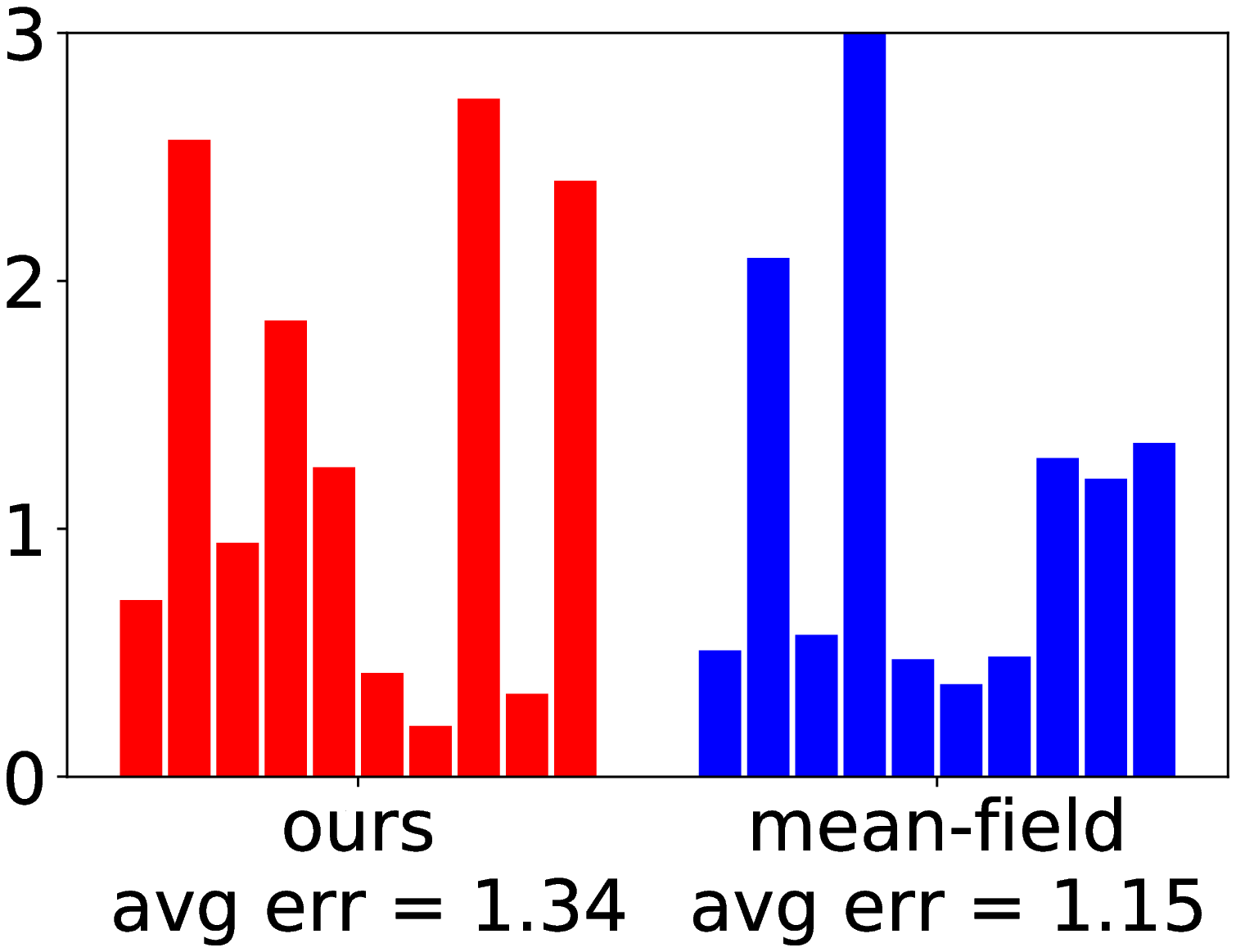}&
	 \includegraphics*[width=0.31\linewidth]{predator/compad3/param2std1.eps}&
	 \includegraphics*[width=0.31\linewidth]{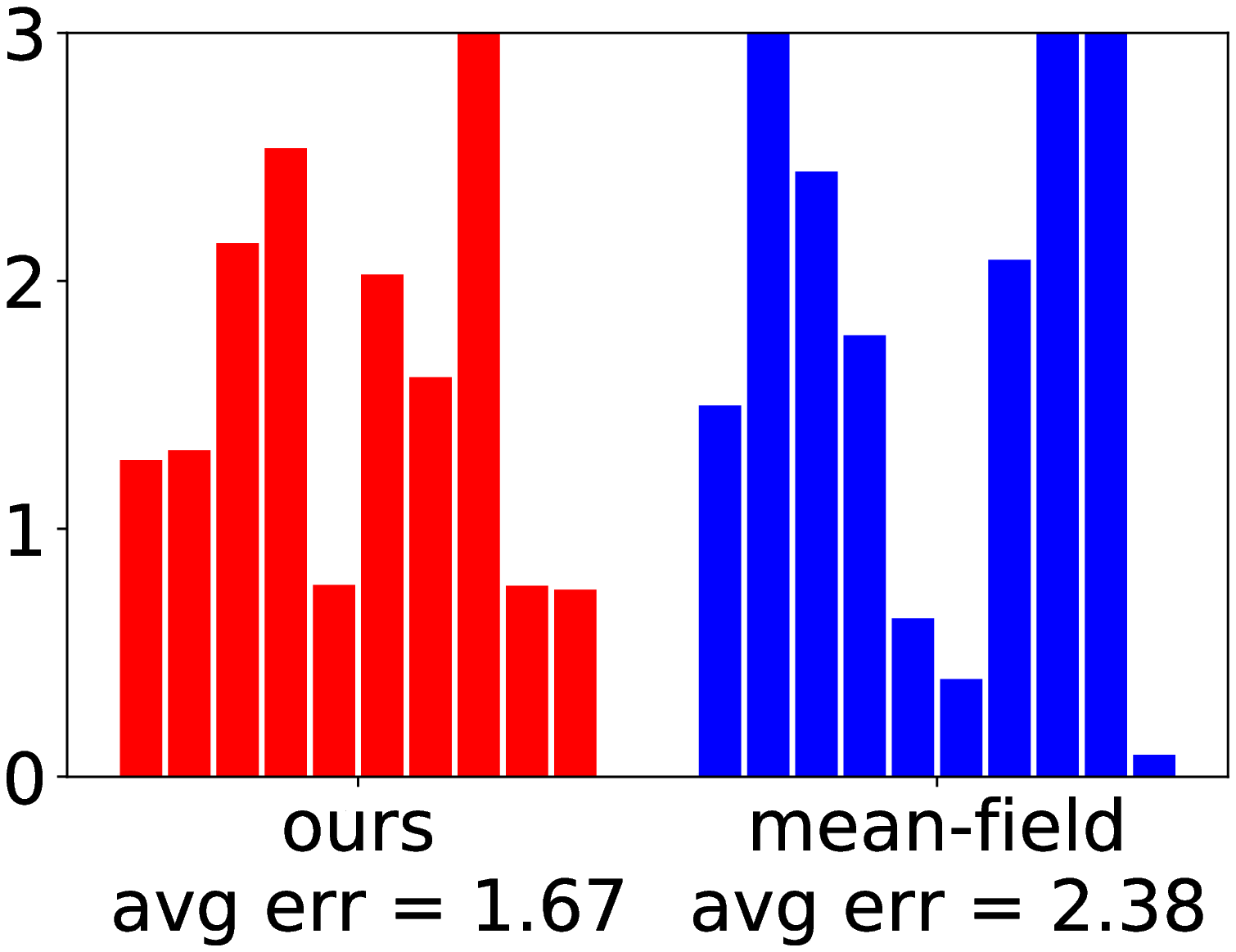}\\	 	 
	 	  \rotatebox{90}{\hspace{7ex}  \large{$\theta_3$ error} } &  
    \includegraphics*[width=0.31\linewidth]{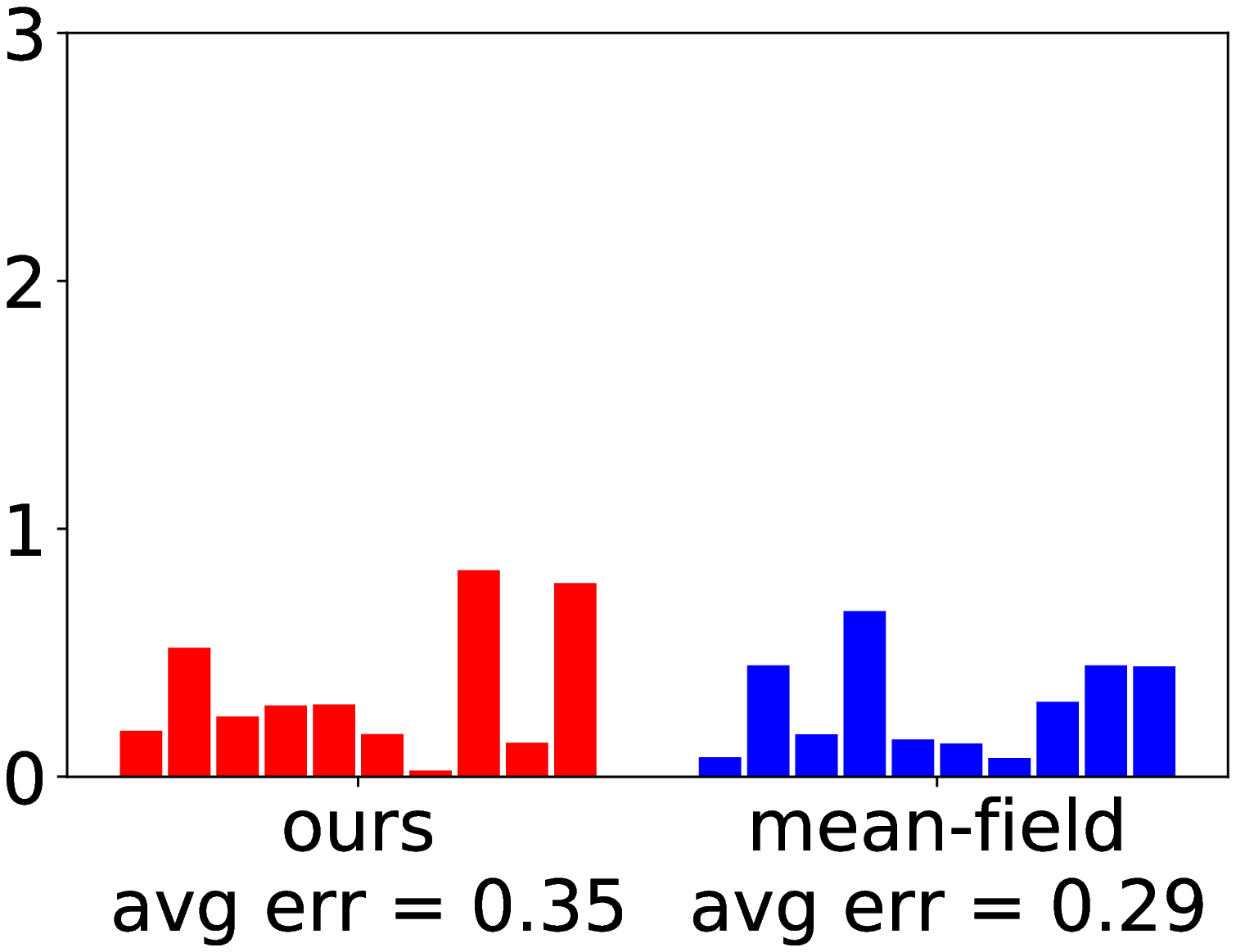}&
	 \includegraphics*[width=0.31\linewidth]{predator/compad3/param3std1.eps}&
	 \includegraphics*[width=0.31\linewidth]{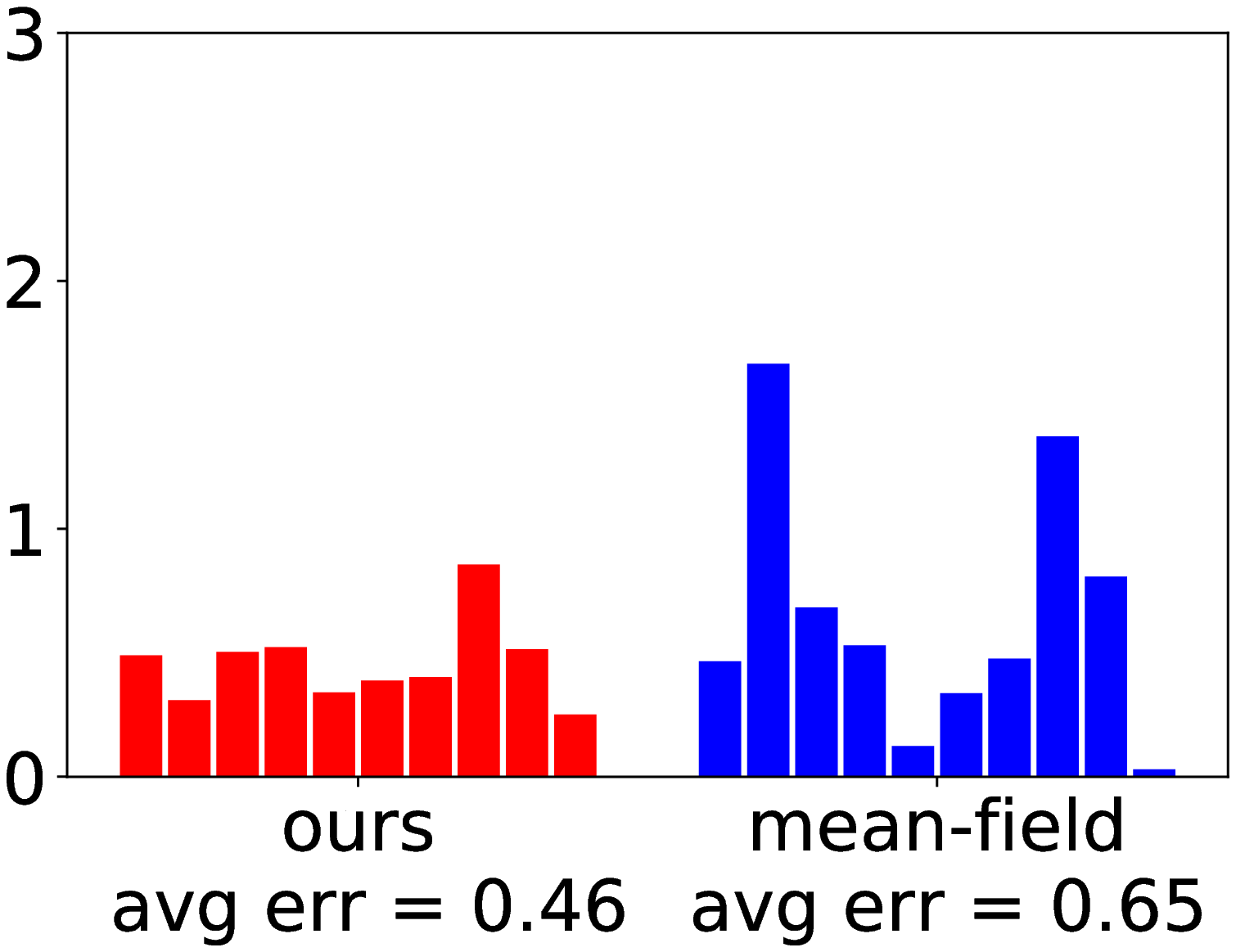}
  \end{tabular}
  \caption{Comparison with the mean-field method. Similar to the first row of Fig.\ 5 in the paper, but for a set of noise variances: $\sigma^2=.5, 1,$ and $1.5$. Our method is more robust with respect to  noise and performs better.}

  \label{f:compSUPP1}
\end{figure*}

\begin{figure*}[!t]
  \centering
  \begin{tabular}{@{}c@{\hspace{1ex}}c@{\hspace{0ex}}c@{\hspace{2ex}}c@{}c@{}}
  	& \multicolumn{2}{ c }{\dotfill $T=20$ \dotfill} & \multicolumn{2}{ c }{\dotfill $T=10\,000$ \dotfill} \\
    &\small{$\sigma^2=.1$} & \small{$\sigma^2=1.5$} &  \small{$\sigma^2=.1$} & \small{$\sigma^2=1.5$}\\
    \rotatebox{90}{\hspace{2ex}  \small{avg.\ est.\ error} } &
    \includegraphics*[width=0.23\linewidth]{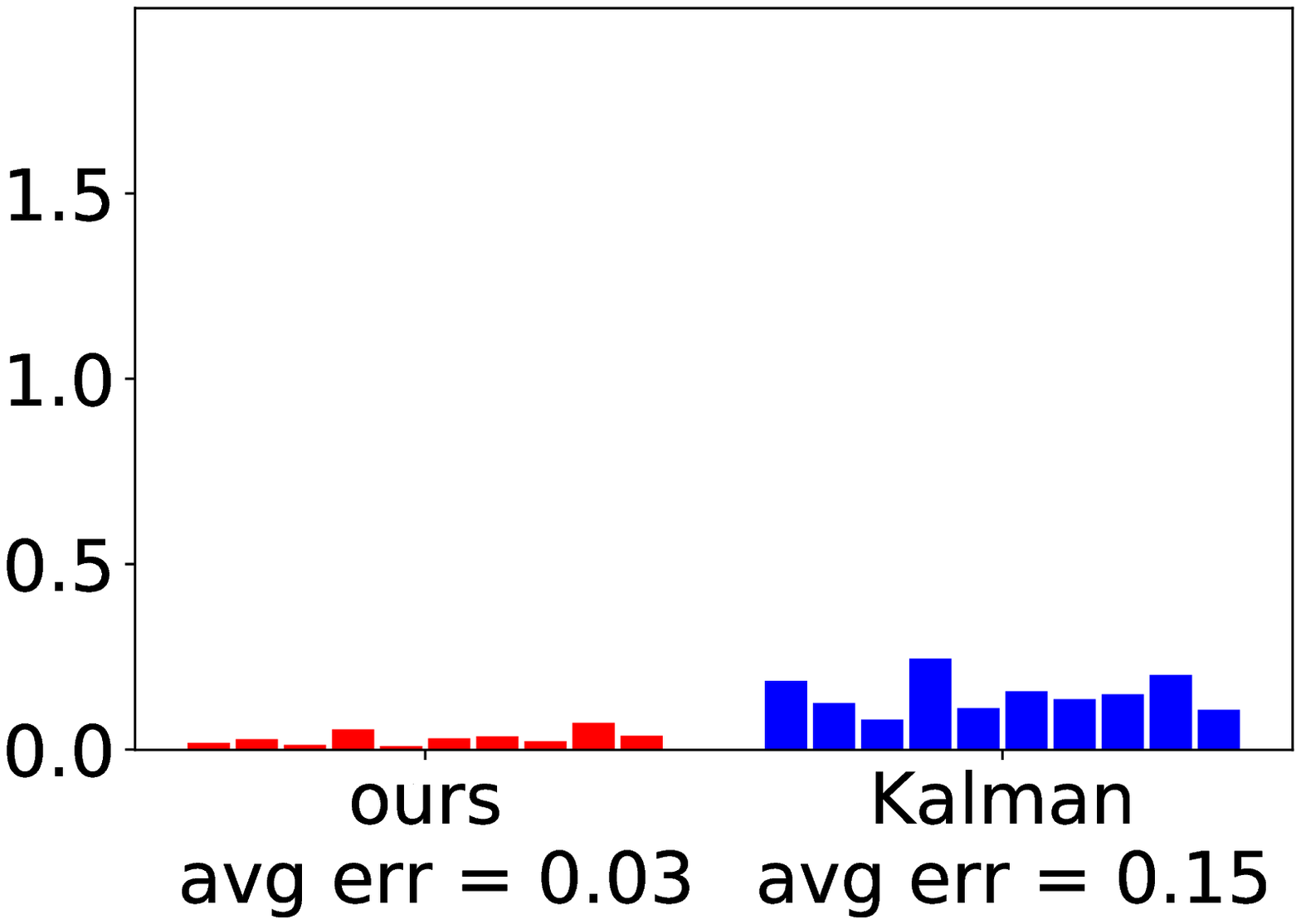}& 
    \includegraphics*[width=0.23\linewidth]{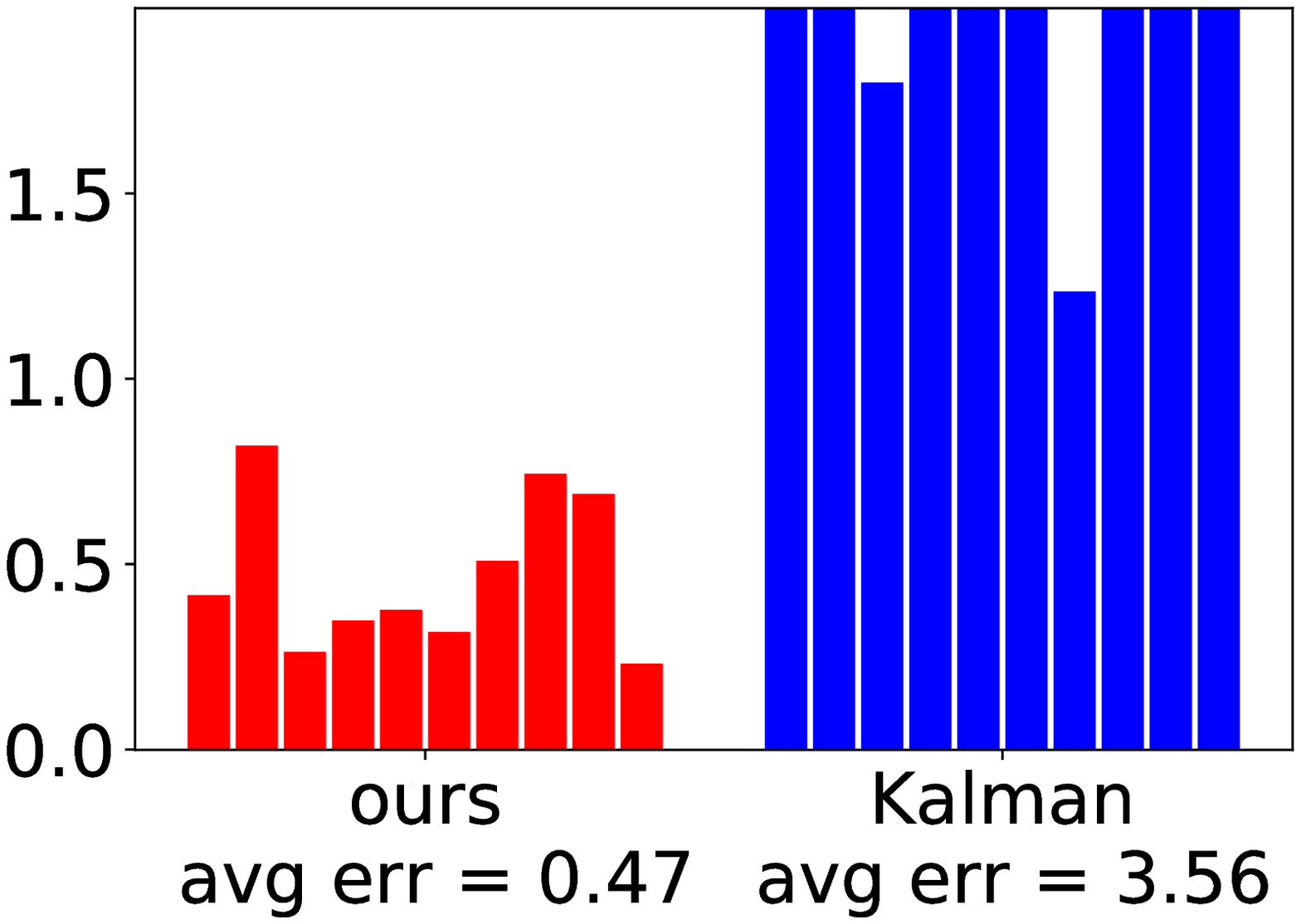}& 
	    \includegraphics*[width=0.23\linewidth]{kalman/predatorobj10000std01.eps}& 
    \includegraphics*[width=0.23\linewidth]{kalman/predatorobj10000std15.eps}\\
    \rotatebox{90}{\hspace{4ex}  \small{$\theta_0$ error} } &  
    \includegraphics*[width=0.23\linewidth]{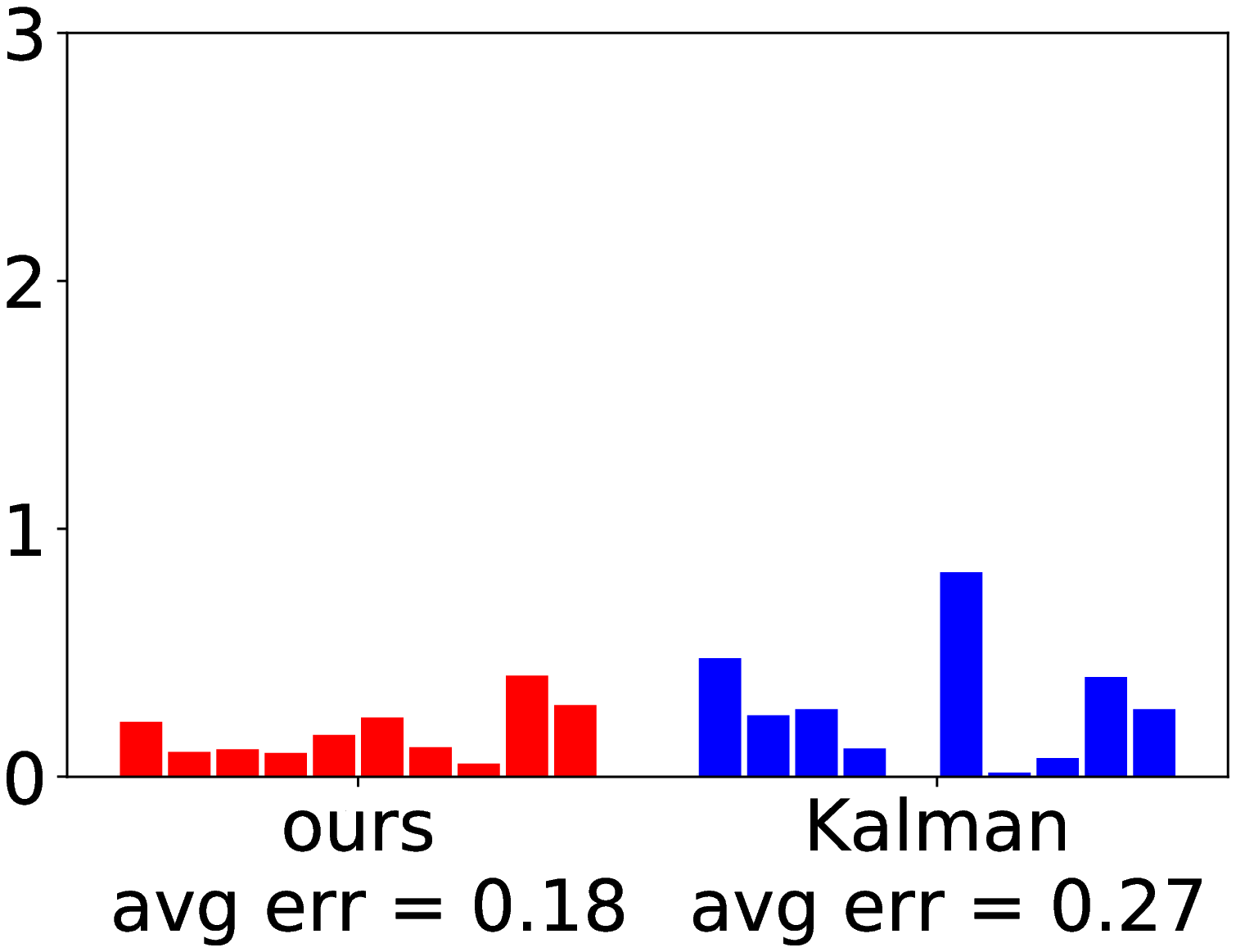}&
    \includegraphics*[width=0.23\linewidth]{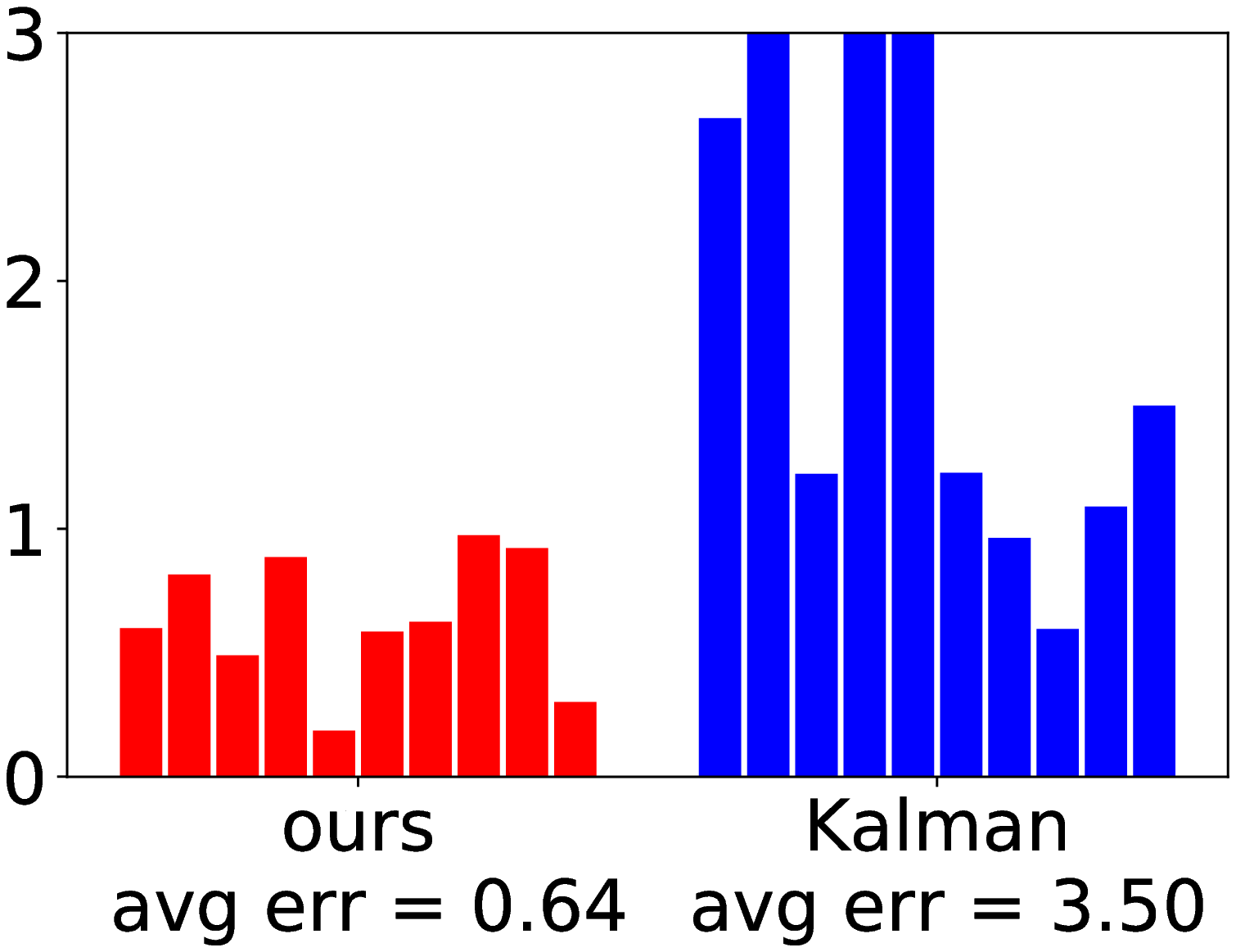}&
    \includegraphics*[width=0.23\linewidth]{kalman/predatorobjparam010000std01.eps}&
    \includegraphics*[width=0.23\linewidth]{kalman/predatorobjparam010000std15.eps}\\
	  \rotatebox{90}{\hspace{4ex}  \small{$\theta_1$ error} } &  
    \includegraphics*[width=0.23\linewidth]{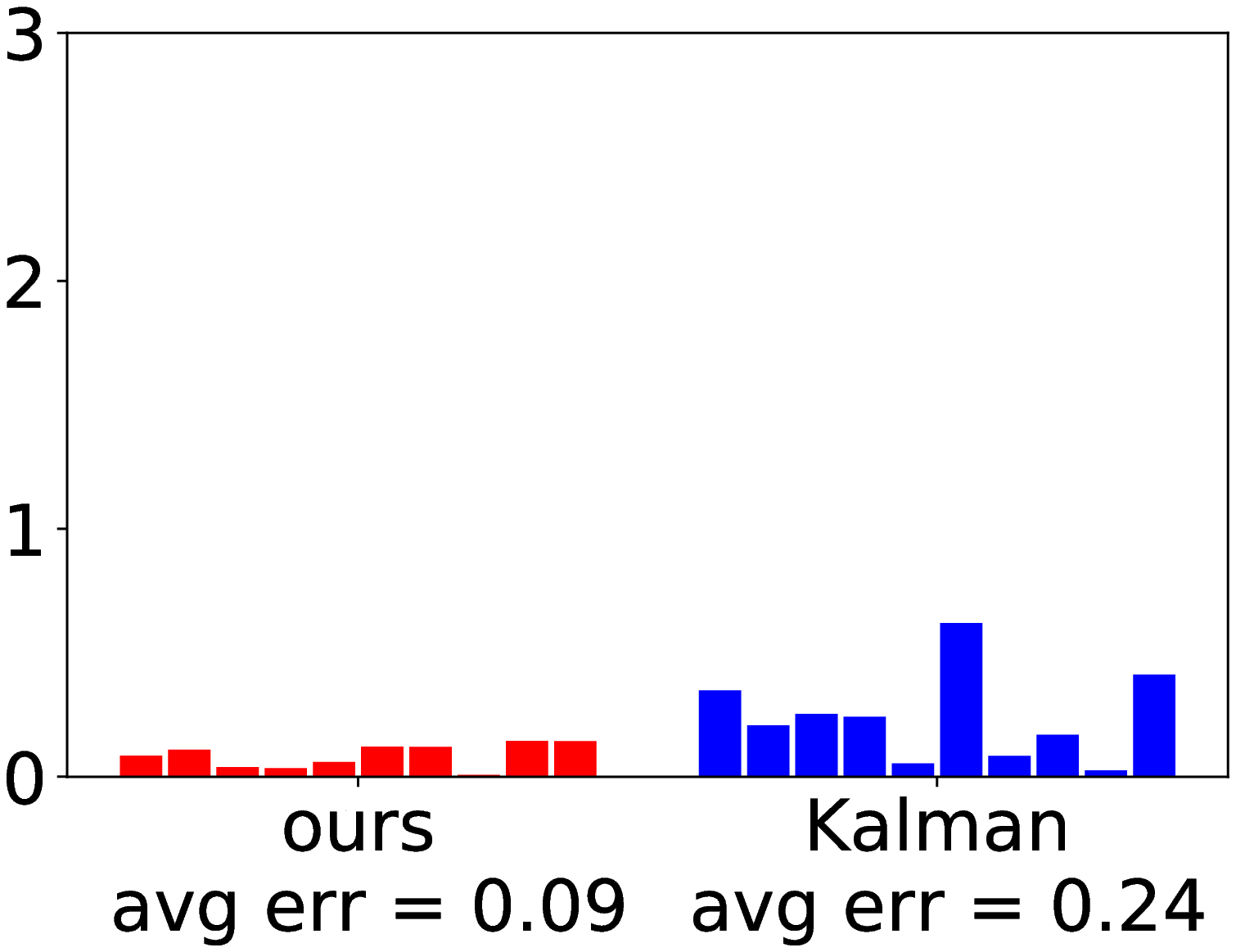}&
    \includegraphics*[width=0.23\linewidth]{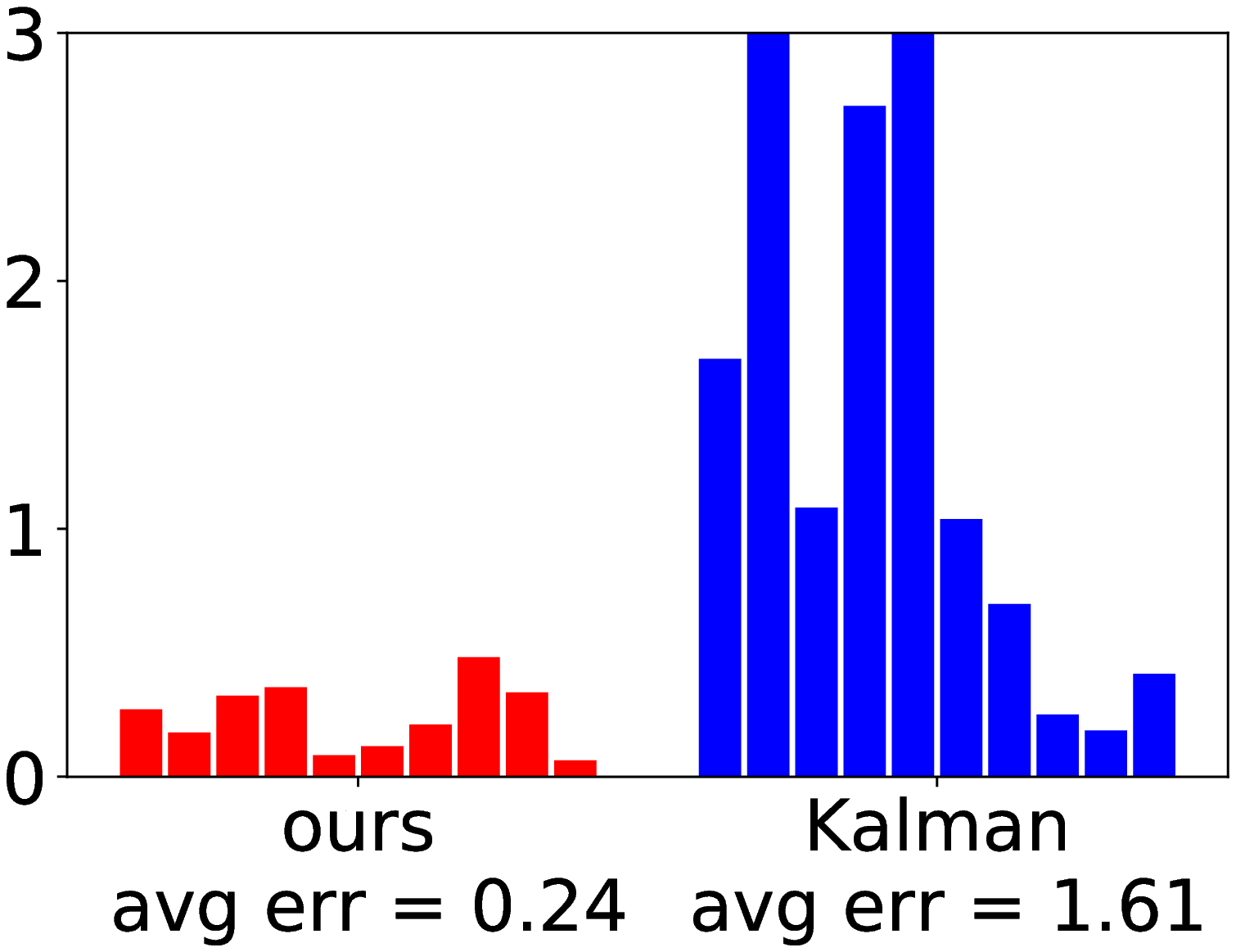}&
    \includegraphics*[width=0.23\linewidth]{kalman/predatorobjparam110000std01.eps}&
    \includegraphics*[width=0.23\linewidth]{kalman/predatorobjparam110000std15.eps}\\
	  \rotatebox{90}{\hspace{4ex}  \small{$\theta_2$ error} } &
    \includegraphics*[width=0.23\linewidth]{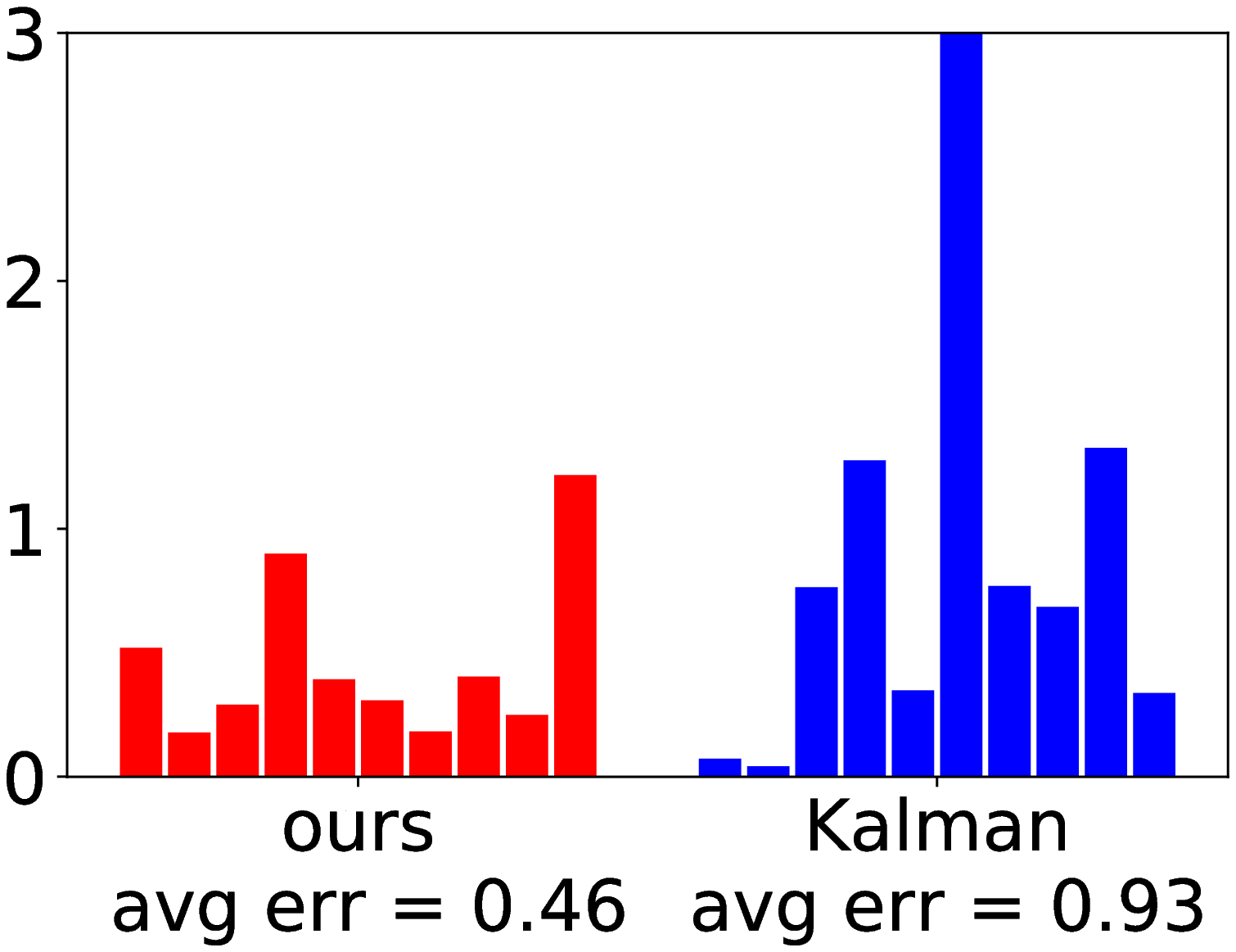}&
    \includegraphics*[width=0.23\linewidth]{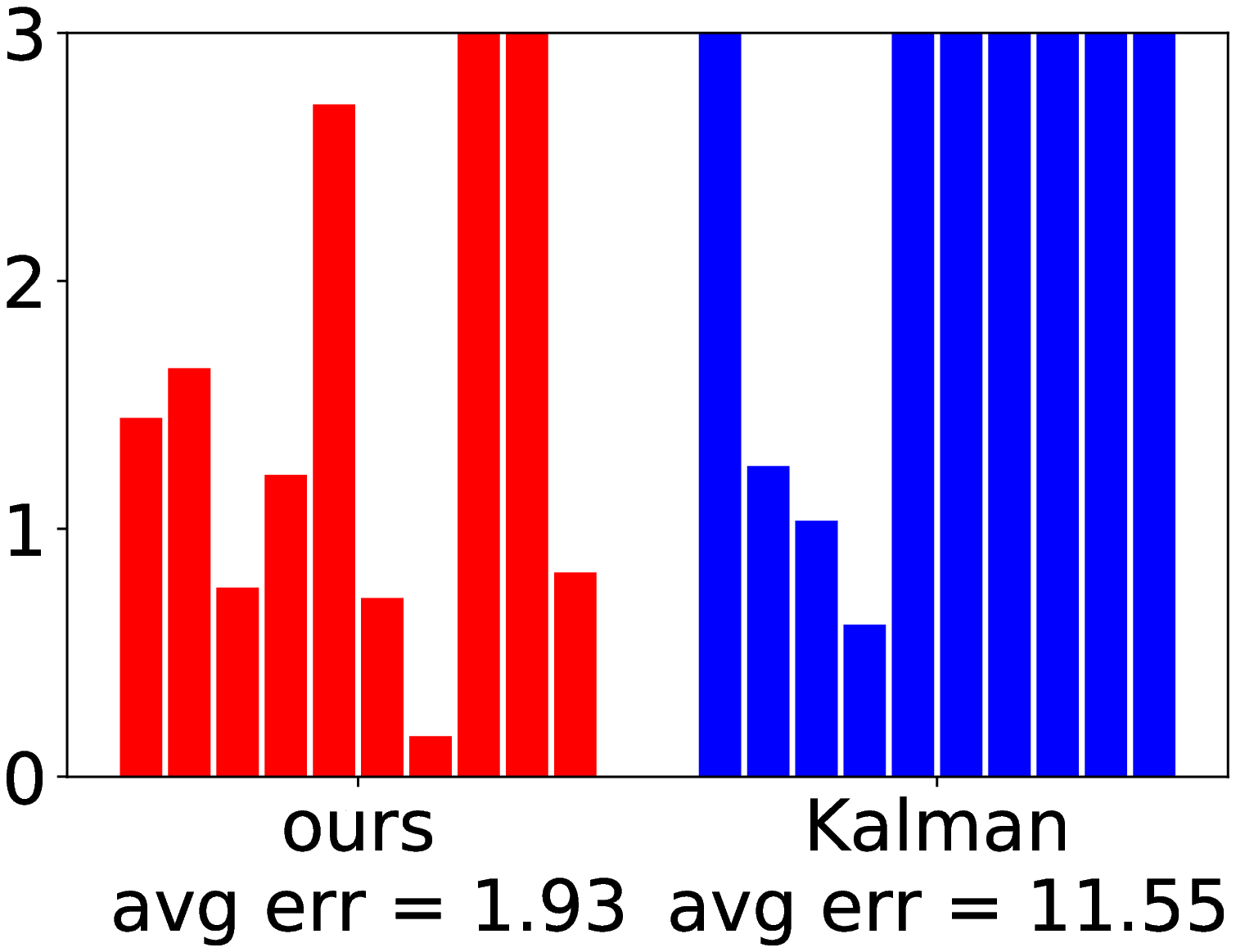}&
    \includegraphics*[width=0.23\linewidth]{kalman/predatorobjparam210000std01.eps}&
    \includegraphics*[width=0.23\linewidth]{kalman/predatorobjparam210000std15.eps}\\	 
    \rotatebox{90}{\hspace{4ex}  \small{$\theta_3$ error} } &  
	 \includegraphics*[width=0.23\linewidth]{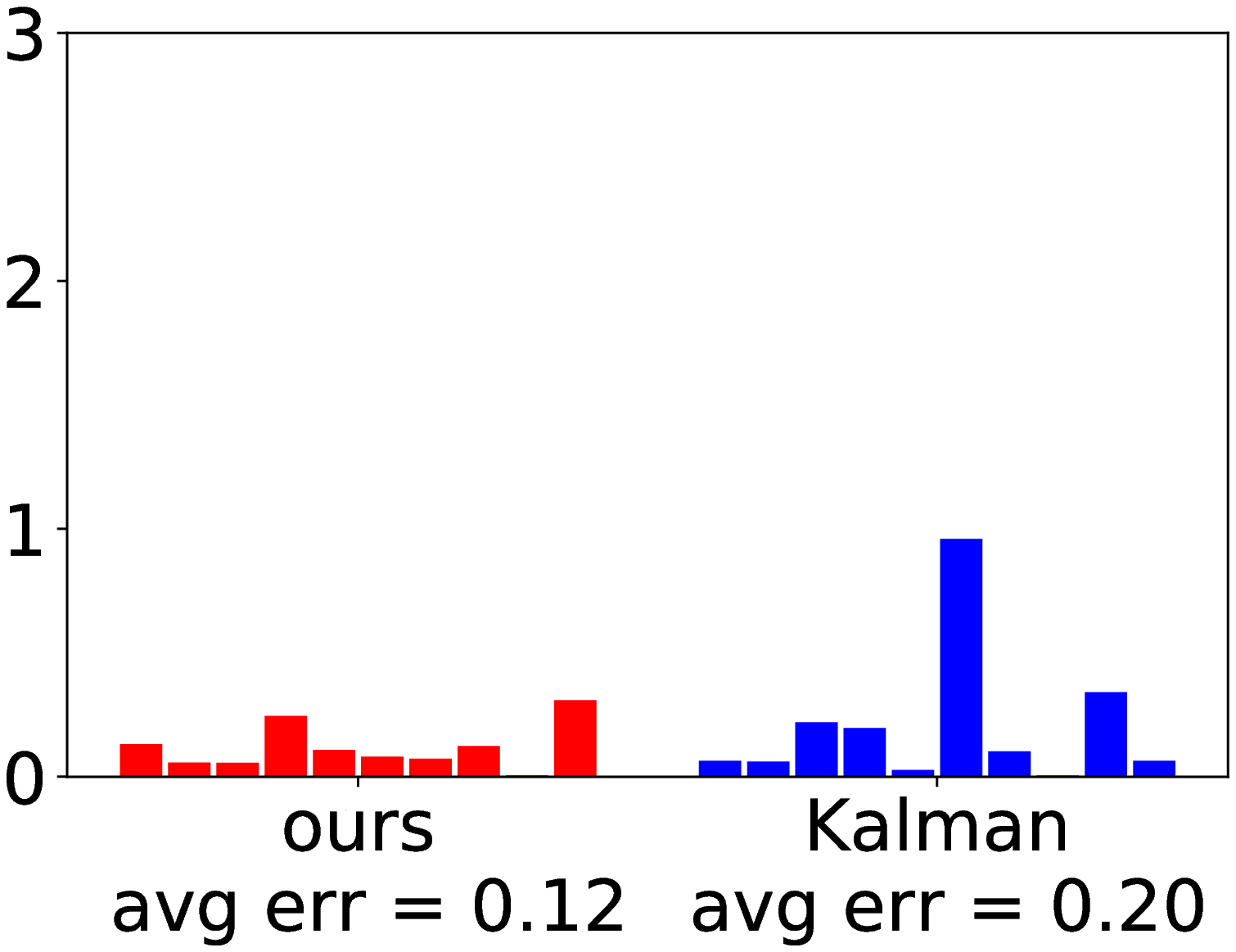}&
    \includegraphics*[width=0.23\linewidth]{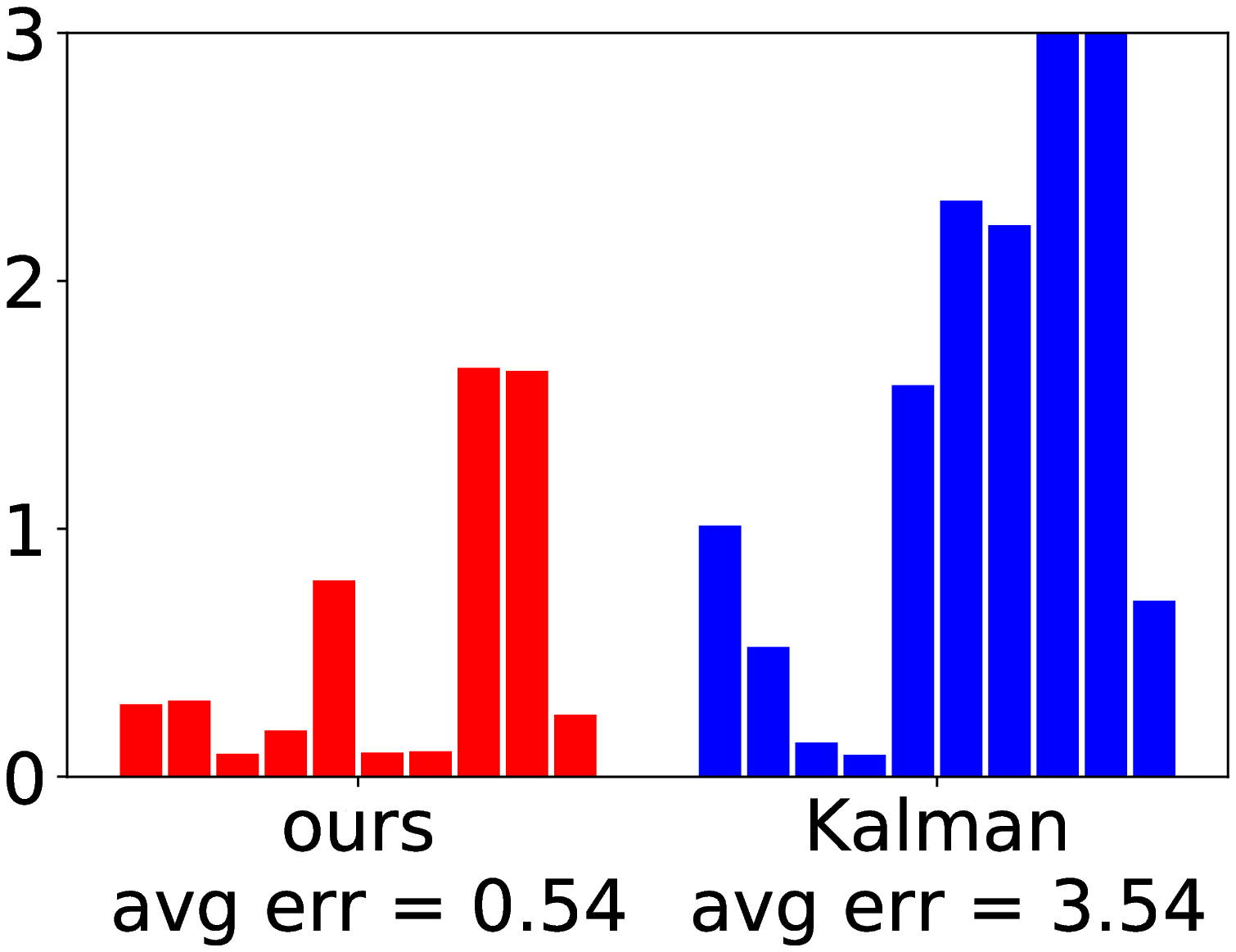}&
    \includegraphics*[width=0.23\linewidth]{kalman/predatorobjparam310000std01.eps}&
    \includegraphics*[width=0.23\linewidth]{kalman/predatorobjparam310000std15.eps}
  \end{tabular}
  \caption{Comparison with  EKF. Similar to the second and third rows of Fig.\ 5 of the paper, but includes both $T=20$ and $T=10\,000$ observations.}

  \label{f:kalman}
\end{figure*}


\clearpage

{\small
\bibliographystyle{icml2019}
\bibliography{lib1}

\begin{thebibliography}{34}
\providecommand{\natexlab}[1]{#1}
\providecommand{\url}[1]{\texttt{#1}}
\expandafter\ifx\csname urlstyle\endcsname\relax
  \providecommand{\doi}[1]{doi: #1}\else
  \providecommand{\doi}{doi: \begingroup \urlstyle{rm}\Url}\fi

\bibitem[Bard(1973)]{Bard73}
Bard, Y.
\newblock \emph{Nonlinear Parameter Estimation}.
\newblock Academic Press, 1973.

\bibitem[Benson(1979)]{Benson79}
Benson, M.
\newblock Parameter fitting in dynamic models.
\newblock \emph{Ecological Modelling}, 6\penalty0 (2):\penalty0 97--115, 1979.

\bibitem[Calderhead et~al.(2009)Calderhead, Girolami, and Lawrence]{Calderhe09}
Calderhead, B., Girolami, M., and Lawrence, N.~D.
\newblock Accelerating {Bayesian} inference over nonlinear differential
  equations with {Gaussian} processes.
\newblock In \emph{Advances in Neural Information Processing Systems}, pp.\
  217--224, 2009.

\bibitem[Cao \& Zhao(2008)Cao and Zhao]{Cao08}
Cao, J. and Zhao, H.
\newblock Estimating dynamic models for gene regulation networks.
\newblock \emph{Bioinformatics}, 24\penalty0 (14):\penalty0 1619--1624, 2008.

\bibitem[Cao et~al.(2011)Cao, Wang, and Xu]{Cao11}
Cao, J., Wang, L., and Xu, J.
\newblock Robust estimation for ordinary differential equation models.
\newblock \emph{Biometrics}, 67\penalty0 (4):\penalty0 1305--1313, 2011.

\bibitem[Chatterji \& Bartlett(2017)Chatterji and Bartlett]{ChatterjiB17}
Chatterji, N.~S. and Bartlett, P.~L.
\newblock Alternating minimization for dictionary learning with random
  initialization.
\newblock In \emph{Advances in Neural Information Processing Systems 30: Annual
  Conference on Neural Information Processing Systems 2017, 4-9 December 2017,
  Long Beach, CA, {USA}}, pp.\  1994--2003, 2017.

\bibitem[Dattner \& Klaassen(2015)Dattner and Klaassen]{Dattner15}
Dattner, I. and Klaassen, C. A.~J.
\newblock Optimal rate of direct estimators in systems of ordinary differential
  equations linear in functions of the parameters.
\newblock \emph{Electronic Journal of Statistics}, 9\penalty0 (2):\penalty0
  1939--1973, 2015.

\bibitem[Dondelinger et~al.(2013)Dondelinger, Husmeier, Rogers, and
  Filippone]{Dondelin13}
Dondelinger, F., Husmeier, D., Rogers, S., and Filippone, M.
\newblock {ODE} parameter inference using adaptive gradient matching with
  {Gaussian} processes.
\newblock In \emph{Artificial Intelligence and Statistics}, pp.\  216--228,
  2013.

\bibitem[FitzHugh(1961)]{FitzHugh61}
FitzHugh, R.
\newblock Impulses and physiological states in theoretical models of nerve
  membrane.
\newblock \emph{Biophysical journal}, 1\penalty0 (6):\penalty0 445--466, 1961.

\bibitem[Girolami(2008)]{Girolami08}
Girolami, M.
\newblock {Bayesian} inference for differential equations.
\newblock \emph{Theoretical Computer Science}, 408\penalty0 (1):\penalty0
  4--16, 2008.

\bibitem[Gorbach et~al.(2017)Gorbach, Bauer, and Buhmann]{Gorbach17}
Gorbach, N.~S., Bauer, S., and Buhmann, J.~M.
\newblock Scalable variational inference for dynamical systems.
\newblock In \emph{Advances in Neural Information Processing Systems}, pp.\
  4809--4818, 2017.

\bibitem[Gugushvili \& Klaassen(2012)Gugushvili and Klaassen]{Gugushvi12}
Gugushvili, S. and Klaassen, C.~A.
\newblock $\sqrt{n}$-consistent parameter estimation for systems of ordinary
  differential equations: bypassing numerical integration via smoothing.
\newblock \emph{Bernoulli}, 18\penalty0 (3):\penalty0 1061--1098, 2012.

\bibitem[Himmelblau et~al.(1967)Himmelblau, Jones, and Bischoff]{Himmelbl67}
Himmelblau, D.~M., Jones, C.~R., and Bischoff, K.~B.
\newblock Determination of rate constants for complex kinetics models.
\newblock \emph{Industrial {\&} Engineering Chemistry Fundamentals}, 6\penalty0
  (4):\penalty0 539--543, 1967.

\bibitem[Hosten(1979)]{Hosten79}
Hosten, L.
\newblock A comparative study of short cut procedures for parameter estimation
  in differential equations.
\newblock \emph{Computers {\&} Chemical Engineering}, 3\penalty0
  (1-4):\penalty0 117--126, 1979.

\bibitem[Iserles(2009)]{Iserles2009}
Iserles, A.
\newblock \emph{A {F}irst {C}ourse in the {N}umerical {A}nalysis of
  {D}ifferential {E}quations}.
\newblock Cambridge Texts in Applied Mathematics. Cambridge University Press,
  Cambridge, second edition, 2009.

\bibitem[Labbe(2014)]{Labbe14}
Labbe, R.
\newblock {K}alman and {B}ayesian filters in {P}ython, 2014.
\newblock URL
  \url{https://github.com/rlabbe/Kalman-and-Bayesian-Filters-in-Python}.

\bibitem[Li et~al.(2017)Li, Zhou, Liang, and Varshney]{LiICML17}
Li, Q., Zhou, Y., Liang, Y., and Varshney, P.~K.
\newblock Convergence analysis of proximal gradient with momentum for nonconvex
  optimization.
\newblock In \emph{Proceedings of the 34th International Conference on Machine
  Learning, {ICML} 2017, Sydney, NSW, Australia, 6-11 August 2017}, pp.\
  2111--2119, 2017.

\bibitem[Li et~al.(2016)Li, Liang, and Risteski]{LiLR16}
Li, Y., Liang, Y., and Risteski, A.
\newblock Recovery guarantee of weighted low-rank approximation via alternating
  minimization.
\newblock In \emph{Proceedings of the 33rd International Conference on Machine
  Learning, {ICML} 2016, New York City, NY, USA, June 19-24, 2016}, pp.\
  2358--2367, 2016.
\newblock URL \url{http://jmlr.org/proceedings/papers/v48/lii16.html}.

\bibitem[Liang \& Wu(2008)Liang and Wu]{Liang08}
Liang, H. and Wu, H.
\newblock Parameter estimation for differential equation models using a
  framework of measurement error in regression models.
\newblock \emph{Journal of the American Statistical Association}, 103\penalty0
  (484):\penalty0 1570--1583, 2008.

\bibitem[Lorenz \& Emanuel(1998)Lorenz and Emanuel]{Lorenz98}
Lorenz, E.~N. and Emanuel, K.~A.
\newblock Optimal sites for supplementary weather observations: {Simulation}
  with a small model.
\newblock \emph{Journal of the Atmospheric Sciences}, 55\penalty0 (3):\penalty0
  399--414, 1998.

\bibitem[Lotka(1932)]{Lotka32}
Lotka, A.~J.
\newblock The growth of mixed populations: two species competing for a common
  food supply.
\newblock \emph{Journal of the Washington Academy of Sciences}, 22\penalty0
  (16/17):\penalty0 461--469, 1932.

\bibitem[Nagumo et~al.(1962)Nagumo, Arimoto, and Yoshizawa]{Nagumo62}
Nagumo, J., Arimoto, S., and Yoshizawa, S.
\newblock An active pulse transmission line simulating nerve axon.
\newblock \emph{Proceedings of the IRE}, 50\penalty0 (10):\penalty0 2061--2070,
  1962.

\bibitem[Newville et~al.(2014)Newville, Stensitzki, Allen, and
  Ingargiola]{Newville14}
Newville, M., Stensitzki, T., Allen, D.~B., and Ingargiola, A.
\newblock {LMFIT}: {N}on-linear least-square minimization and curve-fitting for
  {Python}, 2014.

\bibitem[Palais \& Palais(2009)Palais and Palais]{Palais09}
Palais, R.~S. and Palais, R.~A.
\newblock \emph{Differential Equations, Mechanics, and Computation}.
\newblock Number~51 in Student {M}athematical {L}ibrary. American Math. Soc.,
  Providence, RI, 2009.

\bibitem[Parikh \& Boyd(2014)Parikh and Boyd]{ParikhBoyd2014}
Parikh, N. and Boyd, S.~P.
\newblock {P}roximal {A}lgorithms.
\newblock \emph{Foundations and Trends in Optimization}, 1\penalty0
  (3):\penalty0 127--239, 2014.

\bibitem[Poyton et~al.(2006)Poyton, Varziri, McAuley, McLellan, and
  Ramsay]{Poyton06}
Poyton, A., Varziri, M., McAuley, K., McLellan, P., and Ramsay, J.
\newblock Parameter estimation in continuous-time dynamic models using
  principal differential analysis.
\newblock \emph{Computers {\&} Chemical Engineering}, 30\penalty0 (4):\penalty0
  698--708, 2006.

\bibitem[Ramsay et~al.(2007)Ramsay, Hooker, Campbell, and Cao]{Ramsay07}
Ramsay, J.~O., Hooker, G., Campbell, D., and Cao, J.
\newblock Parameter estimation for differential equations: a generalized
  smoothing approach.
\newblock \emph{Journal of the Royal Statistical Society: Series B (Statistical
  Methodology)}, 69\penalty0 (5):\penalty0 741--796, 2007.

\bibitem[R{\"o}ssler(1976)]{Rossler76}
R{\"o}ssler, O.~E.
\newblock An equation for continuous chaos.
\newblock \emph{Physics Letters A}, 57\penalty0 (5):\penalty0 397--398, 1976.

\bibitem[Sitz et~al.(2002)Sitz, Schwarz, Kurths, and Voss]{Sitz02}
Sitz, A., Schwarz, U., Kurths, J., and Voss, H.~U.
\newblock Estimation of parameters and unobserved components for nonlinear
  systems from noisy time series.
\newblock \emph{Phys. Rev. E}, 66\penalty0 (1):\penalty0 016210, 2002.
\newblock \doi{10.1103/PhysRevE.66.016210}.

\bibitem[Sun et~al.(2015)Sun, Lu, Xu, and Bi]{SunLXB15}
Sun, J., Lu, J., Xu, T., and Bi, J.
\newblock Multi-view sparse co-clustering via proximal alternating linearized
  minimization.
\newblock In \emph{Proceedings of the 32nd International Conference on Machine
  Learning, {ICML} 2015, Lille, France, 6-11 July 2015}, pp.\  757--766, 2015.
\newblock URL \url{http://jmlr.org/proceedings/papers/v37/sunb15.html}.

\bibitem[Varah(1982)]{Varah82}
Varah, J.~M.
\newblock A spline least squares method for numerical parameter estimation in
  differential equations.
\newblock \emph{{SIAM} Journal on Scientific and Statistical Computing},
  3\penalty0 (1):\penalty0 28--46, 1982.

\bibitem[Xu \& Yin(2013)Xu and Yin]{XuYin2013}
Xu, Y. and Yin, W.
\newblock A block coordinate descent method for regularized multiconvex
  optimization with applications to nonnegative tensor factorization and
  completion.
\newblock \emph{{SIAM} J. Imaging Sciences}, 6\penalty0 (3):\penalty0
  1758--1789, 2013.
\newblock \doi{10.1137/120887795}.
\newblock URL \url{https://doi.org/10.1137/120887795}.

\bibitem[Yi et~al.(2014)Yi, Caramanis, and Sanghavi]{YiCS14}
Yi, X., Caramanis, C., and Sanghavi, S.
\newblock Alternating minimization for mixed linear regression.
\newblock In \emph{Proceedings of the 31st International Conference on Machine
  Learning, {ICML} 2014, Beijing, China, 21-26 June 2014}, pp.\  613--621,
  2014.
\newblock URL \url{http://jmlr.org/proceedings/papers/v32/yia14.html}.

\bibitem[Zhang \& Brand(2017)Zhang and Brand]{ZhangB17}
Zhang, Z. and Brand, M.
\newblock Convergent block coordinate descent for training {T}ikhonov
  regularized deep neural networks.
\newblock In \emph{Advances in Neural Information Processing Systems 30: Annual
  Conference on Neural Information Processing Systems 2017, 4-9 December 2017,
  Long Beach, CA, {USA}}, pp.\  1719--1728, 2017.

\end{thebibliography}


\begin{thebibliography}{10}
\providecommand{\natexlab}[1]{#1}
\providecommand{\url}[1]{\texttt{#1}}
\expandafter\ifx\csname urlstyle\endcsname\relax
  \providecommand{\doi}[1]{doi: #1}\else
  \providecommand{\doi}{doi: \begingroup \urlstyle{rm}\Url}\fi

\bibitem[Dondelinger et~al.(2013)Dondelinger, Husmeier, Rogers, and
  Filippone]{Dondelin13}
F.~Dondelinger, D.~Husmeier, S.~Rogers, and M.~Filippone.
\newblock {ODE} parameter inference using adaptive gradient matching with
  {Gaussian} processes.
\newblock In \emph{Artificial Intelligence and Statistics}, pages 216--228,
  2013.

\bibitem[FitzHugh(1961)]{FitzHugh61}
R.~FitzHugh.
\newblock Impulses and physiological states in theoretical models of nerve
  membrane.
\newblock \emph{Biophysical journal}, 1\penalty0 (6):\penalty0 445--466, 1961.

\bibitem[Gorbach et~al.(2017)Gorbach, Bauer, and Buhmann]{Gorbach17}
N.~S. Gorbach, S.~Bauer, and J.~M. Buhmann.
\newblock Scalable variational inference for dynamical systems.
\newblock In \emph{Advances in Neural Information Processing Systems}, pages
  4809--4818, 2017.

\bibitem[Labbe(2014)]{Labbe14}
R.~Labbe.
\newblock {K}alman and {B}ayesian filters in {P}ython, 2014.
\newblock URL
  \url{https://github.com/rlabbe/Kalman-and-Bayesian-Filters-in-Python}.

\bibitem[Lorenz and Emanuel(1998)]{Lorenz98}
E.~N. Lorenz and K.~A. Emanuel.
\newblock Optimal sites for supplementary weather observations: {Simulation}
  with a small model.
\newblock \emph{Journal of the Atmospheric Sciences}, 55\penalty0 (3):\penalty0
  399--414, 1998.

\bibitem[Lotka(1932)]{Lotka32}
A.~J. Lotka.
\newblock The growth of mixed populations: two species competing for a common
  food supply.
\newblock \emph{Journal of the Washington Academy of Sciences}, 22\penalty0
  (16/17):\penalty0 461--469, 1932.

\bibitem[Nagumo et~al.(1962)Nagumo, Arimoto, and Yoshizawa]{Nagumo62}
J.~Nagumo, S.~Arimoto, and S.~Yoshizawa.
\newblock An active pulse transmission line simulating nerve axon.
\newblock \emph{Proceedings of the IRE}, 50\penalty0 (10):\penalty0 2061--2070,
  1962.

\bibitem[Ramsay et~al.(2007)Ramsay, Hooker, Campbell, and Cao]{Ramsay07}
J.~O. Ramsay, G.~Hooker, D.~Campbell, and J.~Cao.
\newblock Parameter estimation for differential equations: a generalized
  smoothing approach.
\newblock \emph{Journal of the Royal Statistical Society: Series B (Statistical
  Methodology)}, 69\penalty0 (5):\penalty0 741--796, 2007.

\bibitem[R{\"o}ssler(1976)]{Rossler76}
O.~E. R{\"o}ssler.
\newblock An equation for continuous chaos.
\newblock \emph{Physics Letters A}, 57\penalty0 (5):\penalty0 397--398, 1976.

\bibitem[Sitz et~al.(2002)Sitz, Schwarz, Kurths, and Voss]{Sitz02}
A.~Sitz, U.~Schwarz, J.~Kurths, and H.~U. Voss.
\newblock Estimation of parameters and unobserved components for nonlinear
  systems from noisy time series.
\newblock \emph{Phys. Rev. E}, 66\penalty0 (1):\penalty0 016210, 2002.
\newblock \doi{10.1103/PhysRevE.66.016210}.

\end{thebibliography}
}

\end{document}